\renewcommand{\sfdefault}{lmss}
\newtheorem{lemma}{Lemma}
\newtheorem{theorem}{Theorem}
\newtheorem{example}{Example}
\newtheorem{proposition}{Proposition}
\newtheorem{remark}{Remark}
\newtheorem{assumption}{Assumption}
\newcommand{\myfnsymbol}[1]{%
  \expandafter\@myfnsymbol\csname c@#1\endcsname
}
\newcommand{\@myfnsymbol}[1]{%
  \ifcase #1
    % 0
  \or 1% 1
  \or 2% 2
  \or 3% 3
  \or 4% 4
  \or \TextOrMath{\textasteriskcentered}{*}% 5
  \or \TextOrMath{\textdagger}{\dagger}% 6
  \fi
}
\newcommand{\affiliationA}{\@myfnsymbol{1}}
\newcommand{\affiliationB}{\@myfnsymbol{2}}
\newcommand{\affiliationC}{\@myfnsymbol{3}}
\newcommand{\affiliationD}{\@myfnsymbol{4}}
\newcommand{\correspondingA}{\@myfnsymbol{6}}
\def\1{\bm{1}}
\def\eps{{\varepsilon}}
\def\rd{{\textnormal{d}}}
\DeclareMathAlphabet{\mathsfit}{\encodingdefault}{\sfdefault}{m}{sl}
\SetMathAlphabet{\mathsfit}{bold}{\encodingdefault}{\sfdefault}{bx}{n}
\newcommand{\bmx}{\bm{x}}
\newcommand{\bmz}{\bm{z}}
\newcommand{\bR}{\mathbb{R}}
\newcommand{\bmB}{\bm{B}}
\newcommand{\bmf}{\bm{f}}
\newcommand{\bms}{\bm{s}}
\newcommand{\bma}{\bm{a}}
\newcommand{\bE}{\mathbb{E}}
\newcommand{\cN}{\mathcal{N}}
\newcommand{\rmd}{\mathrm{~d}}
\newcommand{\nx}{\nabla_{\bm{x}}}
\newcommand{\pt}{\partial_t}
\newcommand{\Lap}{\Delta}
\newcommand{\diver}{\nabla\cdot}
\newcommand{\cC}{\mathcal{C}}
\newcommand{\cL}{\mathcal{L}}
\newcommand{\hB}{\widehat{B}}
\newcommand{\Bv}{B^{\nu}}
\newcommand{\Bu}{B^{*}}
\newcommand{\vareps}{\varepsilon}
\newcommand{\bbmone}{\mathbbm{1}}
\newcommand{\les}{\leqslant}
\newcommand{\ges}{\geqslant}
\begin{document}

\title{Diffusion-PINN Sampler}

\author{
Zhekun Shi\textsuperscript{\affiliationA},
Longlin Yu\textsuperscript{\affiliationB},
Tianyu Xie\textsuperscript{\affiliationC},
Cheng Zhang\textsuperscript{\affiliationD,\correspondingA}
}

\date{
}

% Thanks notes for title uses \myfnsymbol
\renewcommand{\thefootnote}{\myfnsymbol{footnote}}
\maketitle
% Layout the \thanks notes in the order you want
\footnotetext[1]{School of Mathematical Sciences, Peking University, Beijing, 100871, China. Email: zhekunshi@stu.pku.edu.cn}%
\footnotetext[2]{School of Mathematical Sciences, Peking University, Beijing, 100871, China. Email: llyu@pku.edu.cn}%
\footnotetext[3]{School of Mathematical Sciences, Peking University, Beijing, 100871, China. Email: tianyuxie@pku.edu.cn}%
\footnotetext[4]{School of Mathematical Sciences and Center for Statistical Science, Peking University, Beijing, 100871, China. Email: chengzhang@math.pku.edu.cn}
\footnotetext[6]{Corresponding author}%

\setcounter{footnote}{0}% Restart footnote counter
% Footnotes for rest of document uses \fnsymbol (or whatever you choose)
\renewcommand{\thefootnote}{\fnsymbol{footnote}}

\begin{abstract}
Recent success of diffusion models has inspired a surge of interest in developing sampling techniques using reverse diffusion processes.
However, accurately estimating the drift term in the reverse stochastic differential equation (SDE) solely from the unnormalized target density poses significant challenges, hindering existing methods from achieving state-of-the-art performance.
In this paper, we introduce the \emph{Diffusion-PINN Sampler} (DPS), a novel diffusion-based sampling algorithm that estimates the drift term by solving the governing partial differential equation of the log-density of the underlying SDE marginals via physics-informed neural networks (PINN).
We prove that the error of log-density approximation can be controlled by the PINN residual loss, enabling us to establish convergence guarantees of DPS.
Experiments on a variety of sampling tasks demonstrate the effectiveness of our approach, particularly in accurately identifying mixing proportions when the target contains isolated components.
  
\end{abstract}

\begin{keywords}
    posterior sampling, multi-modal sampling, mixing proportion identification, diffusion model, physics-informed neural network
\end{keywords}

\maketitle
\begin{spacing}{1}
    \tableofcontents
\end{spacing}

\section{Introduction}

Sampling from unnormalized distributions is a fundamental yet challenging task encountered across various scientific disciplines such as Bayesian statistics, computational physics, chemistry, and biology \citep{liu2001,stoltz2010}.
Markov chain Monte Carlo (MCMC) and variational inference (VI) have historically been the go-to methods for this problem. However, these approaches exhibit limitations when dealing with complex target distributions (e.g., distributions with multimodality or heavy tails).
Recently, the success of diffusion models for generative modeling \citep{song2020score, ho2020denoising, pmlr-v139-nichol21a, VDM} have sparked considerable interest in tackling the sampling problem using the reverse diffusion processes that transport a given prior density to the target, governed by stochastic differential equations (SDE).

In diffusion-based generative models, the score function in the drift term of the reverse SDE is learned based on score matching techniques \citep{scorematching, vincent2011connection} that require samples from the target data distribution.
However, for sampling tasks, we only have access to an unnormalized density function $\pi$, making it challenging to estimate the score function for the reverse SDE.
From a stochastic optimal control perspective \citep{pmlr-v99-tzen19a,NEURIPS2021_940392f5}, several VI methods that parameterize the drift term with neural network approximation have been proposed \citep{zhang2021path, berner2022optimal, vargas2023,vargas2023denoising}.
Nevertheless, these approaches face challenges such as instability during training, the computational complexity associated with differentiating through SDE solvers, and mode collapse issues arising from training objectives based on reverse Kullback-Leibler (KL) divergences \citep{zhang2021path, vargas2023denoising}.
On the other hand, \citet{huang2023reverse} proposed a scheme based on the connection between score matching and non-parametric posterior mean estimation.
More specifically, they use MCMC estimation of the scores to potentially alleviate the numerical bias intrinsic in parametric estimation methods such as neural networks.
However, this method also introduces noise in the estimates and requires repetitive posterior sampling in each time step of the reverse SDE.
Overall, despite their potential, diffusion-based sampling methods have not yet achieved state-of-the-art performance.

In addition to its connection with posterior mean estimation, the score function has also been shown to evolve according to a partial differential equation known as the \emph{score Fokker Planck equation} (score FPE) \citep{lai2023fp}.
This discovery has led to a novel regularization technique for enhancing score function estimation in diffusion models \citep{lai2023fp,deveney2023closing}.
In this paper, we adopt this strategy for diffusion-based sampling methods.
While the score function can be recovered by solving the score FPE using the score of target distribution $\pi$ as the initial condition, we demonstrate that it may fail to identify correct mixing proportions when $\pi$ has isolated components, a common limitation of score-based methods \citep{wenliang2020, zhang2022towards}.
To remedy this issue, we propose to solve the log-density FPE, a similar partial differential equation for the log-density function, using physics-informed neural networks (PINN) \citep{raissi2019physics, wang20222}.
The estimated log-density function is then integrated into the reverse SDE, leading to a novel sampling algorithm termed \emph{Diffusion-PINN Sampler} (DPS).
We prove that the error of log-density estimation can be controlled by the PINN residual loss, which allows us to ensure convergence guarantee of DPS based on established results for score-based generative models \citep{chen2023sampling, chen2023improved,benton2023linear}.
Experiments on a variety of sampling tasks provide compelling numerical evidence for the superiority of our method compared to other baseline methods.

\section{Related Works}
Recently, several works have explored the combination of Physics-Informed Neural Networks (PINN) and sampling techniques. 
For instance, \citet{mate2023learning, fanpath, tian2024liouville} address the continuity equation using PINN based on ODEs and achieve flow-based sampling through a linear interpolation (i.e., annealing) path between the target distribution and a simple prior, such as a Gaussian distribution.
Besides, \citet{berner2022optimal} (in the appendix of their paper) and \citet{sun2024physics} propose solving the log-density Hamilton–Jacobi–Bellman (HJB) equation via PINN to develop a SDE-based sampling algorithm. 
However, both approaches lack comprehensive numerical investigation and thorough theoretical analysis.
In contrast, our work investigates a limitation of score-based Fokker-Planck equations (FPE) in identifying the mixing proportions of multi-modal distributions, introduces novel computational techniques for solving PDEs via PINN in the context of diffusion-based sampling, and provides the first complete theoretical analysis of the algorithm.

\section{Background}
\paragraph{Notations.}
Throughout the paper, $\Omega\subset\bR^d$ denotes a bounded and closed domain. 
For simplicity, we do not distinguish a probabilistic measure from its density function.
We use $\bmx=(x_1,\cdots, x_d)^{\prime}$ to denote a vector in $\bR^d$ and $\|\bmx\|=\sqrt{x_1^2+\cdots x_d^2}$ stands for the $L^2$-norm.
Let $\nu$ denote a probability measure on $\bR^d$, for any $\bmf:\bR^d\to\bR^m$, we denote $\|\bmf(\cdot)\|^2_{L^2(\Omega;\nu)}:=\int_{\Omega}\|\bmf(\bmx)\|^2\rmd\nu(\bmx)$.
For any $\bmf:\bR^d\times [0, T]\to\bR^m$ , we define $\|\bmf_t(\cdot)\|_{L^2(\Omega;\nu)}^2:=\int_{\Omega}\|\bmf_t(\bmx)\|^2\rmd\nu(\bmx)$ as a function of $t\in [0, T]$.
For any $\bm{F}=(F_1,\cdots, F_d)^{\prime}:\bR^d\to\bR^d$, we denote the divergence of $\bm{F}$ by $\diver\bm{F}:=\sum_{i=1}^d\partial_{x_i}F_i$.
For any $F:\bR^d\to\bR$, we denote the Laplacian of $F$ by $\Lap F:=\sum_{i=1}^d\partial^2_{x_i}F$.

\paragraph{Diffusion models.}
In diffusion models, noise is progressively added to the training samples via a forward stochastic process described by the following stochastic differential equation (SDE)
\begin{equation}\label{eq: diffusion forward}
    \rmd\bmx_t = \bmf(\bmx_t,t)\rmd t + g(t)\rmd\bmB_t,\quad \bmx_0\sim p_0(\cdot),\quad 0\les t\les T,
\end{equation}
where $p_0(\cdot)$ is the data distribution, $\bmB_t$ is a standard Brownian motion, and $\bmf(\bmx_t,t)$ and $g(t)$ are the drift and diffusion coefficients respectively. 
The derivatives of the log-density of the forward marginals, i.e., \emph{scores}, are learned via score matching techniques \citep{vincent2011connection, song2020score} and new samples from the data distribution can be obtained by simulating the following reverse process
\begin{equation}\label{eq: diffusion backward}
\rmd\bmx_t = \left[\bmf(\bmx_t,t) - g^2(t)\nabla_{\bmx_t}\log p_t(\bmx_t)\right]\rmd t + g(t)\rmd\Bar{\bmB}_t,\quad \bmx_T\sim p_T(\cdot),
\end{equation}
where $p_t(\cdot)$ is the probability density of $\bmx_t$ and $\Bar{\bmB}_t$ is a standard Brownian motion from $T$ to $0$.

\paragraph{Physics-informed neural networks (PINN).}
PINN is a deep learning method for solving partial differential equations (PDEs) \citep{raissi2019physics}.
Consider the following general form of PDE
\begin{subequations}
\arraycolsep=1.8pt
\def\arraystretch{1.5}
    \label{eq: pinn pdes}
    \begin{align}
        \mathcal{L}u(\bmx) = \varphi(\bmx), \quad & \bmx\in\Omega\subseteq\mathbb{R}^d,\\
        \mathcal{B}u(\bmx) = \psi(\bmx),\quad & \bmx\in\partial\Omega,
    \end{align}
\end{subequations}
where $\mathcal{L}$ and $\mathcal{B}$ are the differential operators on domain $\Omega$ and boundary $\partial\Omega$, respectively.
PINN seeks an approximate solution using deep model $u_\theta(\bmx)$ by minimizing the $L^2$ PINN residual losses
\begin{subequations}
\arraycolsep=1.8pt
\def\arraystretch{1.5}
\label{eq: pinn loss}
\begin{align}
    \ell_{\Omega}(u_\theta)&:=\left\|\mathcal{L}u_\theta(\bmx)-\varphi(\bmx)\right\|^2_{L^2(\Omega;\nu)},\label{eq: pinn loss a}\\
    \ell_{\partial\Omega}(u_\theta)&:=\left\|\mathcal{B}u_\theta(\bmx) - \psi(\bmx)\right\|^2_{L^2(\Omega;\nu)}, \label{eq: pinn loss b}
\end{align}
\end{subequations}
where $\nu$ is a probability measure for collocation points generation, often taken to be the uniform distribution on $\Omega$.
The two terms $\ell_{\Omega}(u)$ and $\ell_{\partial\Omega}(u)$ in Eq. \eqref{eq: pinn loss} reflect the approximation error on $\Omega$ and $\partial\Omega$ respectively.
In practice, the losses in Eq. \eqref{eq: pinn loss} can be optimized by gradient-based methods with Monte Carlo gradient estimation.

\paragraph{Fokker Planck equation.}
The evolution of the density $p_t(\bmx)$ associated with the forward SDE \eqref{eq: diffusion forward} is governed by the Fokker Planck equation (FPE) \citep{oksendal2003stochastic}
\begin{equation}\label{eq: fokker planck equaiton}
\pt p_t(\bmx)= \underbrace{\frac{1}{2}g^2(t)\Lap p_t(\bmx) - \diver\left[\bmf(\bmx,t)p_t(\bmx)\right]}_{:= \mathcal{L}_{\textrm{FPE}}p_t(\bmx)}.
\end{equation}
Recently, \citet{lai2023fp} derive an equivalent system of PDEs for the log density $\log p_t(\bmx)$ and score $\nx\log p_t(\bmx)$, termed as the log-density Fokker Planck equation (log-density FPE) and the score Fokker Planck equation (score FPE) respectively, as summarized in Theorem \ref{thm: fp} (the proof can be found in Appendix \ref{subsection: proof of log-density/score fpe}).
\begin{theorem}[Log-density FPE and score FPE; Proposition 3.1 in \citet{lai2023fp}]\label{thm: fp}
 Assume the density $p_t(\bmx)$ is sufficiently smooth on $\mathbb{R}^d\times[0,T]$. Then for all $(\bmx, t) \in \mathbb{R}^d\times[0,T]$, the log-density $u_t(\bmx):=\log p_t(\bmx)$ satisfies the PDE
\begin{equation}\label{eq:log-fp}
\partial_t u_t(\bmx) = \underbrace{
     \frac{1}{2}g^2(t) \Lap u_t(\bmx) 
    +\frac{1}{2}g^2(t)\left\|\nx u_t(\bmx)\right\|^2
    -\bmf(\bmx, t)\cdot \nx u_t(\bmx) 
    - \diver\bmf(\bmx, t)}_{:=\mathcal{L}_{\emph{\textrm{L-FPE}}}u_t(\bmx)},
\end{equation}
and the score $\bms_t(\bmx):=\nabla_{\bmx}\log p_t(\bmx)$ satisfies the PDE
\begin{equation}\label{eq:score-fp}
\partial_t \bms_t(\bmx) = \underbrace{\nabla_{\bmx}\left[
\frac{1}{2}g^2(t) \diver\bms_t(\bmx) 
+\frac{1}{2}g^2(t)\left\|\bms_t(\bmx)\right\|^2
-\bmf(\bmx, t)\cdot \bms_t(\bmx)
- \nabla\cdot \bmf(\bmx, t) \right]}_{:=\mathcal{L}_{\emph{\textrm{S-FPE}}}\bms_t(\bmx)}.
\end{equation}
\end{theorem}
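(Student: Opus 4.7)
The plan is to derive \eqref{eq:log-fp} directly from the Fokker Planck equation \eqref{eq: fokker planck equaiton} using the chain rule, and then obtain \eqref{eq:score-fp} by taking $\nabla_{\bmx}$ of both sides of \eqref{eq:log-fp}. The hypothesis that $p_t$ is sufficiently smooth (and, implicitly, strictly positive so that $\log p_t$ is well-defined) justifies all the differentiations and divisions we need.

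First I would start from $\partial_t u_t = \partial_t p_t / p_t$ and substitute the FPE to obtain
\begin{equation*}
\partial_t u_t(\bmx) \;=\; \frac{1}{p_t(\bmx)}\Bigl[\tfrac{1}{2}g^2(t)\Lap p_t(\bmx) - \diver\bigl(\bmf(\bmx,t)p_t(\bmx)\bigr)\Bigr].
\end{equation*}
The key is then to rewrite the two spatial terms in terms of $u_t$ using the identities
\begin{equation*}
\nx p_t = p_t\,\nx u_t, \qquad \Lap p_t = p_t\,\Lap u_t + p_t\,\|\nx u_t\|^2, \qquad \diver(\bmf\,p_t) = p_t\,\diver\bmf + p_t\,\bmf\cdot\nx u_t.
\end{equation*}
The first identity is the chain rule for $\log$; the second follows by applying $\diver$ to $p_t\nx u_t$; the third is the product rule for divergence combined with the first identity. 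Substituting these into the display above and cancelling the common factor $p_t$ yields exactly $\mathcal{L}_{\textrm{L-FPE}}u_t(\bmx)$ as stated in \eqref{eq:log-fp}.

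For \eqref{eq:score-fp}, I would simply apply $\nx$ to both sides of \eqref{eq:log-fp}. Since $\bms_t = \nx u_t$, the left-hand side becomes $\partial_t \bms_t$ (by Clairaut's theorem, which is justified by the smoothness assumption on $p_t$), while on the right-hand side $\nx$ pulls past $\Lap$, $\|\cdot\|^2$, and dot-products in the obvious way to produce $\mathcal{L}_{\textrm{S-FPE}}\bms_t(\bmx)$ exactly as written. One only needs to observe that $\nx\Lap u_t = \nx\diver\bms_t$ (by swapping derivatives) and $\nx\|\nx u_t\|^2 = \nx\|\bms_t\|^2$.

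The argument is essentially bookkeeping, so I do not expect any genuine obstacle; the only point that deserves care is verifying that the smoothness/positivity of $p_t$ suffices both for the chain-rule manipulations (so that $\log p_t$ and its derivatives exist classically) and for the interchange of $\partial_t$ and $\nx$ when passing from \eqref{eq:log-fp} to \eqref{eq:score-fp}. Both are immediate from the standing assumption that $p_t$ is sufficiently smooth on $\mathbb{R}^d\times[0,T]$.
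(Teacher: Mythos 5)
Your proposal is correct and follows essentially the same route as the paper's proof: divide the Fokker--Planck equation by $p_t$, rewrite $\Lap p_t$ and $\diver(\bmf p_t)$ via the identities $\nx p_t = p_t\nx u_t$, $\Lap p_t = p_t(\Lap u_t + \|\nx u_t\|^2)$, and $\diver(\bmf p_t) = p_t(\diver\bmf + \bmf\cdot\nx u_t)$, then take $\nx$ of both sides and interchange $\partial_t$ with $\nx$ to get the score FPE. The only cosmetic difference is that you explicitly substitute $\nx p_t = p_t\nx u_t$ before dividing, while the paper keeps $\nx p_t$ in the intermediate identities and cancels afterward; the content is identical.
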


\section{Diffusion-PINN Sampler}
\label{section: diffusion pinn sampler}
We consider sampling from a probability density $\pi(\bmx)=\mu(\bmx)/Z$ with $\bmx\in\bR^d$, where $\mu(\bmx)$ has an analytical form and $Z=\int_{\mathbb{R}^d}\mu(\bmx)\mathrm{d}\bmx$ is the intractable normalizing constant. 
Throughout, we only consider the forward process \eqref{eq: diffusion forward} with an explicit conditional density of $\bmx_t|\bmx_0\sim \pi_{t| 0}(\cdot| \bmx_0)$. 
We denote by $\pi_t$ the marginal density of $\bmx_t$ associated with \eqref{eq: diffusion forward} from $\bmx_0\sim\pi_0=\pi$. 

Inspired by diffusion models, sampling can be performed by simulating a reverse process \eqref{eq: diffusion sampling} targeting at $\pi(\bmx)$, given an accurate estimate of the perturbed scores $\bms_{\theta}(\bmx, t)\approx\nx\log\pi_t(\bmx)$,
\begin{equation}\label{eq: diffusion sampling}
\rmd\bmx_t = \left[\bmf(\bmx_t,t) - g^2(t)\bms_{\theta}(\bmx_t, t)\right]\rmd t + g(t)\rmd\Bar{\bmB}_t,\quad \bmx_T\sim \pi_{\textrm{prior}},
\end{equation}
where $\pi_{\textrm{prior}}$ denotes the stationary distribution of the forward process \eqref{eq: diffusion forward} and $T$ is large enough such that $\pi_T\approx \pi_{\textrm{prior}}$.
However, unlike generative models, sampling tasks lack training data from $\pi$, which hinders the application of denoising score matching for perturbed score estimation.
In this section, we propose to solve the log-density FPE \eqref{eq:log-fp} with PINN to estimate the perturbed scores.
While the score FPE can also be used for this purpose, we find that it may fail to learn the mixing proportions properly when the target contains isolated modes.

\subsection{Failure of Score FPE}
\label{subsection: failure of score fp}

Consider the case where the target is a mixture of Gaussians (MoG) with two distant modes.
The following example shows that, for two MoGs with the same modes but different weights, the Fisher divergence\footnote{The Fisher divergence is defined as $F(\nu_1,\nu_2):=\bE_{\bmx\sim\nu_1}\left[\left\|\nx\log\nu_1(\bmx)-\nx\log\nu_2(\bmx)\right\|^2\right]$ for two probability measures $\nu_1$ and $\nu_2$.} between them can be arbitrarily small but the KL divergence between them remains large when the two modes are sufficiently separated.
See Figure \ref{fig:KL_density_score} (left) for an illustration of this phenomenon.
More general theoretical results can be found in Appendix \ref{subsection: appendix mog failure}.
\begin{example}\label{example: mog failure}
For any $\tau>0$, there exists $M_{\tau}(d)>0$ such that the following holds. For every $\bma\in\bR^d$ satisfied $\|\bma\|\ges M_{\tau}(d)$, $w_1, w_2, \tilde{w}_1, \tilde{w}_2\ges 0.1$, $w_1+w_2=1$, and $\tilde{w}_1+\tilde{w}_2=1$, MoG $\pi^M=w_1\cN(\bma,I_d)+w_2\cN(-\bma,I_d)$ and $\tilde{\pi}^M=\tilde{w}_1\cN(\bma,I_d)+\tilde{w}_2\cN(-\bma,I_d)$ satisfy
\begin{equation}\label{eq: mog failure ineq}
\operatorname{KL}(\pi^M\|\tilde{\pi}^M)\ges w_1\log\frac{w_1}{\tilde{w}_1} + w_2\log\frac{w_2}{
\tilde{w}_2} - \tau, \text{ but } F(\pi^M,\tilde{\pi}^M) < \tau,
\end{equation}
where $F(\pi^M,\tilde{\pi}^M)$ denotes the Fisher divergence between $\pi^M$ and $\tilde{\pi}^M$.
\end{example}

\subsubsection{Solving Score FPE Struggles to Learn the Weights}
Let $\pi^M,\tilde{\pi}^M$ be the MoGs in Example \ref{example: mog failure}. For any $t\in [0, T]$, $\pi^M_t$ denotes the marginal distribution of $\bmx_t$ associated with the forward process \eqref{eq: diffusion forward} from $\bmx_0\sim\pi^M$. We denote $\bms_t^M(\bmx):=\nx\log\pi^M_t(\bmx)$ which is the solution to \eqref{eq:score-fp} with $\bms_0^M(\bmx)=\nx\log\pi^M(\bmx)$. 
Similarly, we define $\tilde{\pi}^M_t$ and $\tilde{\bms}_t^M(\bmx)$.

\begin{figure}[t]
    \centering
    \includegraphics[width=\linewidth]{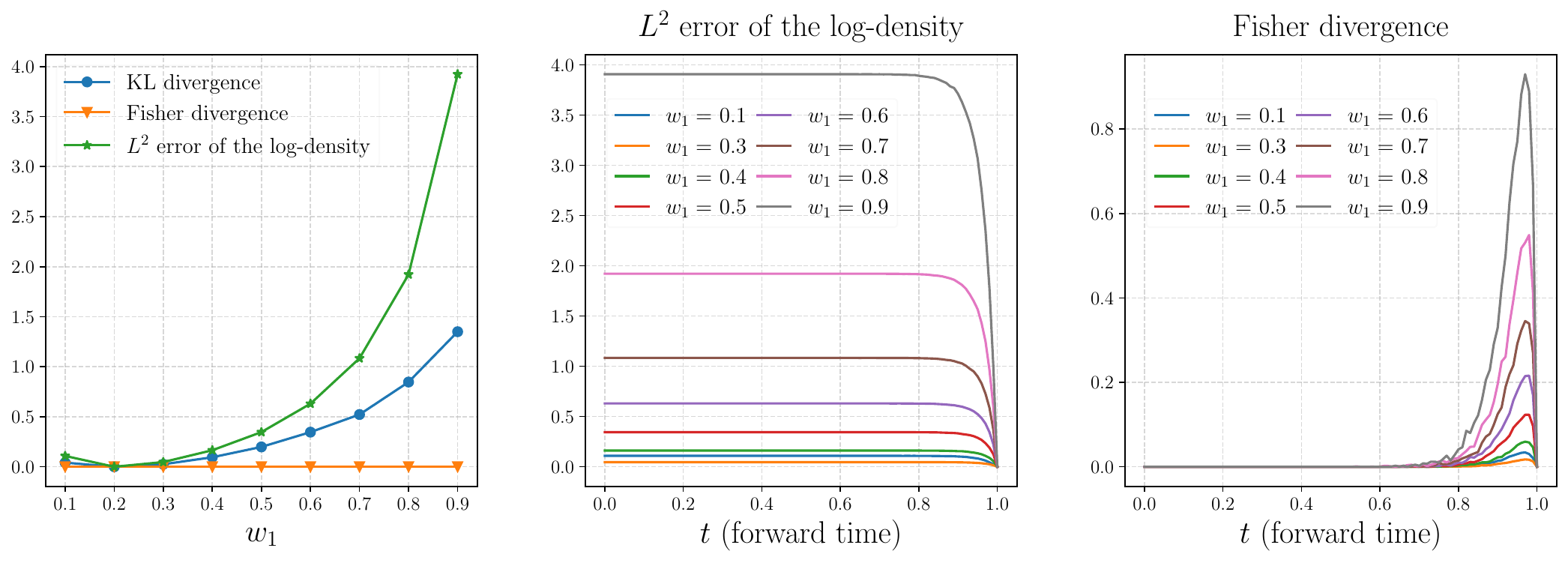}
    \vspace{-0.5cm}
    \caption{\textbf{Left}: KL divergence, Fisher divergence, and log-density error between $\pi^M$ and $\tilde{\pi}^M$ as functions of $w_1$, where $\tilde{w}_1=0.2$ and $\bm{a}=(-5,-5)'$. 
    \textbf{Middle/Right}: The evolution of log-density error/Fisher divergence along the forward process respectively. The forward process achieves standard Gaussian at $t=1$.}
    \label{fig:KL_density_score}

\end{figure}

Consider solving score-FPE \eqref{eq:score-fp} using the following PINN residual loss
\begin{equation}\label{eq: score pinn residual}
\ell_{\textrm{S-res}}\left(\bms;\bmx, t\right):= 
\left\|
\pt\bms_t(\bmx) -
\mathcal{L}_{\textrm{S-FPE}}\bms_t(\bmx) 
\right\|^2,
\end{equation}
Though $\pi^M$ and $\tilde{\pi}^M$ are equipped with different weights, their scores both satisfy the PDE \eqref{eq:score-fp} such that $\ell_{\textrm{S-res}}(\bms^M;\bmx, t)=\ell_{\textrm{S-res}}(\tilde{\bms}^M;\bmx, t)=0$ for any $(\bmx, t)\in\bR^d\times [0, T]$. 
The PINN approach, therefore, can only distinguish $\pi^M$ and $\tilde{\pi}^M$ through the initial condition. 
However, Example \ref{example: mog failure} shows that the difference between $\bms_0^M(\bmx)$ and $\tilde{\bms}_0^M(\bmx)$ can be arbitrarily small, indicating the difficulty of correctly identifying the weights by solving the score FPE.
Figure \ref{fig:KL_density_score} (right) shows that the perturbed score can not tell the difference of weights until the every end of the forward process.
On the other hand, it is noticeable that the perturbed log-density distinguishes the weights well throughout the forward process (Figure \ref{fig:KL_density_score}, middle).
This suggests us to solve log-density FPE and compute the scores by taking the gradient of the approximated log-density.

\subsection{Solving Log-density FPE}
\label{subsection: solving log-density FPE}
To estimate the perturbed scores, we consider solving log-density FPE with initial condition:
\begin{subequations}
\arraycolsep=1.8pt
\def\arraystretch{1.5}
\label{eq: cauchy problem}
\begin{align}
    \pt u_t(\bmx) &= 
\mathcal{L}_{\textrm{L-FPE}}u_t(\bmx), \label{eq: solving cauchy problem 1}\\
    u_0(\bmx)&=\log\mu(\bmx), \label{eq: solving cauchy problem 2}
\end{align}
\end{subequations}
where the exact solution is $u^{*}_t(\bmx)=\log\mu_t(\bmx):=\log\pi_t(\bmx)+\log Z$ (which induces the same score as $\nx u^{*}_t(\bmx)=\nx\log\pi_t(\bmx)$).
In what follows, we describe how to find an approximation $ u_\theta(\bmx, t)$ to $u^*_t(\bmx)$ within the PINN framework.

\paragraph{Target-informed parameterization.}
To incorporate the initial condition \eqref{eq: solving cauchy problem 2}, we use the following parameterization for the log-density function
\begin{equation}\label{eq: parameterized solution}
     u_\theta(\bmx, t)=\frac{T-t}{T}\log \mu(\bmx) + \frac{t}{T}\times \mathrm{NN}_{\theta}(\bmx, t),\quad \forall (\bmx, t)\in \mathbb{R}^d\times [0, T],
\end{equation}
where $\mathrm{NN}_{\theta}(\bmx, t): \mathbb{R}^d\times [0, T]\to \mathbb{R}$ is a deep neural network. 
This parameterization satisfies the initial condition \eqref{eq: solving cauchy problem 2}, thus we only need to consider the PINN residual loss induced by \eqref{eq: solving cauchy problem 1}.
Similar strategy is also used in consistency models \citep{pmlr-v202-song23a}.

\paragraph{Underlying distribution for collocation points.} When training PINN, it is very important to collect proper collocation points $(\bmx_t,t)\in\bR^d\times [0, T]$ where $\bmx_t\sim\nu_t$. We expect samples from $\nu_t$ to cover the high-density domain of $\pi_t$ where PINN can provide a good approximation.
To achieve this, we first generate samples $\bmx_0\sim\nu_0$ by running a short chain of Langevin Monte Carlo (LMC) for $\pi$ so that $\nu_0$ covers the high density domain of $\pi$.
Given $\bmx_0\sim\nu_0$, we obtain $\bmx_t\sim\nu_t$ by sampling from the conditional distribution of the forward process given $\bmx_0$, namely, $\bmx_t|\bmx_0\sim \pi_{t| 0}(\cdot| \bmx_0)$.

\paragraph{Training objective.} 
One useful property of the forward process \eqref{eq: diffusion forward} is that $\bmx_T\sim\pi_T\approx \pi_{\textrm{prior}}$ when $T$ is large. 
In practice, we may use this property to further regularize the PINN residual loss, leading to the following training objective:
\begin{equation}\label{eq: log-fp objective}
\begin{aligned}
L_{\textrm{train}}(u_\theta):= &\ 
\bE_{t\sim\mathcal{U}[0, T]}\bE_{\bmx_t\sim \nu_t}\left[
\beta^2(t)\cdot \left\|\pt  u_\theta(\bmx_t, t) - \mathcal{L}_{\textrm{L-FPE}}  u_\theta(\bmx_t, t)\right\|^2
\right]\\
&\ \quad\quad
+\lambda \cdot
\bE_{\bm{z}\sim \pi_{\textrm{prior}}}
\left[\ell_{\textrm{reg}}(u_\theta; T, \bm{z})\right],
\end{aligned}
\end{equation}
where $\ell_{\textrm{reg}}(u_\theta; T, \bm{z}):=\left\|\nabla_{\bmz} u_\theta(\bm{z},T)-\nabla_{\bmz} \log\pi_{\textrm{prior}}(\bm{z})\right\|^2$ denotes the regularization term, $\beta(t)$ is a weight function and $\lambda$ is a regularization coefficient.
We seek a good approximation $ u_\theta(\bmx, t)$ by minimizing \eqref{eq: log-fp objective} via stochastic optimization methods where the stochastic gradient is computed by Monte Carlo estimation.
Our algorithm is summarized in Algorithm \ref{algorithm: PINN}.

Once $ u_\theta(\bmx, t)$ is learned, the induced score approximation is then substituted into the reverse process \eqref{eq: diffusion sampling}, resulting in a new variant of diffusion-based sampling method that we call \emph{Diffusion-PINN Sampler} (DPS).

\begin{algorithm}[t!]
    \caption{: Solving Log-density FPE via PINN}
    \label{algorithm: PINN}
    \begin{algorithmic}[1]
        \REQUIRE Unnormalized density $\mu(\bmx)$, the number of training iterations $N$, the number of samples used to estimate the training objective \eqref{eq: log-fp objective} $M$, the running time of the forward process \eqref{eq: diffusion forward} $T$.
        \STATE Initialize the parameterized solution $ u_\theta(\bmx, t)$ using target-informed parameterization \eqref{eq: parameterized solution}.
        \FOR{$n=1,\cdots, N$}
        \STATE Sample i.i.d. $t_i\sim \mathcal{U}[0, T], 1\les i\les M$.
        \STATE Sample i.i.d. $\bmx_i^0\sim \nu_0$ and $\bm{z}_i\sim \pi_{\textrm{prior}}, 1\les i\les M$.
        \STATE Sample collocation points by the forward process \eqref{eq: diffusion forward}: 
        $\bmx^{t_i}_{i}\sim \pi_{t_i| 0}(\cdot|\bmx_i^0), 1\les i\les M$.
        \STATE Compute the training objective (\ref{eq: log-fp objective}) by Monte Carlo estimation 
        \begin{equation}\label{eq: monte-carlo estiamtion of objective in algorithm}
            L_{\textrm{MCMC}}(u_\theta):= 
            \frac{1}{M}\sum_{i=1}^M \beta^2(t_i)\cdot \left\|\pt u_\theta(x_i^{t_i},t_i) - 
            \mathcal{L}_{\textrm{L-FPE}} u_\theta(x_i^{t_i},t_i)\right\|^2  + 
            \frac{\lambda}{M}\sum_{i=1}^M 
            \ell_{\textrm{reg}}(u_\theta; T, \bm{z}_i).
        \end{equation}
        \STATE Gradient-based optimization: $\theta\leftarrow \texttt{Optimizer}\left(\theta, \nabla_{\theta}L_{\textrm{MCMC}}(u_\theta)\right)$.
        \ENDFOR
        \RETURN Parameterized solution $ u_\theta(\bmx, t)$.
    \end{algorithmic}
\end{algorithm}

\paragraph{Hutchinson's trick for the gradient of the PINN residual.}
Hutchinson’s trace estimator provides a stochastic method for estimating the trace of any square matrix and is commonly used in Laplacian estimation. 
However, directly using Hutchinson’s trick here can result in biased gradient estimation. 
To address this issue, we propose a novel variant of Hutchinson’s trick that allows unbiased gradient estimation.
Recall that the PINN residual can be decomposed as 
\[
\pt u_\theta - \cL_{\textrm{L-FPE}}u_\theta:=\underbrace{\pt u_\theta + \bmf\cdot\nx u_\theta + \nabla\cdot \bmf -\frac{g^2(t)}{2}\|\nx u_\theta\|^2}_{:=\cL_{\textrm{I}} u_\theta} - \frac{g^2(t)}{2}\Delta u_\theta.
\]
Using this decomposition, the PINN residual loss $\|\pt u_\theta - \cL_{\textrm{L-FPE}} u_\theta\|^2$ has the following gradient,
\[
\begin{aligned}
    & \nabla_\theta \left\|\pt u_\theta - \cL_{\textrm{L-FPE}} u_\theta\right\|^2
 = 2\left(\cL_{\textrm{I}} u_\theta-\frac{g^2(t)}{2}\Delta u_\theta\right)\nabla_\theta\left(\cL_{\textrm{I}} u_\theta-\frac{g^2(t)}{2}\Delta u_\theta\right) \\
 = & 2\left(\cL_{\textrm{I}} u_\theta-\frac{g^2(t)}{2}\cdot\bE_{v_1}\left[v_1^\top \nx\left(v_1^\top \nx u_\theta\right)\right]\right)\nabla_\theta \left(\cL_{\textrm{I}} u_\theta-\frac{g^2(t)}{2}\cdot\bE_{v_2}\left[v_2^\top \nx\left(v_2^\top \nx u_\theta\right)\right]\right) \\
 = & \bE_{v_1,v_2}\left[2\left(\cL_{\textrm{I}} u_\theta-\frac{g^2(t)}{2}\cdot v_1^\top \nx\left(v_1^\top \nx u_\theta\right)\right)\nabla_\theta \left(\cL_{\textrm{I}} u_\theta-\frac{g^2(t)}{2}\cdot v_2^\top \nx\left(v_2^\top \nx u_\theta\right)\right)\right],
\end{aligned}
\]
where $v_1$ and $v_2$ are independent and satisfy $\mathbb{E}_{v_1}[v_1v_1^\top]=\mathbb{E}_{v_2}[v_2v_2^\top]=\boldsymbol{I}_d$.
Therefore, the following objective yields an unbiased gradient estimate of the PINN residual loss,
\[
\mathbb{E}_{v_1,v_2}\left[\textrm{Detach}\left(2\left(\cL_{\textrm{I}} u_\theta-\frac{g^2(t)}{2}\cdot v_1^\top \nx\left(v_1^\top \nx u_\theta\right)\right)\right)\left(\cL_{\textrm{I}} u_\theta-\frac{g^2(t)}{2}\cdot v_2^\top \nx\left(v_2^\top \nx u_\theta\right)\right)\right].
\]

\section{Theoretical Guarantees}

\paragraph{Notations.}
Let us denote $e_t(\bmx):= u_\theta(\bmx, t)-u_t^*(\bmx)$ and $r_t(\bmx):=\pt u_\theta(\bmx, t)-\mathcal{L}_{\textrm{L-FPE}} u_\theta(\bmx, t)$. 
For any $C\in\bR$ and $t\in[0, T]$, we define the weighted PINN objective on $\Omega$ as 
\begin{equation}\label{eq: def of PINN residual objective}
L_{\textrm{PINN}}(t;C):=\int_0^t e^{C(t-s)} \|r_s(\cdot)\|^2_{L^2(\Omega;\nu_s)}\rmd s,
\end{equation}
where $\{\nu_t\}_{t=0}^T$ denotes the underlying distribution for collocation points introduced in Section \ref{subsection: solving log-density FPE} which satisfies the FPE $\pt\nu_t(\bmx)=\mathcal{L}_{\textrm{FPE}}\nu_t(\bmx)$.

\subsection{Approximation Error of PINN for Log-density FPE}
\label{subsection: convergence PINN analysis}
In this section, we provide an upper bound on the approximation error of PINN for the log-density FPE \eqref{eq:log-fp} on a constrained domain $\Omega$. 
Namely, we control $\|e_t(\cdot)\|^2_{L^2(\Omega;\nu_t)}$ and $\|\nx e_t(\cdot)\|^2_{L^2(\Omega;\nu_t)}$ by the residual loss $\|r_t(\cdot)\|^2_{L^2(\Omega;\nu_t)}$ and the weighted PINN objective \eqref{eq: def of PINN residual objective}. 
We make the following assumptions.

\begin{assumption}
\label{PINN assumption: boundary identity}
    $u^*$ and $u_\theta$ are the same on the boundary, i.e., $u^*_t(\bmx)=u_\theta(\bmx,t)$ on $\partial\Omega\times [0, T]$.
\end{assumption}

\begin{assumption}
\label{PINN assumption: diffusion bound}
    For any $t\in \left[0, T\right]$, $g^2(t)$ is bounded: $m_1\les g^2(t)\les M_1$ for some $m_1, M_1>0$.
\end{assumption}

\begin{assumption}
\label{PINN assumption: continuous assumption}
$\log \nu_t(\bmx), u^*_t(\bmx),  u_\theta(\bmx, t)\in \cC^2(\Omega\times [0,T])$.
\end{assumption}

Assumption \ref{PINN assumption: boundary identity} is necessary for us to ensure the uniqueness of the solution to \eqref{eq:log-fp} on $\Omega$, which is also considered in \citet{deveney2023closing,wang20222}.
Assumption \ref{PINN assumption: diffusion bound}, \ref{PINN assumption: continuous assumption} are also considered in \citet{deveney2023closing}.
Based on Assumption \ref{PINN assumption: continuous assumption}, there exists $\Bv_0, \Bu_0, \hB_0, \Bv_1, \Bu_1, \hB_1\in\bR_+$ and $\Bv_2, \Bu_2, \hB_2\in \bR$ depended on $\Omega$ such that for any $(\bmx, t)\in\Omega\times [0, T]$, we have
\[\begin{aligned}
   & \vert\pt \log \nu_t(\bmx)\vert\les \Bv_0, \quad  
   \vert\pt u^*_t(\bmx)\vert \les \Bu_0, \quad
   \vert \pt  u_\theta(\bmx, t)\vert \les \hB_0, \\
   & \|\nx\log \nu_t(\bmx)\|^2\les \Bv_1, \quad
   \|\nx u^*_t(\bmx)\|^2\les \Bu_1, \quad
   \|\nx  u_\theta(\bmx, t)\|^2\les \hB_1, \\
   & \Lap\log \nu_t(\bmx)\les \Bv_2, \quad
   \Lap u^*_t(\bmx)\ges \Bu_2, \quad 
   \Lap  u_\theta(\bmx, t)\ges \hB_2.
\end{aligned}\]
In practice, using weights clipping strategy as in \citet{pmlr-v70-arjovsky17a}, we can control the regularity of neural network approximation $ u_\theta(\bmx, t)$, thus bound the constants $\hB_0, \hB_1,\hB_2$.

We summarize our main results in the following theorem.
The proof is deferred to Appendix \ref{appendix: Theorem PINN analysis proof}, which generalizes the framework in \citet{deveney2023closing}.

\begin{theorem}
\label{thm: PINN analysis}
Suppose that Assumption \ref{PINN assumption: boundary identity}, \ref{PINN assumption: diffusion bound}, and \ref{PINN assumption: continuous assumption} hold. 
We further assume that $u_\theta(\bmx,0)=u^*_0(\bmx)$\footnote{This is a reasonable assumption due to the target-informed parameterization introduced in Section \ref{subsection: solving log-density FPE}.} for any $\bmx\in\Omega$.
Then for any positive constant $\vareps>0$, the following holds for any $0\les t\les T$,
\begin{equation}\label{eq: thm PINN analysis e 1 in context}
\|e_t(\cdot)\|^2_{L^2(\Omega;\nu_t)}\les \vareps L_{\emph{\textrm{PINN}}}(t;C_1(\vareps)).
\end{equation}
Moreover, for any $0\les t\les T$,
\begin{equation}\label{eq: thm PINN analysis nabla e 1 in context}
m_1\|\nx e_t(\cdot)\|^2_{L^2(\Omega;\nu_t)}\les
\vareps\|r_t(\cdot)\|^2_{L^2(\Omega;\nu_t)}
+C_3(\vareps)L_{\emph{\textrm{PINN}}}(t;C_1(\vareps)) 
+C_2\sqrt{\vareps L_{\emph{\textrm{PINN}}}(t;C_1(\vareps))}.
\end{equation}
where $C_2:=2\sqrt{2}(\hB_0^2+B_0^{*2})^{1/2}$, $C_3(\vareps):=\vareps(C_1(\vareps)+\Bv_0)$, and $C_1(\vareps)$ is a constant depended on $\Bv_1$, $\Bu_1$, $\hB_1$, $\Bv_2$, $\Bu_2$, $\hB_2$ and $m_1$, $M_1$.
\end{theorem}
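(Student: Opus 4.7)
The plan is to convert the PINN residual into an energy inequality for the pointwise error $e_t$ and close it by a Gronwall argument. Since $u^*_t$ solves the log-density FPE exactly, subtracting $\pt u_\theta = r_t + \mathcal{L}_{\textrm{L-FPE}}u_\theta$ from $\pt u^*_t = \mathcal{L}_{\textrm{L-FPE}}u^*_t$ and expanding $\|\nx u_\theta\|^2 - \|\nx u^*_t\|^2 = \|\nx e_t\|^2 + 2\nx u^*_t\cdot\nx e_t$ gives the error PDE
\[
\pt e_t = r_t + \tfrac{1}{2}g^2(t)\Lap e_t + \tfrac{1}{2}g^2(t)\|\nx e_t\|^2 + g^2(t)\,\nx u^*_t\cdot\nx e_t - \bmf(\bmx,t)\cdot\nx e_t,
\]
with $e_0\equiv 0$ on $\Omega$ by the target-informed parameterization hypothesis and $e_t\equiv 0$ on $\partial\Omega$ by Assumption \ref{PINN assumption: boundary identity}.

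Next I would compute $\tfrac{d}{dt}\|e_t\|^2_{L^2(\Omega;\nu_t)} = \int 2e_t\,\pt e_t\,\nu_t\,d\bmx + \int e_t^2\,(\pt\log\nu_t)\,\nu_t\,d\bmx$, substitute the error PDE, and integrate by parts on the Laplacian term. The vanishing trace $e_t|_{\partial\Omega}=0$ kills all boundary integrals and produces the coercive contribution $-\int g^2(t)\|\nx e_t\|^2\nu_t\,d\bmx$ together with a cross term involving $\nx\log\nu_t$. Every remaining cross term is tamed by Young's inequality using the a priori bounds in Assumption \ref{PINN assumption: continuous assumption}; the cubic term $\int g^2(t)e_t\|\nx e_t\|^2\nu_t$ is reduced to a linear-in-$e_t^2$ contribution via the uniform bound $\|\nx e_t\|^2 \les 2(\hB_1+\Bu_1)$. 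Calibrating the Young constants so that the coercive term retains exactly $m_1$ on the left, one arrives at
\[
\tfrac{d}{dt}\|e_t\|^2_{L^2(\Omega;\nu_t)} + m_1\|\nx e_t\|^2_{L^2(\Omega;\nu_t)} \les C_1(\vareps)\|e_t\|^2_{L^2(\Omega;\nu_t)} + \vareps\|r_t\|^2_{L^2(\Omega;\nu_t)},
\]
where $C_1(\vareps)$ absorbs the a priori constants together with the $\vareps^{-1}$ produced by splitting $2e_tr_t$. Dropping the nonnegative gradient term and using $e_0\equiv 0$, Gronwall's lemma yields $\|e_t\|^2_{L^2(\Omega;\nu_t)} \les \vareps L_{\textrm{PINN}}(t;C_1(\vareps))$, which is (\ref{eq: thm PINN analysis e 1 in context}).

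For (\ref{eq: thm PINN analysis nabla e 1 in context}), I would rearrange the same energy inequality as $m_1\|\nx e_t\|^2_{L^2(\Omega;\nu_t)} \les \vareps\|r_t\|^2_{L^2(\Omega;\nu_t)} + C_1(\vareps)\|e_t\|^2_{L^2(\Omega;\nu_t)} + \bigl|\tfrac{d}{dt}\|e_t\|^2_{L^2(\nu_t)}\bigr|$, then bound the time-derivative directly from its definition. Cauchy-Schwarz gives $\bigl|\int 2e_t\,\pt e_t\,\nu_t\bigr| \les 2\|e_t\|_{L^2(\nu_t)}\|\pt e_t\|_{L^2(\nu_t)}$, and the uniform bound $\|\pt e_t\|^2 \les 2(\hB_0^2+\Bu_0^2)$ from Assumption \ref{PINN assumption: continuous assumption} furnishes the prefactor $C_2 = 2\sqrt{2}(\hB_0^2+\Bu_0^2)^{1/2}$; the remaining piece $\bigl|\int e_t^2\,\pt\log\nu_t\,\nu_t\bigr| \les \Bv_0\|e_t\|^2_{L^2(\nu_t)}$ merges with $C_1(\vareps)\|e_t\|^2$ into $(C_1(\vareps)+\Bv_0)\|e_t\|^2$. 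Finally, substituting (\ref{eq: thm PINN analysis e 1 in context}) to control $\|e_t\|^2_{L^2(\nu_t)} \les \vareps L_{\textrm{PINN}}$ and $\|e_t\|_{L^2(\nu_t)} \les \sqrt{\vareps L_{\textrm{PINN}}}$ recovers exactly the constants $C_3(\vareps)=\vareps(C_1(\vareps)+\Bv_0)$ and $C_2$ in the stated bound.

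The main obstacle is the cubic term $\int g^2(t)e_t\|\nx e_t\|^2\nu_t$, which is the genuine source of nonlinearity in the log-density FPE and is absent from the linear-operator analyses (e.g.\ the setting of \citet{wang20222}); handling it forces the uniform gradient bounds in Assumption \ref{PINN assumption: continuous assumption}. A secondary delicacy is arranging the Young-inequality splittings so that the leftover coefficient of $\|\nx e_t\|^2$ on the left of the energy inequality is exactly $m_1$ rather than some smaller constant, which requires playing the gradient contributions of the Laplacian cross term and the drift cross term against the gap between $g^2(t)\ges m_1$ and $g^2(t)\les M_1$.
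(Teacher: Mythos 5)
Your overall energy-estimate plan, the Gronwall argument, and the second half (bounding $\bigl|\pt\|e_t\|^2_{L^2(\Omega;\nu_t)}\bigr|$ by Cauchy--Schwarz to extract $C_2$ and $C_3(\vareps)$, then substituting the first estimate) all match the paper's proof. But your expansion of the nonlinearity leaves a cubic term you cannot close in the form the theorem requires. Splitting $\|\nx u_\theta\|^2-\|\nx u^*_t\|^2 = \|\nx e_t\|^2 + 2\nx u^*_t\cdot\nx e_t$ and multiplying by $2e_t\nu_t$ produces $g^2(t)\int_\Omega e_t\|\nx e_t\|^2\nu_t\rmd\bmx$. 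The uniform bound $\|\nx e_t\|^2\les 2(\hB_1+\Bu_1)$ reduces this only to a term of order $\|e_t\|_{L^1(\Omega;\nu_t)}$, which is \emph{linear}, not quadratic, in $e_t$ and does not feed a Gronwall inequality for $\|e_t\|^2_{L^2(\Omega;\nu_t)}$; converting it to quadratic order by a further Young split introduces a $+\eta\|\nx e_t\|^2_{L^2(\Omega;\nu_t)}$ that erodes the coercivity from the Laplacian, so you can only retain $m_1-\eta<m_1$ in front of the gradient (and $C_1$ blows up as $\eta\to 0$; there is no genuine ``gap between $m_1$ and $M_1$'' to exploit, since Assumption \ref{PINN assumption: diffusion bound} permits $g^2(t)=m_1$). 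The paper avoids the cubic term entirely by keeping the factored form $\nx e_t\cdot(\nx u_\theta + \nx u^*_t)$, so $e_t\nx e_t\cdot(\cdot)=\tfrac{1}{2}\nx(e_t^2)\cdot(\cdot)$ and a single integration by parts returns only quadratic-in-$e_t$ terms --- via $\Lap u_\theta+\Lap u^*_t$ and $\nx\log\nu_t\cdot(\nx u_\theta+\nx u^*_t)$ --- controlled by the one-sided Laplacian bounds $\Bu_2,\hB_2$ and gradient bounds $\Bv_1,\Bu_1,\hB_1$ of Assumption \ref{PINN assumption: continuous assumption}. You need this factored form; the binomial split does not work.

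A second deviation: for the first estimate you propose to control $\pt\nu_t$ through $|\pt\log\nu_t|\les\Bv_0$, but the paper instead substitutes the Fokker--Planck equation $\pt\nu_t=\tfrac12 g^2\Lap\nu_t-\nabla\cdot(\bmf\nu_t)$ satisfied by the collocation law $\nu_t$; the $-\nabla\cdot(\bmf\nu_t)$ piece then exactly cancels, after integration by parts, the drift cross term $-\int_\Omega\nu_t\,\bmf\cdot\nx(e_t^2)\rmd\bmx$ coming from the error PDE. Without that cancellation your estimate still carries a drift term, and Assumption \ref{PINN assumption: continuous assumption} provides no bound on $\bmf$ or $\nabla\cdot\bmf$ to absorb it; consistently, the stated $C_1(\vareps)$ does not involve $\Bv_0$, which only enters through $C_3(\vareps)$ when estimating $\bigl|\pt\|e_t\|^2_{L^2(\Omega;\nu_t)}\bigr|$ for the gradient bound \eqref{eq: thm PINN analysis nabla e 1 in context}.
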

\begin{remark}
The results of \citet{wang20222} show that the 
$L^2$-error cannot be bounded by the PINN residual with universal constants independent of the approximate solution.
Therefore, some natural continuity assumptions (Assumption \ref{PINN assumption: continuous assumption}) about the approximate solution are necessary to control the $L^2$-error by the PINN residual. 
It is noted that this continuity assumption can be satisfied by regularizing the neural network via weight clipping \citep{pmlr-v70-arjovsky17a}, and would not sacrifice much approximation accuracy as the true solution is initialized as the log-density of the target and follows the diffusion process (e.g., the OU process) that would only become smoother as time evolves. 
Moreover, our upper bound of $L^2$-error depends on continuity constants rather than an universal bound. 
In this regard, our analysis aligns with the results of \citet{wang20222}, but with a more flexible bound based on some natural continuity assumption in the context of diffusion-based sampling.
\end{remark}

\subsection{Convergence of Diffusion-PINN Sampler}
\label{subsection: convergence Sampler analysis}
In this section, we present our convergence analysis of DPS based on Theorem \ref{thm: PINN analysis} and the analysis of score-based generative modeling in \cite{chen2023improved}. 
Following \citet{chen2023sampling,chen2023improved}, we focus on the forward process with $\bmf(\bmx,t)=-\frac{1}{2}\bmx$ and $g(t)\equiv 1$, which is driven by
\begin{equation}\label{eq: special forward process}
\rmd\bmx_t=-\frac{1}{2}\bmx_t\rmd t+ \rmd\bmB_t, \quad \bmx_0\sim \pi, \quad 0\les t\les T.
\end{equation}
In practice, we use a discrete-time approximation for the reverse process. 
Let $0=t_0 < \cdots < t_N=T$ be the discretization points and $h_k:=t_k-t_{k-1}$ be the step size for $1\les k\les N$. 
Let $t_k^{\prime}:=T-t_{N-k}$ for $0\les k\les N$ be the corresponding discretization points in the reverse SDE. 
In our analysis, we consider the exponential integrator scheme which leads to the following sampling dynamics for $0\les k\les N-1$,
\begin{equation}\label{eq: special reverse process with discretization}
    \rmd \widehat{\bm{y}}_t = \left(\frac{1}{2}\widehat{\bm{y}}_t+\bms_{T-t_k^{\prime}}(\widehat{\bm{y}}_{t_k^{\prime}})\right)\rmd t + \rmd\bmB_t, \quad \widehat{\bm{y}}_0\sim \mathcal{N}(\bm{0},\bm{I}_d),\quad t\in [t_k^{\prime}, t_{k+1}^{\prime}],
\end{equation}
where $\bms_t(\bmx)\approx\nx\log\pi_t(\bmx)$ denotes the score approximation.
Let $\widehat{\pi}_T$ denotes the distribution of $\widehat{\bm{y}}_T$ from \eqref{eq: special reverse process with discretization}. 
We summarize all the assumptions we need as follows.

\begin{assumption}\label{Sampler assumption: target distr}
The target distribution admits a density $\pi\in\mathcal{C}^2(\mathbb{R}^d)$ where $\nx\log\pi(\bmx)$ is $K$-Lipschitz and has the finite second moment, i.e., $M_2:=\mathbb{E}_{\pi}\left[\|\bmx\|^{2}\right]<\infty$.
\end{assumption}

\begin{assumption}\label{Sampler assumption: bounded region}
For any $\delta>0$, there exists bounded $\Omega$ such that $\int_{\Omega^c}\pi_t(\bmx)\|\nx\log \pi_t(\bmx)\|^2\rmd\bmx\les\delta$ for any $t\in[0, T]$.
\end{assumption}

\begin{assumption}\label{Sampler assumption: base target ratio}
For any $(\bmx, t)\in \Omega\times [0, T]$, there exists $R_t\ges 0$ depended on $t$, such that $\frac{\pi_t(\bmx)}{\nu_t(\bmx)}\les R_t$.
\end{assumption}

Theorem \ref{thm: Diffusion PINN Sampler analysis} summarizes our main theoretical results of DPS.
The proof can be found in Appendix \ref{appendix: Sampler analysis proof}, which is based on the convergence results of score-based generative modeling in \citet{chen2023improved} and our Theorem \ref{thm: PINN analysis}.

\begin{theorem}\label{thm: Diffusion PINN Sampler analysis}
Suppose that $T\ges 1$, $K\ges 2$, and Assumptions \ref{PINN assumption: boundary identity}-\ref{Sampler assumption: base target ratio} hold. 
For any $\delta>0$, let $\Omega$ be chosen as in Assumption \ref{Sampler assumption: bounded region}.
For any positive constant $\vareps>0$, we further assume that $ u_\theta(\bmx, t)$ satisfies
\begin{equation}\label{eq: Sampler analysis PINN loss condition}
\vareps\sum_{k=1}^N h_kR_{t_k}\|r_{t_k}(\cdot)\|^2_{L^2(\Omega;\nu_{t_k})}\les\delta_1
\quad{\textrm{and}}
\quad\vareps\sum_{k=1}^N h_kR_{t_k}L_{\emph{\textrm{PINN}}}(t_k;C_1(\vareps))\les \delta_2.
\end{equation} 
Then there is a universal constant $\alpha\ges 2$ such that, using step size
$h_k=h\min\{\max\{t_k,1/(4K)\}, 1\}$, $0<h\les 1/(\alpha d)$, and $\bms_t(\bmx)=\nx u_\theta(\bmx, t)\cdot\bbmone\{\bmx\in\Omega\}$ in \eqref{eq: special reverse process with discretization}, we have the following upper bound on the KL divergence between the target and the approximate distribution
\begin{equation}\label{eq: main results Diffusion-PINN sampler 1}
\operatorname{KL}(\pi\|\widehat{\pi}_T)\lesssim 
 (d+M_2)\cdot e^{-T} 
 + d^2h(\log K+T) 
 + T\delta 
 + \delta_1 
 + C_5(\vareps)\delta_2
 + C_2\sqrt{\sum_{k=1}^Nh_kR_{t_k}\delta_2},
\end{equation}
where $C_5(\vareps):=C_1(\vareps)+\Bv_0$, $C_2$ and $C_1(\vareps)$ are defined in Theorem \ref{thm: PINN analysis}.
\end{theorem}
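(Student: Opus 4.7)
The plan is to combine the discrete-time convergence bound for score-based generative modeling from \citet{chen2023improved} with the PINN approximation guarantee in Theorem \ref{thm: PINN analysis}. Under Assumption \ref{Sampler assumption: target distr} on $\pi$ and with $T\geq 1$, $K\geq 2$, their analysis of the exponential integrator scheme \eqref{eq: special reverse process with discretization} with the prescribed step size $h_k=h\min\{\max\{t_k,1/(4K)\},1\}$, $0<h\leq 1/(\alpha d)$, yields a bound of the form
\begin{equation*}
\operatorname{KL}(\pi\|\widehat{\pi}_T)\lesssim (d+M_2)\,e^{-T}+d^2h(\log K+T)+\sum_{k=1}^N h_k\,\mathbb{E}_{\pi_{t_k}}\!\left[\|\bms_{t_k}(\bmx)-\nx\log\pi_{t_k}(\bmx)\|^2\right].
\end{equation*}
The first two terms are exactly the convergence and discretization terms appearing in the statement; the remaining task is to control the aggregated score-approximation error produced by our choice $\bms_t(\bmx)=\nx u_\theta(\bmx,t)\cdot\bbmone\{\bmx\in\Omega\}$.

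Next I would split each score-error term into its contributions from $\Omega$ and $\Omega^c$. On $\Omega^c$ the learned score is set to zero, so the integrand reduces to $\|\nx\log\pi_{t_k}(\bmx)\|^2$, and Assumption \ref{Sampler assumption: bounded region} bounds each such integral by $\delta$, giving $\sum_k h_k\cdot\delta\leq T\delta$. On $\Omega$, since $u^*_t$ and $\log\pi_t$ differ only by the constant $\log Z$, we have $\nx u_\theta(\bmx,t)-\nx\log\pi_t(\bmx)=\nx e_t(\bmx)$, so the interior contribution is $\|\nx e_{t_k}(\cdot)\|^2_{L^2(\Omega;\pi_{t_k})}$. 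Using Assumption \ref{Sampler assumption: base target ratio} to change the reference measure yields
\begin{equation*}
\|\nx e_{t_k}(\cdot)\|^2_{L^2(\Omega;\pi_{t_k})}\leq R_{t_k}\,\|\nx e_{t_k}(\cdot)\|^2_{L^2(\Omega;\nu_{t_k})}.
\end{equation*}

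Now I would invoke inequality \eqref{eq: thm PINN analysis nabla e 1 in context} of Theorem \ref{thm: PINN analysis} at each $t_k$ (absorbing the constant $m_1$ into the hidden universal constant) and multiply by $h_kR_{t_k}$ before summing over $k$. The three resulting sums are handled as follows: the $\vareps\sum_k h_k R_{t_k}\|r_{t_k}\|^2_{L^2(\Omega;\nu_{t_k})}$ piece is exactly $\leq\delta_1$ by hypothesis \eqref{eq: Sampler analysis PINN loss condition}; the $C_3(\vareps)\sum_k h_k R_{t_k}L_{\textrm{PINN}}(t_k;C_1(\vareps))$ piece equals $C_3(\vareps)\vareps^{-1}\cdot\vareps\sum_k h_k R_{t_k}L_{\textrm{PINN}}(t_k;C_1(\vareps))\leq C_5(\vareps)\delta_2$, using that $C_3(\vareps)=\vareps\,C_5(\vareps)$; and the square-root piece $C_2\sum_k h_k R_{t_k}\sqrt{\vareps L_{\textrm{PINN}}(t_k;C_1(\vareps))}$ is handled by Cauchy--Schwarz,
\begin{equation*}
\sum_{k=1}^N h_k R_{t_k}\sqrt{\vareps L_{\textrm{PINN}}(t_k;C_1(\vareps))}\leq\sqrt{\textstyle\sum_k h_kR_{t_k}}\cdot\sqrt{\textstyle\sum_k h_kR_{t_k}\vareps L_{\textrm{PINN}}(t_k;C_1(\vareps))}\leq\sqrt{\textstyle\sum_k h_kR_{t_k}\delta_2},
\end{equation*}
which gives the final $C_2\sqrt{\sum_k h_kR_{t_k}\delta_2}$ term. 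Assembling these pieces reproduces \eqref{eq: main results Diffusion-PINN sampler 1}.

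The main obstacle I anticipate is aligning the hypotheses of the Chen--Benton-style discrete-time bound with the slightly nonstandard score approximation $\bms_t=\nx u_\theta\cdot\bbmone\{\bmx\in\Omega\}$, since that cutoff is not globally Lipschitz and the generative-modeling analyses typically assume an $L^2(\pi_t)$ score-error decomposition rather than a domain-restricted one. This is precisely why Assumption \ref{Sampler assumption: bounded region} is imposed: it lets us dispose of the outside-$\Omega$ contribution cheaply so that only the well-behaved interior error needs to be bounded via Theorem \ref{thm: PINN analysis}. Carefully checking that the tail bound enters as an additive $T\delta$ term (rather than corrupting the discretization constants) and that Assumption \ref{Sampler assumption: target distr} suffices to invoke the Chen--Benton bound under the prescribed step schedule is the most delicate bookkeeping step; the remainder reduces to the measure change and Cauchy--Schwarz manipulations outlined above.
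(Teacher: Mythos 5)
Your proposal is correct and follows essentially the same route as the paper: invoke the discrete-time KL bound of \citet{chen2023improved} (stated as Proposition~\ref{prop: analysis of generative modeling}), split the aggregated score-error term over $\Omega$ and $\Omega^c$, dispose of the $\Omega^c$ part via Assumption~\ref{Sampler assumption: bounded region}, change measure from $\pi_{t_k}$ to $\nu_{t_k}$ via Assumption~\ref{Sampler assumption: base target ratio}, apply the gradient bound \eqref{eq: thm PINN analysis nabla e 1 in context} from Theorem~\ref{thm: PINN analysis}, use $C_3(\vareps)=\vareps\,C_5(\vareps)$ together with hypothesis~\eqref{eq: Sampler analysis PINN loss condition}, and finish the square-root term with Cauchy--Schwarz. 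One small note: you need not ``absorb $m_1$ into the hidden constant''---since the analysis here fixes $g(t)\equiv 1$ in \eqref{eq: special forward process}, one has $m_1=M_1=1$ exactly, which is how the paper handles it.
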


\subsection{Theoretical Comparison between Different Sampling Methods for Collocation Points}

In practice, we typically lack prior knowledge of the high-probability regions of the diffusion path starting from the target distribution. As a result, specifying a sufficiently large support for uniform sampling of collocation points, becomes challenging and inefficient, especially in high-dimensional settings.
In contrast, we employ a more sophisticated strategy for generating collocation points that integrates Langevin Monte Carlo (LMC) with the forward pass (see Section \ref{subsection: solving log-density FPE} for details). 
In this section, we theoretically demonstrate our strategy for collocation points generation has a faster convergence for DPS.

Similar to Theorem \ref{thm: PINN analysis} and \ref{thm: Diffusion PINN Sampler analysis}, we can establish theoretical guarantees for uniformly sampled collocation points. 
However, these guarantees differ from those in Theorem \ref{thm: PINN analysis} and \ref{thm: Diffusion PINN Sampler analysis}, since the distribution of uniformly sampled collocation points does not depend on $t$ and does not satisfy the corresponding FPE.
This distinction necessitates a slightly different analysis, requiring an additional assumption compared to the previous analyses (see more details in Section \ref{subsubsect: approximation error extended}).
Our findings are summarized in Theorem \ref{thm: extended PINN analysis for uniform sampling} and \ref{thm: Unif KL bound}, with the corresponding proofs provided in Appendix \ref{appendix: proof of thm: extended PINN for uniformed sampling} and Appendix \ref{appendix: proof of thm: Unif KL bound}.

\subsubsection{Approximation Error of PINN for Log-density FPE with Uniformly Sampled Collocation Points}
\label{subsubsect: approximation error extended}
In this section, we control the approximation error of PINN for the log-density FPE \eqref{eq:log-fp} on a constrained domain $\Omega$ with uniformly sampled collocation points $\nu_t\sim{\rm Unif}(\Omega)$.
First, we redefine the weighted PINN objective on $\Omega$ as follows and state an additional assumption for our analysis in Assumption \ref{assump:comparsion-add_uniform},
\[L_{\textrm{PINN}}^{\textrm{Unif}}(t;C):=\int_0^t e^{C(t-s)} \left\|r_s(\cdot)\right\|^2_{L^2(\Omega)}\rd s.\]
\begin{assumption}\label{assump:comparsion-add_uniform}
    For any $(\bmx, t)\in \Omega\times [0, T]$, $\nabla\cdot \bmf(\bmx,t)\les \mathrm{m}_2$ for some $\mathrm{m}_2\in\bR$.
\end{assumption}
\begin{theorem}
\label{thm: extended PINN analysis for uniform sampling}
Suppose that Assumption \ref{PINN assumption: boundary identity}, \ref{PINN assumption: diffusion bound}, \ref{PINN assumption: continuous assumption} and \ref{assump:comparsion-add_uniform} hold.
We further assume that $u_\theta(\bmx,0)=u^*_0(\bmx)$ for any $\bmx\in\Omega$. 
Then for any positive constant $\eps>0$, the following holds for any $t\in[0, T]$,
\begin{equation}\label{eq: Unif thm PINN analysis e 1}
    \|e_t(\cdot)\|^2_{L^2(\Omega)}\les\eps L_{\emph{\textrm{PINN}}}^{\emph{\textrm{Unif}}}(t;C_1^{\emph{\textrm{U}}}(\eps)).
\end{equation}
Moreover, for any $t\in[0, T]$,
\begin{equation}\label{eq: Unif thm PINN analysis nabla e 1}
    m_1\|\nx e_t(\cdot)\|^2_{L^2(\Omega)}\les 
    \eps\|r_t(\cdot)\|^2_{L^2(\Omega)}
    +\eps\cdot C_1^{\emph{\textrm{U}}}(\eps)L_{\emph{\textrm{PINN}}}^{\emph{\textrm{Unif}}}(t;C_1^{\emph{\textrm{U}}}(\eps))
    + C_2\sqrt{\eps L_{\emph{\textrm{PINN}}}^{\emph{\textrm{Unif}}}(t;C_1^{\emph{\textrm{U}}}(\eps))},
\end{equation}
where $C_1^{\emph{\textrm{U}}}(\eps):=\frac{1}{\eps}+\mathrm{m}_2-\frac{m_1}{2}\left(B_2^*+\widehat{B}_2\right)$ and $C_2$ is defined in Theorem \ref{thm: PINN analysis}.
\end{theorem}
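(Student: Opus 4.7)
The plan is to generalise the energy-estimate framework behind Theorem \ref{thm: PINN analysis} to the uniform collocation measure, exploiting the fact that $\nu_t$ is now independent of $t$ so that the term proportional to $\Bv_0 = |\pt \log \nu_t|$ in the middle coefficient vanishes (recovering $\eps C_1^{\textrm{U}}(\eps)$ instead of $\eps(C_1(\eps)+\Bv_0)$); the price is that $\diver \bmf$ can no longer be absorbed via the FPE relation $\pt \nu_t = \mathcal{L}_{\textrm{FPE}} \nu_t$ that implicitly controlled it in Theorem \ref{thm: PINN analysis}, and must instead be bounded directly, which is precisely the content of Assumption \ref{assump:comparsion-add_uniform}.

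First I would write down the PDE satisfied by the error $e_t := u_\theta(\cdot, t) - u^*_t$. Subtracting the exact log-density FPE from the residual identity for $u_\theta$ and linearising the quadratic gradient term through the algebraic identity $\|\nx u_\theta\|^2 - \|\nx u^*_t\|^2 = (\nx u_\theta + \nx u^*_t) \cdot \nx e_t$ produces an advection-diffusion equation of the form
\begin{equation*}
\pt e_t = \tfrac{1}{2} g^2(t)\,\Lap e_t + \bm{v}_t \cdot \nx e_t + r_t, \qquad \bm{v}_t := \tfrac{1}{2} g^2(t)(\nx u_\theta + \nx u^*_t) - \bmf(\cdot, t).
\end{equation*}

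Next I would execute the standard $L^2$ energy estimate: multiply by $e_t$, integrate over $\Omega$, and apply integration by parts, whose boundary contributions vanish by Assumption \ref{PINN assumption: boundary identity}. Writing $A(t) := \|e_t\|^2_{L^2(\Omega)}$, $B(t) := \|\nx e_t\|^2_{L^2(\Omega)}$ and $R(t) := \|r_t\|^2_{L^2(\Omega)}$, this yields the identity $\tfrac{1}{2} A'(t) + \tfrac{1}{2} g^2(t)\, B(t) = -\tfrac{1}{2}\int_\Omega e_t^2\, \diver \bm{v}_t\, \rmd \bmx + \int_\Omega e_t r_t\, \rmd \bmx$. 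Computing $\diver \bm{v}_t = \tfrac{1}{2} g^2(t)(\Lap u_\theta + \Lap u^*_t) - \diver \bmf$ and combining the lower bounds $\Lap u_\theta \ges \hB_2$, $\Lap u^*_t \ges \Bu_2$ from Assumption \ref{PINN assumption: continuous assumption}, the upper bound $\diver \bmf \les \mathrm{m}_2$ from Assumption \ref{assump:comparsion-add_uniform}, the diffusion lower bound $g^2(t) \ges m_1$ from Assumption \ref{PINN assumption: diffusion bound}, and Young's inequality $2|e_t r_t| \les \tfrac{1}{\eps} e_t^2 + \eps r_t^2$, one obtains the differential inequality
\begin{equation*}
A'(t) + m_1 B(t) \les C_1^{\textrm{U}}(\eps)\, A(t) + \eps R(t), \qquad A(0) = 0.
\end{equation*}
Dropping the non-negative term $m_1 B(t)$ and invoking Grönwall's inequality gives \eqref{eq: Unif thm PINN analysis e 1}.

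For \eqref{eq: Unif thm PINN analysis nabla e 1} I would rearrange the same inequality as $m_1 B(t) \les -A'(t) + C_1^{\textrm{U}}(\eps) A(t) + \eps R(t)$, substitute the Grönwall bound just obtained for $A(t)$, and estimate $|A'(t)| = 2|\int_\Omega e_t \pt e_t\, \rmd \bmx|$ by Cauchy-Schwarz combined with the pointwise bound $|\pt e_t| \les \hB_0 + \Bu_0$ from Assumption \ref{PINN assumption: continuous assumption}, producing $|A'(t)| \les C_2 \sqrt{A(t)} \les C_2 \sqrt{\eps L_{\textrm{PINN}}^{\textrm{Unif}}(t; C_1^{\textrm{U}}(\eps))}$ with exactly the constant $C_2 = 2\sqrt{2}(\hB_0^2 + \Bu_0^2)^{1/2}$ of Theorem \ref{thm: PINN analysis}. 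The main technical obstacle throughout is the quadratic gradient nonlinearity in $\mathcal{L}_{\textrm{L-FPE}}$, which a priori precludes any linear energy argument; the linearisation identity above converts it into a transport term whose divergence must be controlled by Laplacian and drift-divergence bounds, and it is this mechanism that forces Assumption \ref{assump:comparsion-add_uniform} into the uniform-sampling analysis while being automatic under the FPE-evolved measure $\nu_t$ of Theorem \ref{thm: PINN analysis}.
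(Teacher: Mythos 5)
Your proof is correct and reproduces the paper's argument with a slightly cleaner packaging: rewriting the error PDE as an advection--diffusion equation with velocity $\bm{v}_t = \tfrac12 g^2(t)(\nx u_\theta + \nx u_t^*) - \bmf$ and computing $\diver\bm{v}_t$ is just a compact way of organizing the same integration by parts, Laplacian/divergence bounds, Young's inequality, and Gr\"onwall step that the paper carries out term by term, and the second-bound argument (rearranging for $m_1 B(t)$ and estimating $|A'(t)|$ by Cauchy--Schwarz and the pointwise bound on $\pt e_t$) is identical. Your observation that Assumption \ref{assump:comparsion-add_uniform} precisely replaces the cancellation of $\diver\bmf$ that the FPE $\pt\nu_t = \mathcal{L}_{\textrm{FPE}}\nu_t$ provided in Theorem \ref{thm: PINN analysis} is accurate and matches the paper's own remark in Section 5.3.1.
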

\begin{remark}
    Indeed, we only need the assumption of the lower bound on $g^2(t)$ from Assumption \ref{PINN assumption: diffusion bound} and the continuity assumptions for $u_t^*(\bmx)$ and $u_\theta(\bmx,t)$ from Assumption \ref{PINN assumption: continuous assumption} in Theorem \ref{thm: extended PINN analysis for uniform sampling}.
\end{remark}

\subsubsection{Convergence of Diffusion-PINN Sampler with Uniformly Sampled Collocation Points}
In this section, within the similar setting in section \ref{subsection: convergence Sampler analysis}, we present the convergence analysis of DPS when the collocation points are uniformly sampled from $\Omega$.
\begin{theorem}\label{thm: Unif KL bound}
    Suppose that $T\ges 1, K\ges 2$, and Assumption \ref{PINN assumption: boundary identity}, \ref{PINN assumption: diffusion bound}, \ref{PINN assumption: continuous assumption}, \ref{Sampler assumption: target distr}, \ref{Sampler assumption: bounded region}, and \ref{assump:comparsion-add_uniform} hold. 
    For any $\delta>0$, let $\Omega$ be chosen as in Assumption \ref{Sampler assumption: bounded region}. 
    For any positive constant $\eps>0$, we further assume that $  u_\theta(\bmx, t)$ satisfies the following\footnote{Here, we contain the term $\textrm{Vol}(\Omega)$ since the PINN residual objective used for uniform collocation points is given by $\|r_t(\cdot)\|^2_{L^2(\Omega)}/\textrm{Vol}(\Omega)$.},
    \begin{equation}\label{Unif PINN error in KL}
    \begin{aligned}
        &\ \eps\sum_{k=1}^Nh_k\max_{\bmx\in\Omega}\left\{\pi_{t_k}(\bmx)\right\}\cdot\left\|r_{t_k}(\cdot)\right\|^2_{L^2(\Omega)}\les\delta_1\cdot\emph{\textrm{Vol}}(\Omega), \\
        &\ \eps\sum_{k=1}^Nh_k\max_{\bmx\in\Omega}\left\{\pi_{t_k}(\bmx)\right\}\cdot L_{\emph{\textrm{PINN}}}^{\emph{\textrm{Unif}}}(t_k;C_1^{\emph{\textrm{U}}}(\eps))\les\delta_2\cdot\emph{\textrm{Vol}}(\Omega).
    \end{aligned}
    \end{equation}
    Then there is a universal constant $\alpha\ges 2$ such that, using step size $h_k:=h\min\{\max\{t_k,\frac{1}{4K}\}\}$ for $0<h\les \frac{1}{\alpha d}$, and $\bms_t(\bmx)=\nx  u_\theta(\bmx, t)\cdot\1\{\bmx\in\Omega\}$,
    we have the following upper bound on the KL divergence between the target and the approximate distribution,
    \[
    \begin{aligned}
        \operatorname{KL}\left(\pi\|\widehat{\pi}_T\right)\lesssim &\ \left(d+M_2\right)\cdot e^{-T} + d^2h\left(\log K+T\right) + T\delta + \left(\delta_1 + C_1^{\emph{\textrm{U}}}(\eps)\delta_2\right)\cdot\emph{\textrm{Vol}}(\Omega) \\
        &\ \quad + C_2 \sqrt{\sum_{k=1}^Nh_k\max_{\bmx\in\Omega}\left\{\pi_{t_k}(\bmx)\right\}\cdot\delta_2\cdot \emph{\textrm{Vol}}(\Omega)}.
    \end{aligned}
    \]
    where $C_1^{\emph{\textrm{U}}}(\eps)$ and $C_2$ are defined in Theorem \ref{thm: extended PINN analysis for uniform sampling}.
\end{theorem}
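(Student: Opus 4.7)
}

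The plan is to follow the same high-level decomposition used for Theorem \ref{thm: Diffusion PINN Sampler analysis}, but replace the $\nu_t$-weighted PINN bound with the Lebesgue-weighted one from Theorem \ref{thm: extended PINN analysis for uniform sampling}. First I would invoke the convergence result of \citet{chen2023improved} for the exponential-integrator discretization \eqref{eq: special reverse process with discretization}: under Assumption \ref{Sampler assumption: target distr} and the stated choice of $h_k$, one obtains
\[
\operatorname{KL}(\pi\|\widehat{\pi}_T)\lesssim (d+M_2)e^{-T} + d^2 h(\log K + T) + \sum_{k=1}^N h_k \,\mathbb{E}_{\pi_{t_k}}\!\left[\|\bms_{t_k}(\bmx) - \nabla\log\pi_{t_k}(\bmx)\|^2\right].
\]
The first two terms already match the target inequality, so the whole argument reduces to controlling the aggregate score-error term.

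Next I would split the score error at $t_k$ into an $\Omega$-part and an $\Omega^c$-part using $\bms_{t_k}(\bmx)=\nabla u_\theta(\bmx,t_k)\cdot\mathbbm{1}\{\bmx\in\Omega\}$. On $\Omega^c$ the integrand reduces to $\pi_{t_k}(\bmx)\|\nabla\log\pi_{t_k}(\bmx)\|^2$, so Assumption \ref{Sampler assumption: bounded region} bounds it by $\delta$, contributing at most $T\delta$ after summation (since $\sum_k h_k\les T$). On $\Omega$, the integrand is $\pi_{t_k}(\bmx)\|\nabla e_{t_k}(\bmx)\|^2$, which I would upper-bound by $\max_{\bmx\in\Omega}\pi_{t_k}(\bmx)\cdot\|\nabla e_{t_k}(\cdot)\|^2_{L^2(\Omega)}$ (this is exactly where the swap from $L^2(\Omega;\nu_t)$ to $L^2(\Omega)$ forces the $\max \pi_{t_k}$ weight to appear instead of $R_{t_k}$ from Theorem \ref{thm: Diffusion PINN Sampler analysis}).

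Now I would feed in Theorem \ref{thm: extended PINN analysis for uniform sampling} to dominate $m_1\|\nabla e_{t_k}(\cdot)\|^2_{L^2(\Omega)}$ by
\[
\eps\|r_{t_k}(\cdot)\|^2_{L^2(\Omega)} + \eps C_1^{\textrm{U}}(\eps)\,L_{\textrm{PINN}}^{\textrm{Unif}}(t_k;C_1^{\textrm{U}}(\eps)) + C_2\sqrt{\eps L_{\textrm{PINN}}^{\textrm{Unif}}(t_k;C_1^{\textrm{U}}(\eps))}.
\]
Multiplying by $h_k\max_{\bmx\in\Omega}\pi_{t_k}(\bmx)$, summing over $k$, and invoking the two hypotheses in \eqref{Unif PINN error in KL} converts the first two pieces directly into $\delta_1\cdot\textrm{Vol}(\Omega)$ and $C_1^{\textrm{U}}(\eps)\delta_2\cdot\textrm{Vol}(\Omega)$. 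For the square-root term, I would apply Cauchy--Schwarz in the form $\sum_k a_k\sqrt{b_k}\les\sqrt{\sum_k a_k}\sqrt{\sum_k a_k b_k}$ with $a_k:=h_k\max_{\bmx\in\Omega}\pi_{t_k}(\bmx)$ and $b_k:=\eps L_{\textrm{PINN}}^{\textrm{Unif}}(t_k;C_1^{\textrm{U}}(\eps))$, so that the second factor is bounded by $\sqrt{\delta_2\textrm{Vol}(\Omega)}$ via \eqref{Unif PINN error in KL}, producing exactly the claimed $C_2\sqrt{\sum_k h_k\max_{\bmx\in\Omega}\pi_{t_k}(\bmx)\cdot\delta_2\cdot\textrm{Vol}(\Omega)}$ term.

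The main obstacle I anticipate is purely bookkeeping rather than conceptual: carefully tracking the $\textrm{Vol}(\Omega)$ factor (which enters because the uniform density is $1/\textrm{Vol}(\Omega)$ while Theorem \ref{thm: extended PINN analysis for uniform sampling} is phrased in unnormalized $L^2(\Omega)$) and ensuring the assumption \eqref{Unif PINN error in KL} aligns with the quantities produced after the $\max \pi_{t_k}$ reweighting. Beyond that, the discretization error and initialization error are imported verbatim from \citet{chen2023improved} and require no modification.
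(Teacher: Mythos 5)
Your proposal matches the paper's proof essentially step for step: decompose the $L^2(\pi_{t_k})$ score error into an $\Omega^c$ contribution bounded by $T\delta$ via Assumption \ref{Sampler assumption: bounded region}, bound the $\Omega$ contribution by $\max_{\bmx\in\Omega}\pi_{t_k}(\bmx)\cdot\|\nabla e_{t_k}\|^2_{L^2(\Omega)}$, invoke Theorem \ref{thm: extended PINN analysis for uniform sampling} (with $m_1=1$ for the OU process), absorb the first two pieces using \eqref{Unif PINN error in KL}, handle the square-root term with the same Cauchy--Schwarz split $\sum_k a_k\sqrt{b_k}\les\sqrt{\sum_k a_k}\sqrt{\sum_k a_k b_k}$, and finally plug the resulting score-error bound into the discretization result of \citet{chen2023improved}. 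This is exactly the route taken in the paper's Appendix \ref{appendix: proof of thm: Unif KL bound}, so your argument is correct and complete.
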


Specifically, our results indicate that employing LMC and the forward pass for sampling collocation points is advantageous over uniform sampling. This is because, in the uniform case, the KL bound includes a factor proportional to the volume of the support, ${\rm Vol}(\Omega)$, which can be prohibitively large in high dimensions. In contrast, our bound depends on the density ratio $\pi_t/\nu_t$, which is more manageable due to LMC and converges to $1$ as $t$ increases, thanks to the forward process.

\section{Numerical Experiments}
In this section, we conduct experiments on various sampling tasks to demonstrate the effectiveness and efficiency of the Diffusion-PINN Sampler (DPS) compared to previous methods.
Our sampling tasks includes 9-Gaussians ($d=2$), Rings ($d=2$), Funnel \citep{neal2003slice} ($d=10$), and Double-well ($d=30$), which are commonly used to evaluate diffusion-based sampling algorithms \citep{zhang2021path,berner2022optimal,grenioux2024stochastic}.
For multimodal distributions, the modes are designed to be well-separated, with challenging mixing proportions between different modes (see more details in Appendix \ref{appendix: details on targets}).
For DPS, we employ a time-rescaled forward process and use a weight function $\beta(t)=2(1-t)$ for the PINN residual loss to improve numerical stability.
To generate collocation points for each task, we run a short chain of LMC with a relatively large step size for better coverage of the high-density domain.
For 9-Gaussians, Rings, and Double-well, the PINN residual loss alone suffices for good performance, so we set the regularization coefficient $\lambda=0$.
For Funnel, however, regularization proves helpful, and we set $\lambda=1$ (details in Section \ref{sec:ablation}).
More details on experiment settings can be found in Appendix \ref{appendix: DPS}.

\begin{wrapfigure}[12]{r}{0.4\textwidth}
\vspace{-0.5cm}
\includegraphics[width=\linewidth]{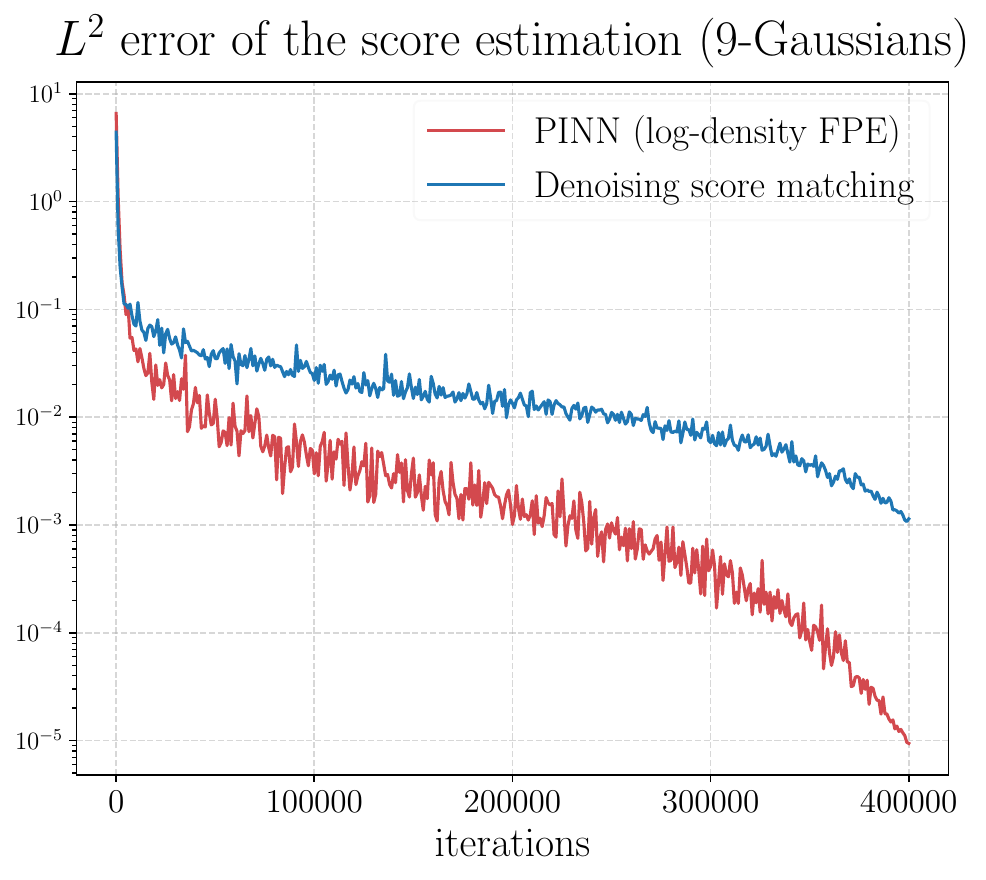}
\vspace{-0.8cm}
\caption{Comparison between solving log-density FPE by PINN and denoising score matching on score estimation.}
\label{fig:score estimation mog-3}
\end{wrapfigure}

\paragraph{Baselines.}
We benchmark DPS performance against a wide range of strong baseline methods. 
For MCMC methods, we consider the Langevin Monte Carlo (LMC).
As for sampling methods using reverse diffusion, we include RDMC \citep{huang2023reverse} and SLIPS \citep{grenioux2024stochastic}.
We also compare with the VI-based PIS \citep{zhang2021path} and DIS \citep{berner2022optimal}.
See Appendix \ref{appendix: details on baselines} for more details.

\subsection{Score Estimation}
We first evaluate the accuracy of score function estimates obtained by solving the log-density FPE (Algorithm \ref{algorithm: PINN}).
To do that, we conduct an experiment on the 9-Gaussians target $\pi$ where we know the ground truth scores throughout the entire forward process.
Figure \ref{fig:score estimation mog-3} shows the $L^2(\pi)$ error of the score estimation for our method compared to denoising score matching \citep{vincent2011connection,song2020denoising}. 
We see clearly that our method provides more accurate score estimation than denoising score matching.

\begin{figure}[t]
    \centering
    \includegraphics[width=1.0\linewidth]{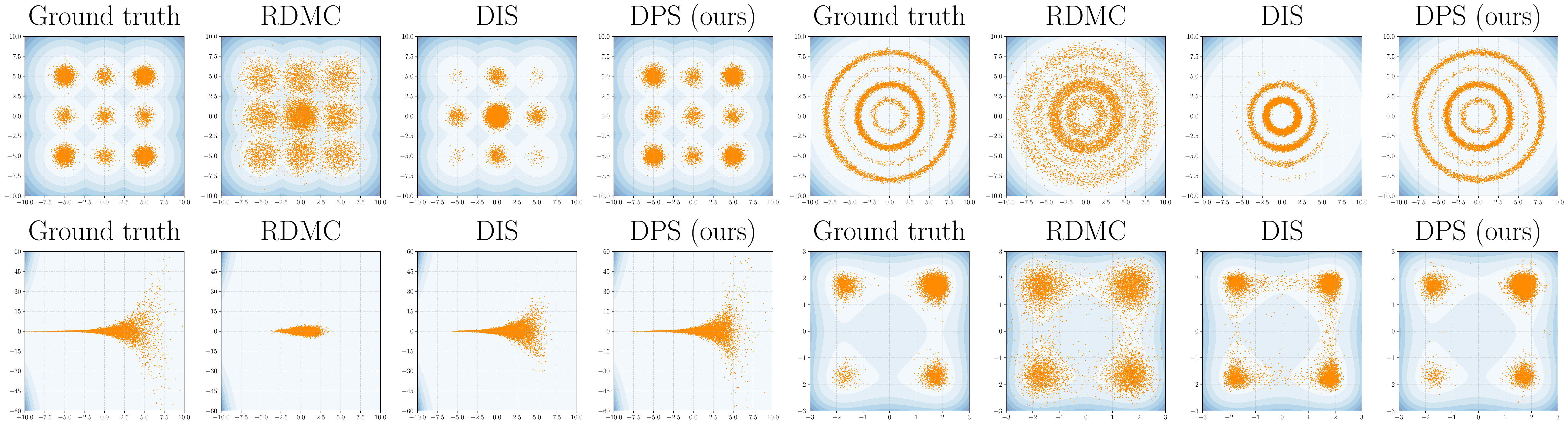}
    \vspace{-0.4cm}
    \caption{Sampling performance of different methods for 9-Gaussians ($d=2$), Rings ($d=2$), Funnel ($d=10$), and Double-well ($d=30$).}
    \label{fig:samples visualization of different methods}
\end{figure}

\begin{table}[t]
    \caption{KL divergence ($\downarrow$) to the ground truth obtained by different methods. Bold font indicates the best results. We use the KL divergence of the first two dimensions for Funnel ($d=10$) and the KL divergence of the first five dimensions for Double-well ($d=30$). All the KL divergence is computed by the ITE package \citep{szabo2014information}.}
    \label{table:KL}
    \centering
    \scalebox{0.8}{
    \begin{tabular}{c  c  c  c  c  c  c}\toprule
          Target
         & LMC & RDMC & SLIPS & PIS & DIS & DPS (ours)
         \\
         \midrule
         9-Gaussians      &
        $1.6568_{\pm 0.0189}$   &
          $1.0844_{\pm 0.0132}$   &
           $0.0901_{\pm 0.0071}$  &
           $2.0042_{\pm 0.0203}$  &
            $2.2758_{\pm 0.0240}$ &
         $\mathbf{0.0131_{\pm 0.0093}}$
         \\ 
         Rings   &
            $2.4754_{\pm 0.0302}$ &
            $0.7487_{\pm 0.0073}$ &
            $0.4127_{\pm 0.0144}$ &
            $2.6985_{\pm 0.0290}$ &
             $2.3433_{\pm 0.0275}$ &
           $\mathbf{0.0176_{\pm 0.0059}}$
         \\ 
         Funnel    &
           $0.1908_{\pm 0.0156}$  &
            $2.0250_{\pm 0.0364}$ &
            $0.1971_{\pm 0.0133}$ &
            $0.4377_{\pm 0.0199}$ &
             $0.2383_{\pm 0.0169}$ &
          $\mathbf{0.0846_{\pm 0.0122}}$
         \\ 
         Double-well  &
            $0.1915_{\pm 0.0122}$ &
            $1.5735_{\pm 0.0162}$ &
            $0.4840_{\pm 0.0145}$ &
            $0.0969_{\pm 0.0114}$ &
            $0.6796_{\pm 0.0139}$ &
           $\mathbf{0.0273_{\pm 0.0113}}$ 
        \\ 
        \bottomrule
    \end{tabular}
    }
\end{table}

\begin{table}[h!]
    \caption{$L^2$ error ($\downarrow$) of the mixing proportions estimation when sampling multimodal target distributions using different methods. Bold font indicates the best results. All the estimation is computed with 1,000 samples.}
    \label{table:weights error}
    \centering
    \scalebox{0.8}{
    \begin{tabular}{c  c  c  c  c  c  c}\toprule
          Target
           & LMC & RDMC & SLIPS & PIS & DIS & DPS (ours)
         \\
         \midrule
         9-Gaussians    &
          $0.5199_{\pm 0.0159}$   &
            $0.1313_{\pm 0.0099}$ &
           $0.0018_{\pm 0.0005}$  &
           $0.4893_{\pm 0.0110}$  &
            $0.7268_{\pm 0.0146}$ &
          $\mathbf{0.0006_{\pm 0.0003}}$ 
         \\  
         Rings   &
           $0.6005_{\pm 0.0251}$  &
            $0.0537_{\pm 0.0035}$ &
            $0.2471_{\pm 0.0144}$ &
            $0.8016_{\pm 0.0194}$ &
             $0.5233_{\pm 0.0194}$ &
          $\mathbf{0.0006_{\pm 0.0006}}$ 
         \\ 
         Double-well   &
            $0.0673_{\pm 0.0082}$ &
            $0.2154_{\pm 0.0075}$ &
            $0.1645_{\pm 0.0113}$ &
            $0.0044_{\pm 0.0011}$ &
            $0.0684_{\pm 0.0035}$ &
           $\mathbf{0.0004_{\pm 0.0002}}$
        \\ 
        \bottomrule
    \end{tabular}
    }
    \vspace{-0.3cm}
\end{table}

\subsection{Sample Quality}

In this section, we compare DPS with the aforementioned baseline methods on various target distributions.
We use KL divergence to evaluate the quality of samples provided by different methods in low dimensional problems (9-Gaussians, Rings), and use the projected KL divergence instead for Funnel and Double-well that are problems with relatively higher dimensions. 
The results are reported in Table \ref{table:KL}.
Figure \ref{fig:samples visualization of different methods} visualizes the samples from different methods.
We clearly see that DPS provides the best approximation accuracy and sample quality among all methods.
Although we use LMC to generate collocation points, DPS greatly outperforms LMC, indicating the power of diffusion-based sampling methods with learned score functions.

For multimodal distributions, we estimate the mixing proportions for different modes using samples generated by different methods, and evaluate the estimation accuracy in terms of $L^2$ error to the true weights.
The results are shown in Table \ref{table:weights error}. 
It is clear that DPS provides accurate weights estimation while other baselines tend to struggle to learn the weights.

\subsection{Ablation Study}\label{sec:ablation}
In this section, we compare the performance of score estimation between solving the score FPE and the log-density FPE, and investigate the effect of regularization in DPS.

We first solve the corresponding score FPE and log-density FPE for a MoG with two distant modes: $\pi^M=0.2\mathcal{N}((-5,-5)^{\prime}, \bm{I}_2)+0.8\mathcal{N}((5,5)^{\prime}, \bm{I}_2)$. 
The left plot in Figure \ref{fig:ablation on failure of score fpe and adaptive sampling} show the PINN residual loss and the score estimation error as functions of the number of iterations.
We see that for the score FPE, the score approximation error decreases rapidly at first but quickly levels off, while the PINN residual loss continues to decrease with more iterations.
In contrast, when solving the log-density FPE, the PINN residual loss and the score approximation error decrease consistently, resulting in more accurate score approximation overall.
The middle and right plots in Figure \ref{fig:ablation on failure of score fpe and adaptive sampling} display the histogram based on samples generated from the reverse SDE using the score estimates from both methods, together with the true marginal density. We observe that the score FPE-based method fails to identify the correct mixing proportions, whereas the log-density FPE-based method successfully recovers the correct weights.

Next, we solve the log-density PFE with different regularization coefficients $\lambda$ on the Funnel target. 
Figure \ref{fig:ablation on regularization} (left) shows the KL divergence for various $\lambda$ as a function of the number of iterations.
We see that, compared to the non-regularized case ($\lambda =0$), both the convergence speed and overall approximation accuracy have been greatly improved when regularization is applied.
The middle and right plots in Figure \ref{fig:ablation on regularization} show the samples generated from DPS with $\lambda=1$ and $\lambda=0$ respectively.
With regularization, DPS provides a better fit to the target distribution, more accurately capturing the thickness in the tails. This indicates that regularization could be beneficial for heavy-tail distributions.

\begin{figure}[t]
    \centering
    \includegraphics[width=\linewidth]{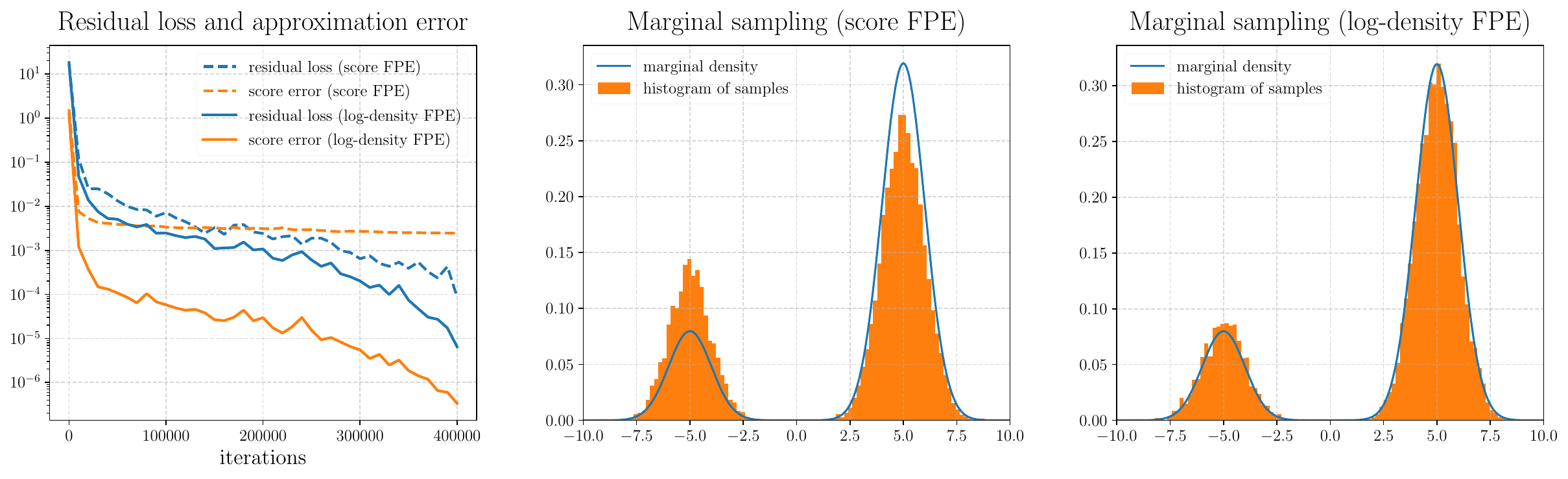}
    \vspace{-0.5cm}
    \caption{\textbf{Left}: PINN residual loss and score approximation error during solving score/log-density FPE; \textbf{Middle/Right}: Marginals of the first dimension from DPS by solving score/log-density FPE for MoG with two modes.}
    \label{fig:ablation on failure of score fpe and adaptive sampling}
\end{figure}

\begin{figure}[t]
    \centering
    \includegraphics[width=\linewidth]{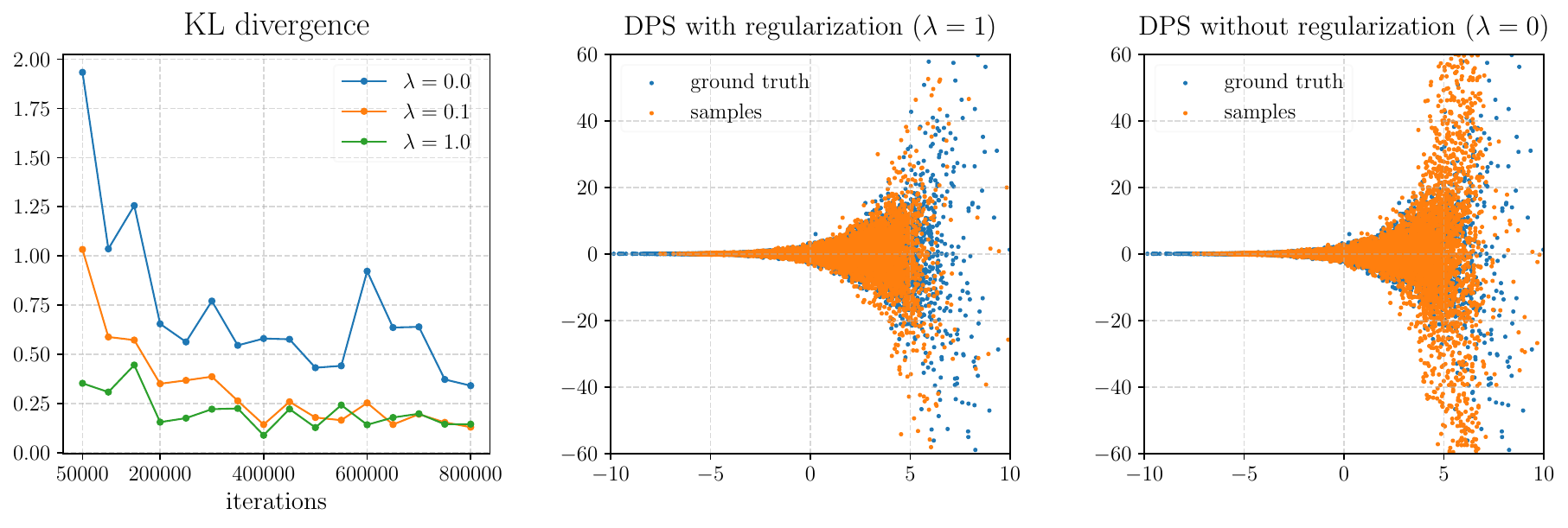}
    \vspace{-0.5cm}
    \caption{\textbf{Left}:  KL divergence to the ground truth during solving log-density FPE with different regularization for Funnel. \textbf{Middle/Right}: Sampling performance of DPS with/without regularization for Funnel.}
    \label{fig:ablation on regularization}
\end{figure}

\section{Conclusion}

In this work, we proposed \emph{Diffusion-PINN Sampler} (DPS), a novel method that leverages Physics-Informed Neural Networks (PINN) and diffusion models for accurate sampling from complex target distributions.
By solving the log-density FPE that governs the evolution of the log-density of the underlying SDE marginals via PINN, DPS demonstrates accurate sampling capabilities even for distributions with multiple modes or heavy tails, and it excels in identifying mixing proportions when the target features isolated modes.
The control of log-density estimation error via PINN residual loss ensures convergence guarantees to the target distribution, building upon established results for score-based diffusion models.
We demonstrated the effectiveness of our approach on multiple numerical examples.
Limitations are discussed in Appendix \ref{appendix: limitations}.

\section*{Acknowledgements}
This work was supported by National Natural Science Foundation of China (grant no. 12201014 and grant no. 12292983).
The research of Cheng Zhang was support in part by National Engineering Laboratory for Big Data Analysis and Applications, the Key Laboratory of Mathematics and Its Applications (LMAM) and the Key Laboratory of Mathematical Economics and Quantitative Finance (LMEQF) of Peking University.
Zhekun Shi is partially supported by the elite undergraduate training program of School of Mathematical Sciences in Peking University.

\bibliographystyle{nameyear}
\bibliography{bibliography}% common bib file

\begin{thebibliography}{40}
\providecommand{\natexlab}[1]{#1}
\providecommand{\url}[1]{\texttt{#1}}
\expandafter\ifx\csname urlstyle\endcsname\relax
  \providecommand{\doi}[1]{doi: #1}\else
  \providecommand{\doi}{doi: \begingroup \urlstyle{rm}\Url}\fi

\bibitem[Arjovsky et~al.(2017)Arjovsky, Chintala, and Bottou]{pmlr-v70-arjovsky17a}
Martin Arjovsky, Soumith Chintala, and L{\'e}on Bottou.
\newblock {W}asserstein generative adversarial networks.
\newblock In Doina Precup and Yee~Whye Teh (eds.), \emph{Proceedings of the 34th International Conference on Machine Learning}, volume~70 of \emph{Proceedings of Machine Learning Research}, pp.\  214--223. PMLR, 06--11 Aug 2017.

\bibitem[Benton et~al.(2023)Benton, De~Bortoli, Doucet, and Deligiannidis]{benton2023linear}
Joe Benton, Valentin De~Bortoli, Arnaud Doucet, and George Deligiannidis.
\newblock Linear convergence bounds for diffusion models via stochastic localization.
\newblock \emph{arXiv preprint arXiv:2308.03686}, 2023.

\bibitem[Berner et~al.(2022)Berner, Richter, and Ullrich]{berner2022optimal}
Julius Berner, Lorenz Richter, and Karen Ullrich.
\newblock An optimal control perspective on diffusion-based generative modeling.
\newblock \emph{arXiv preprint arXiv:2211.01364}, 2022.

\bibitem[Chen et~al.(2023{\natexlab{a}})Chen, Lee, and Lu]{chen2023improved}
Hongrui Chen, Holden Lee, and Jianfeng Lu.
\newblock Improved analysis of score-based generative modeling: User-friendly bounds under minimal smoothness assumptions.
\newblock In \emph{International Conference on Machine Learning}, pp.\  4735--4763. PMLR, 2023{\natexlab{a}}.

\bibitem[Chen et~al.(2023{\natexlab{b}})Chen, Chewi, Li, Li, Salim, and Zhang]{chen2023sampling}
Sitan Chen, Sinho Chewi, Jerry Li, Yuanzhi Li, Adil Salim, and Anru Zhang.
\newblock Sampling is as easy as learning the score: theory for diffusion models with minimal data assumptions.
\newblock In \emph{The Eleventh International Conference on Learning Representations}, 2023{\natexlab{b}}.
\newblock URL \url{https://openreview.net/forum?id=zyLVMgsZ0U_}.

\bibitem[De~Bortoli et~al.(2021)De~Bortoli, Thornton, Heng, and Doucet]{NEURIPS2021_940392f5}
Valentin De~Bortoli, James Thornton, Jeremy Heng, and Arnaud Doucet.
\newblock Diffusion schr\"{o}dinger bridge with applications to score-based generative modeling.
\newblock In M.~Ranzato, A.~Beygelzimer, Y.~Dauphin, P.S. Liang, and J.~Wortman Vaughan (eds.), \emph{Advances in Neural Information Processing Systems}, volume~34, pp.\  17695--17709. Curran Associates, Inc., 2021.
\newblock URL \url{https://proceedings.neurips.cc/paper_files/paper/2021/file/940392f5f32a7ade1cc201767cf83e31-Paper.pdf}.

\bibitem[Deveney et~al.(2023)Deveney, Stanczuk, Kreusser, Budd, and Sch{\"o}nlieb]{deveney2023closing}
Teo Deveney, Jan Stanczuk, Lisa~Maria Kreusser, Chris Budd, and Carola-Bibiane Sch{\"o}nlieb.
\newblock Closing the ode-sde gap in score-based diffusion models through the fokker-planck equation.
\newblock \emph{arXiv preprint arXiv:2311.15996}, 2023.

\bibitem[Fan et~al.(2024)Fan, Zhou, Tian, and Qian]{fanpath}
Mingzhou Fan, Ruida Zhou, Chao Tian, and Xiaoning Qian.
\newblock Path-guided particle-based sampling.
\newblock In \emph{Forty-first International Conference on Machine Learning}, 2024.

\bibitem[Grenioux et~al.(2024)Grenioux, Noble, Gabri{\'e}, and Durmus]{grenioux2024stochastic}
Louis Grenioux, Maxence Noble, Marylou Gabri{\'e}, and Alain~Oliviero Durmus.
\newblock Stochastic localization via iterative posterior sampling.
\newblock \emph{arXiv preprint arXiv:2402.10758}, 2024.

\bibitem[Ho et~al.(2020)Ho, Jain, and Abbeel]{ho2020denoising}
Jonathan Ho, Ajay Jain, and Pieter Abbeel.
\newblock Denoising diffusion probabilistic models.
\newblock \emph{Advances in neural information processing systems}, 33:\penalty0 6840--6851, 2020.

\bibitem[Hoffman et~al.(2014)Hoffman, Gelman, et~al.]{hoffman2014no}
Matthew~D Hoffman, Andrew Gelman, et~al.
\newblock The no-u-turn sampler: adaptively setting path lengths in hamiltonian monte carlo.
\newblock \emph{J. Mach. Learn. Res.}, 15\penalty0 (1):\penalty0 1593--1623, 2014.

\bibitem[Hu et~al.(2024{\natexlab{a}})Hu, Shi, Karniadakis, and Kawaguchi]{hu2024hutchinson}
Zheyuan Hu, Zekun Shi, George~Em Karniadakis, and Kenji Kawaguchi.
\newblock Hutchinson trace estimation for high-dimensional and high-order physics-informed neural networks.
\newblock \emph{Computer Methods in Applied Mechanics and Engineering}, 424:\penalty0 116883, 2024{\natexlab{a}}.

\bibitem[Hu et~al.(2024{\natexlab{b}})Hu, Shukla, Karniadakis, and Kawaguchi]{hu2024tackling}
Zheyuan Hu, Khemraj Shukla, George~Em Karniadakis, and Kenji Kawaguchi.
\newblock Tackling the curse of dimensionality with physics-informed neural networks.
\newblock \emph{Neural Networks}, pp.\  106369, 2024{\natexlab{b}}.

\bibitem[Huang et~al.(2023)Huang, Dong, Yifan, Ma, and Zhang]{huang2023reverse}
Xunpeng Huang, Hanze Dong, HAO Yifan, Yian Ma, and Tong Zhang.
\newblock Reverse diffusion monte carlo.
\newblock In \emph{The Twelfth International Conference on Learning Representations}, 2023.

\bibitem[Hyv\"{a}rinen \& Dayan(2005)Hyv\"{a}rinen and Dayan]{scorematching}
A.~Hyv\"{a}rinen and P.~Dayan.
\newblock Estimation of non-normalized statistical models by score matching.
\newblock \emph{Journal of Machine Learning Research}, 6\penalty0 (4), 2005.

\bibitem[Kingma et~al.(2021)Kingma, Salimans, Poole, and Ho]{VDM}
Diederik Kingma, Tim Salimans, Ben Poole, and Jonathan Ho.
\newblock Variational diffusion models.
\newblock In M.~Ranzato, A.~Beygelzimer, Y.~Dauphin, P.S. Liang, and J.~Wortman Vaughan (eds.), \emph{Advances in Neural Information Processing Systems}, volume~34, pp.\  21696--21707. Curran Associates, Inc., 2021.
\newblock URL \url{https://proceedings.neurips.cc/paper_files/paper/2021/file/b578f2a52a0229873fefc2a4b06377fa-Paper.pdf}.

\bibitem[Kingma \& Ba(2014)Kingma and Ba]{kingma2014adam}
Diederik~P Kingma and Jimmy Ba.
\newblock Adam: A method for stochastic optimization.
\newblock \emph{arXiv preprint arXiv:1412.6980}, 2014.

\bibitem[Lai et~al.(2023)Lai, Takida, Murata, Uesaka, Mitsufuji, and Ermon]{lai2023fp}
Chieh-Hsin Lai, Yuhta Takida, Naoki Murata, Toshimitsu Uesaka, Yuki Mitsufuji, and Stefano Ermon.
\newblock Fp-diffusion: Improving score-based diffusion models by enforcing the underlying score fokker-planck equation.
\newblock In \emph{International Conference on Machine Learning}, pp.\  18365--18398. PMLR, 2023.

\bibitem[Liu \& Liu(2001)Liu and Liu]{liu2001}
Jun~S Liu and Jun~S Liu.
\newblock \emph{Monte Carlo strategies in scientific computing}, volume~75.
\newblock Springer, 2001.

\bibitem[M{\'a}t{\'e} \& Fleuret(2023)M{\'a}t{\'e} and Fleuret]{mate2023learning}
B{\'a}lint M{\'a}t{\'e} and Fran{\c{c}}ois Fleuret.
\newblock Learning interpolations between boltzmann densities.
\newblock \emph{arXiv preprint arXiv:2301.07388}, 2023.

\bibitem[Neal(2003)]{neal2003slice}
Radford~M Neal.
\newblock Slice sampling.
\newblock \emph{The annals of statistics}, 31\penalty0 (3):\penalty0 705--767, 2003.

\bibitem[Nichol \& Dhariwal(2021)Nichol and Dhariwal]{pmlr-v139-nichol21a}
Alexander~Quinn Nichol and Prafulla Dhariwal.
\newblock Improved denoising diffusion probabilistic models.
\newblock In Marina Meila and Tong Zhang (eds.), \emph{Proceedings of the 38th International Conference on Machine Learning}, volume 139 of \emph{Proceedings of Machine Learning Research}, pp.\  8162--8171. PMLR, 18--24 Jul 2021.
\newblock URL \url{https://proceedings.mlr.press/v139/nichol21a.html}.

\bibitem[{\O}ksendal(2003)]{oksendal2003stochastic}
Bernt {\O}ksendal.
\newblock \emph{Stochastic differential equations}.
\newblock Springer, 2003.

\bibitem[Raissi et~al.(2019)Raissi, Perdikaris, and Karniadakis]{raissi2019physics}
Maziar Raissi, Paris Perdikaris, and George~E Karniadakis.
\newblock Physics-informed neural networks: A deep learning framework for solving forward and inverse problems involving nonlinear partial differential equations.
\newblock \emph{Journal of Computational physics}, 378:\penalty0 686--707, 2019.

\bibitem[Song et~al.(2020{\natexlab{a}})Song, Meng, and Ermon]{song2020denoising}
Jiaming Song, Chenlin Meng, and Stefano Ermon.
\newblock Denoising diffusion implicit models.
\newblock \emph{arXiv preprint arXiv:2010.02502}, 2020{\natexlab{a}}.

\bibitem[Song et~al.(2020{\natexlab{b}})Song, Sohl-Dickstein, Kingma, Kumar, Ermon, and Poole]{song2020score}
Yang Song, Jascha Sohl-Dickstein, Diederik~P Kingma, Abhishek Kumar, Stefano Ermon, and Ben Poole.
\newblock Score-based generative modeling through stochastic differential equations.
\newblock \emph{arXiv preprint arXiv:2011.13456}, 2020{\natexlab{b}}.

\bibitem[Song et~al.(2023)Song, Dhariwal, Chen, and Sutskever]{pmlr-v202-song23a}
Yang Song, Prafulla Dhariwal, Mark Chen, and Ilya Sutskever.
\newblock Consistency models.
\newblock In Andreas Krause, Emma Brunskill, Kyunghyun Cho, Barbara Engelhardt, Sivan Sabato, and Jonathan Scarlett (eds.), \emph{Proceedings of the 40th International Conference on Machine Learning}, volume 202 of \emph{Proceedings of Machine Learning Research}, pp.\  32211--32252. PMLR, 23--29 Jul 2023.
\newblock URL \url{https://proceedings.mlr.press/v202/song23a.html}.

\bibitem[Stoltz et~al.(2010)Stoltz, Rousset, et~al.]{stoltz2010}
Gabriel Stoltz, Mathias Rousset, et~al.
\newblock \emph{Free energy computations: A mathematical perspective}.
\newblock World Scientific, 2010.

\bibitem[Sun et~al.(2024)Sun, Berner, Azizzadenesheli, and Anandkumar]{sun2024physics}
Jingtong Sun, Julius Berner, Kamyar Azizzadenesheli, and Anima Anandkumar.
\newblock Physics-informed neural networks for sampling.
\newblock In \emph{ICLR 2024 Workshop on AI4DifferentialEquations In Science}, 2024.

\bibitem[Szab{\'o}(2014)]{szabo2014information}
Zolt{\'a}n Szab{\'o}.
\newblock Information theoretical estimators toolbox.
\newblock \emph{The Journal of Machine Learning Research}, 15\penalty0 (1):\penalty0 283--287, 2014.

\bibitem[Tian et~al.(2024)Tian, Panda, and Lin]{tian2024liouville}
Yifeng Tian, Nishant Panda, and Yen~Ting Lin.
\newblock Liouville flow importance sampler.
\newblock \emph{arXiv preprint arXiv:2405.06672}, 2024.

\bibitem[Tzen \& Raginsky(2019)Tzen and Raginsky]{pmlr-v99-tzen19a}
Belinda Tzen and Maxim Raginsky.
\newblock Theoretical guarantees for sampling and inference in generative models with latent diffusions.
\newblock In Alina Beygelzimer and Daniel Hsu (eds.), \emph{Proceedings of the Thirty-Second Conference on Learning Theory}, volume~99 of \emph{Proceedings of Machine Learning Research}, pp.\  3084--3114. PMLR, 25--28 Jun 2019.
\newblock URL \url{https://proceedings.mlr.press/v99/tzen19a.html}.

\bibitem[Vargas et~al.(2023{\natexlab{a}})Vargas, Grathwohl, and Doucet]{vargas2023denoising}
Francisco Vargas, Will Grathwohl, and Arnaud Doucet.
\newblock Denoising diffusion samplers.
\newblock \emph{arXiv preprint arXiv:2302.13834}, 2023{\natexlab{a}}.

\bibitem[Vargas et~al.(2023{\natexlab{b}})Vargas, Ovsianas, Fernandes, Girolami, Lawrence, and N\"{u}sken]{vargas2023}
Francisco Vargas, Andrius Ovsianas, David Fernandes, Mark Girolami, Neil~D Lawrence, and Nikolas N\"{u}sken.
\newblock Bayesian learning via neural schr\"{o}dinger-f\"{o}llmer flows.
\newblock \emph{Statistics and Computing}, 33\penalty0 (1):\penalty0 1--22, 2023{\natexlab{b}}.

\bibitem[Vaswani et~al.(2017)Vaswani, Shazeer, Parmar, Uszkoreit, Jones, Gomez, Kaiser, and Polosukhin]{vaswani2017}
Ashish Vaswani, Noam Shazeer, Niki Parmar, Jakob Uszkoreit, Llion Jones, Aidan~N Gomez, {\L}ukasz Kaiser, and Illia Polosukhin.
\newblock Attention is all you need.
\newblock \emph{Advances in neural information processing systems}, 30, 2017.

\bibitem[Vincent(2011)]{vincent2011connection}
Pascal Vincent.
\newblock A connection between score matching and denoising autoencoders.
\newblock \emph{Neural computation}, 23\penalty0 (7):\penalty0 1661--1674, 2011.

\bibitem[Wang et~al.(2022)Wang, Li, He, and Wang]{wang20222}
Chuwei Wang, Shanda Li, Di~He, and Liwei Wang.
\newblock Is $l^2$ physics informed loss always suitable for training physics informed neural network?
\newblock \emph{Advances in Neural Information Processing Systems}, 35:\penalty0 8278--8290, 2022.

\bibitem[Wenliang(2020)]{wenliang2020}
L.~K. Wenliang.
\newblock Blindness of score-based methods to isolated components and mixing proportions.
\newblock \emph{arXiv preprint arXiv:2008.10087}, 2020.

\bibitem[Zhang et~al.(2022)Zhang, Key, Hayes, Barber, Paige, and Briol]{zhang2022towards}
Mingtian Zhang, Oscar Key, Peter Hayes, David Barber, Brooks Paige, and Francois-Xavier Briol.
\newblock Towards healing the blindness of score matching.
\newblock In \emph{NeurIPS 2022 Workshop on Score-Based Methods}, 2022.
\newblock URL \url{https://openreview.net/forum?id=Ij8G_k0iuL}.

\bibitem[Zhang \& Chen(2021)Zhang and Chen]{zhang2021path}
Qinsheng Zhang and Yongxin Chen.
\newblock Path integral sampler: a stochastic control approach for sampling.
\newblock \emph{arXiv preprint arXiv:2111.15141}, 2021.

\end{thebibliography}

%%%%%%%%%%%%%%%%%%%%%%%%%%%%%%%%%%%%%%%%%%%%%%%%%%%%%%%%%%%%

\appendix

\section{Proofs}

\subsection{Proof of Theorem \ref{thm: fp}}
\label{subsection: proof of log-density/score fpe}

\begin{proof}[Proof of Theorem \ref{thm: fp}]
Recall that $p_t(\bmx)$ denotes the marginal density of $\bmx_t$ following the forward process \eqref{eq: diffusion forward},
and satisfies
\begin{equation}
\label{eq: proof-thm-1-fp}
    \pt p_t(\bmx) = \frac{1}{2}g^2(t)\Lap p_t(\bmx)-\diver\left[
    \bmf(\bmx, t)p_t(\bmx)\right].
\end{equation}
Therefore, the log-density $u_t(\bmx):=\log p_t(\bmx)$ satisfies 
\begin{equation}
\label{eq: proof-thm-1-log-fp}
\pt u_t(\bmx)= \frac{\pt p_t(\bmx)}{p_t(\bmx)}
=\frac{1}{2}g^2(t)\frac{\Lap p_t(\bmx)}{p_t(\bmx)}
-\frac{\diver\left[\bmf(\bmx, t)p_t(\bmx)\right]}{p_t(\bmx)}.
\end{equation}
Note that we have the identities
\begin{equation}
\label{eq: proof-thm-1-trans}
\begin{aligned}
    & \Lap p_t(\bmx) = \diver\left[p_t(\bmx)\nx u_t(\bmx)\right]
    =\nx p_t(\bmx)\cdot\nx u_t(\bmx)
    +p_t(\bmx)\Lap u_t(\bmx),\\
    & \diver\left[\bmf(\bmx, t)p_t(\bmx)\right]=\nx p_t(\bmx)\cdot \bmf(\bmx, t)+p_t(\bmx)\left[\diver\bmf(\bmx, t)\right].
\end{aligned}
\end{equation}
Plug (\ref{eq: proof-thm-1-trans}) into (\ref{eq: proof-thm-1-log-fp}), we have \begin{equation*}
\pt u_t(\bmx)
=  \frac{1}{2}g^2(t)\Lap u_t(\bmx) 
+ \frac{1}{2}g^2(t)\left\|\nx u_t(\bmx)\right\|^2
- \bmf(\bmx, t)\cdot\nx u_t(\bmx) 
- \diver \bmf(\bmx, t).
\end{equation*}
Since $\log p_t(\bmx)$ is sufficiently smooth, we can swap the order of differentiation and get \[\pt \bms_t(\bmx)=\pt\nx u_t(\bmx)=\nx\pt u_t(\bmx).\] 
Hence, the theorem is proved.
\end{proof}

\subsection{Omitted Proof in Example \ref{example: mog failure}}
\label{subsection: appendix mog failure}

\paragraph{Notations.} 
For two probability measures $\nu_1$ and $\nu_2$ in $\bR^d$, we define the $L^2(p)$ error of their scores as $\mathrm{SE}_{p}(\nu_1\|\nu_2):=\bE_{\bmx\sim p}[\|\nx\log\nu_1(\bmx)-\nx\log\nu_2(\bmx)\|^2]$ where $p$ also denotes a probability measure. 
Note that if we choose $p=\nu_1$, we have $\mathrm{SE}_{\nu_1}(\nu_1\|\nu_2)=F(\nu_1,\nu_2)$ where $F(\nu_1,\nu_2)$ denotes the Fisher divergence between $\nu_1$ and $\nu_2$. 
For any $\bma\in\bR^d$, we denote $\gamma_{\bma}(\bmx):=\exp(-\|\bmx-\bma\|^2/2)$. 
For simplify, we denote $\bE_{\bmx\sim\cN(\bma,I_d)}[\cdot]$ by $\bE_{\gamma_{\bma}}[\cdot]$. 
Thus the probability density of $\cN(\bma,I_d)$ is $p(\bmx)=\gamma_{\bma}(\bmx)/(\sqrt{2\pi})^d$. 
For the MoG 
$\pi^M=w_1\cN(\bma_1,I_d)+w_2\cN(\bma_2,I_d)$, the score is given by
\begin{equation}\label{eq: score of mog}
    \nx\log \pi^M(\bmx) = \frac{w_1\bma_1\gamma_{\bma_1}(\bmx)+w_2\bma_2\gamma_{\bma_2}(\bmx)}{w_1\gamma_{\bma_1}(\bmx)+w_2\gamma_{\bma_2}(\bmx)} - \bmx.
\end{equation}

Then we show our general results in Theorem \ref{thm: mog failure formal} where we state a lower bound of $\mathrm{KL}(\pi^{M}\|\tilde{\pi}^M)$ and an upper bound of $\mathrm{SE}_p(\pi^{M}\|\tilde{\pi}^M)$.
\begin{theorem}
    \label{thm: mog failure formal}
    Consider two MoGs in $\bR^d$: $\pi^M=w_1\mathcal{N}(\bma_1,I_d)+w_2\mathcal{N}(\bma_2,I_d)$, $\tilde{\pi}^M=\tilde{w}_1\mathcal{N}(\bma_1, I_d)+\tilde{w}_2\mathcal{N}(\bma_2, I_d)$ where $\bma_1,\bma_2\in\mathbb{R}^d$, $w_1,w_2,\tilde{w}_1,\tilde{w}_2>0$, $w_1+w_2=1$ and $\tilde{w}_1+\tilde{w}_2=1$. Then $\operatorname{KL}\left(\pi^M\|\tilde{\pi}^M\right)$ is lower bounded by
    \begin{equation}
        \label{lower KL bound: thm mog failure formal}
        \begin{aligned}
        \operatorname{KL}\left(\pi^M\|\tilde{\pi}^M\right)\ges 
        &\  w_1\left(\log w_1 - \log\left(\tilde{w}_1+\exp\left(-\frac{\left\|\bma_1-\bma_2\right\|^2}{4}\right)\right)\right) \\
        &\ +w_2\left(\log w_2 - \log\left(\tilde{w}_2+\exp\left(-\frac{\left\|\bma_1-\bma_2\right\|^2}{4}\right)\right)\right) \\
        & \ - \left(\log 4+d\right)\exp\left(\frac{d}{2}\log 2-\frac{\left\|\bma_1-\bma_2\right\|^2}{64}\right).
    \end{aligned}
    \end{equation}
    Let $p(\bmx)$ denotes any distribution that is absolutely continuous w.r.t. $\mu$, then $\mathrm{SE}_p(\pi^M\|\tilde{\pi}^M)$ is upper bounded by 
    \begin{equation}
        \label{upper SE bound: thm mog failure formal}
        \begin{aligned}
            \mathrm{SE}_p(\pi^M\|\tilde{\pi}^M)\les 
            & \ 2\exp\left(-\frac{\left\|\bma_1-\bma_2\right\|^2}{2}\right)\left[\frac{w_2^2}{w_1^2}+\frac{\tilde{w}_2^2}{\tilde{w}_1^2}+\frac{w_1^2}{w_2^2}+\frac{\tilde{w}_1^2}{\tilde{w}_2^2}\right]\left\|\bma_1-\bma_2\right\|^2 \\
            & \ + 8\left[\left\|\bma_1\right\|^2+\left\|\bma_2\right\|^2\right]\int_{\Omega_3}p(\bmx)\mathrm{~d}\bmx,
        \end{aligned}
    \end{equation}
    where $\Omega_1=\left\{\bmx\in\mathbb{R}^d:\left\|\bmx-\bma_1\right\|\les \frac{\left\|\bma_1-\bma_2\right\|}{4}\right\}$, $\Omega_2=\left\{\bmx\in\mathbb{R}^d:\left\|\bmx-\bma_2\right\|\les \frac{\left\|\bma_1-\bma_2\right\|}{4}\right\}$, and $\Omega_3=\Omega_1^c\bigcap\Omega_2^c$.
\end{theorem}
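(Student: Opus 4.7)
The plan is to handle the two inequalities separately by partitioning $\mathbb{R}^d$ into the three regions $\Omega_1,\Omega_2,\Omega_3$ already defined in the statement, so that on $\Omega_i$ ($i=1,2$) the Gaussian component $\gamma_{\bma_i}$ dominates and on $\Omega_3$ everything is exponentially small. The key geometric observation is that for $\bmx\in\Omega_1$ the triangle inequality gives $\|\bmx-\bma_2\|\ges 3\|\bma_1-\bma_2\|/4$, hence $\gamma_{\bma_2}(\bmx)/\gamma_{\bma_1}(\bmx)=\exp\bigl(-(\|\bmx-\bma_2\|^2-\|\bmx-\bma_1\|^2)/2\bigr)\les \exp(-\|\bma_1-\bma_2\|^2/4)$, and symmetrically on $\Omega_2$. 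The key probabilistic tool is the Chernoff/chi-squared tail bound: for $\bmZ\sim\mathcal{N}(\bm{0},I_d)$, taking the moment-generating function $\mathbb{E}[\exp(\|\bmZ\|^2/4)]=2^{d/2}$ and Markov's inequality yields $P(\|\bmZ\|\ges r)\les 2^{d/2}\exp(-r^2/16)$, which applied at $r=\|\bma_1-\bma_2\|/4$ produces the factor $\exp\bigl(\tfrac{d}{2}\log 2-\tfrac{\|\bma_1-\bma_2\|^2}{64}\bigr)$ appearing in the final bound.

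\textbf{KL lower bound.} I would split $\operatorname{KL}(\pi^M\|\tilde{\pi}^M)$ as $\int_{\Omega_1}+\int_{\Omega_2}+\int_{\Omega_3}$. On $\Omega_1$, using the geometric observation above, dropping the nonnegative term $w_2\gamma_{\bma_2}$ in the numerator gives
\[
\frac{\pi^M(\bmx)}{\tilde{\pi}^M(\bmx)}\ges \frac{w_1}{\tilde{w}_1+\tilde{w}_2\gamma_{\bma_2}/\gamma_{\bma_1}}\ges \frac{w_1}{\tilde{w}_1+\exp(-\|\bma_1-\bma_2\|^2/4)},
\]
and a symmetric bound holds on $\Omega_2$. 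The delicate point is that the resulting constant $\log w_i-\log(\tilde{w}_i+e^{-D^2/4})$ (with $D=\|\bma_1-\bma_2\|$) can have either sign, so I would write $\int_{\Omega_i}\pi^M\,\rmd\bmx=w_i\pm(\text{tail correction})$ using the chi-squared bound and absorb the correction into the final $(\log 4+d)\cdot 2^{d/2}e^{-D^2/64}$ term. On $\Omega_3$, I would use the elementary inequality $\pi^M\log(\pi^M/\tilde{\pi}^M)\ges \pi^M-\tilde{\pi}^M$ so that the contribution is bounded below by $-\int_{\Omega_3}\tilde{\pi}^M\,\rmd\bmx$, which is again controlled by the chi-squared tail. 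Summing the three contributions yields (\ref{lower KL bound: thm mog failure formal}), where the constant $\log 4+d$ comes from collecting $|\log\tilde{w}_i|$- and $|\log w_i|$-type terms together with the dimension-dependent factor that arises when the tail-integral against a log is bounded crudely via $|\log(\cdot)|\les d+\text{const}$ on the relevant domain.

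\textbf{Score $L^2$ upper bound.} I would use the explicit score formula (\ref{eq: score of mog}), which expresses $\nx\log\pi^M(\bmx)=\bar{\bma}(\bmx)-\bmx$ with $\bar{\bma}(\bmx)$ a convex combination of $\bma_1,\bma_2$, and similarly for $\tilde{\pi}^M$; the $-\bmx$ terms cancel, so only the posterior-mean difference matters. On $\Omega_1$, writing $r=\gamma_{\bma_2}/\gamma_{\bma_1}\les e^{-D^2/4}$ and expanding $\bar{\bma}=\bma_1+\tfrac{w_2 r}{w_1+w_2 r}(\bma_2-\bma_1)$, one obtains
\[
\nx\log\pi^M(\bmx)-\nx\log\tilde{\pi}^M(\bmx)=r(\bma_2-\bma_1)\!\left(\tfrac{w_2}{w_1+w_2 r}-\tfrac{\tilde{w}_2}{\tilde{w}_1+\tilde{w}_2 r}\right),
\]
whose squared norm is bounded by $2r^2\|\bma_1-\bma_2\|^2(w_2^2/w_1^2+\tilde{w}_2^2/\tilde{w}_1^2)\les 2e^{-D^2/2}\|\bma_1-\bma_2\|^2(w_2^2/w_1^2+\tilde{w}_2^2/\tilde{w}_1^2)$, giving the first two terms of (\ref{upper SE bound: thm mog failure formal}); $\Omega_2$ is symmetric. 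On $\Omega_3$, since both $\bar{\bma}(\bmx)$ and its tilde counterpart are convex combinations of $\bma_1,\bma_2$, the crude pointwise bound $\|\bar{\bma}-\tilde{\bar{\bma}}\|\les 2\max(\|\bma_1\|,\|\bma_2\|)$ gives squared difference at most $8(\|\bma_1\|^2+\|\bma_2\|^2)$, and integrating against $p$ yields the second term of (\ref{upper SE bound: thm mog failure formal}).

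The main obstacle will be the KL bound: controlling the $\Omega_1,\Omega_2$ pieces requires carefully tracking the sign of $\log w_i-\log(\tilde{w}_i+e^{-D^2/4})$ and pairing the resulting tail remainders with the correct chi-squared estimate so that the correction term collapses to the single expression $(\log 4+d)\exp\bigl(\tfrac{d}{2}\log 2-\tfrac{D^2}{64}\bigr)$. The Fisher/score part is essentially a direct perturbative expansion and a trivial $\Omega_3$ estimate, so no real difficulty is expected there.
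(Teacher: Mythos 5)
Your treatment of the score error $\mathrm{SE}_p(\pi^M\|\tilde\pi^M)$ is essentially the same as the paper's: same partition $\Omega_1,\Omega_2,\Omega_3$, same dominant-component expansion of the posterior mean with $r=\gamma_{\bma_2}/\gamma_{\bma_1}\les e^{-D^2/4}$ on $\Omega_1$, and the same crude bound on $\Omega_3$ (you even get a slightly better constant, $4$ instead of $8$). That part is fine, modulo the typo in the chi-squared tail — the Chernoff bound with $\lambda=1/4$ gives $P(\|\bmZ\|\ges r)\les 2^{d/2}e^{-r^2/4}$, not $e^{-r^2/16}$, which at $r=D/4$ is what produces the $e^{-D^2/64}$.

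The KL lower bound is where there is a genuine gap. You decompose the full integral $\int\pi^M\log(\pi^M/\tilde\pi^M)$ spatially over $\Omega_1,\Omega_2,\Omega_3$, whereas the paper decomposes by Gaussian component first, writing $\operatorname{KL}=w_1\E_{\gamma_{\bma_1}}[\cdot]+w_2\E_{\gamma_{\bma_2}}[\cdot]$, drops $w_2 r$ to get $\E_{\gamma_{\bma_1}}[\log(\cdot)]\ges\log w_1-\E_{\gamma_{\bma_1}}[\log(\tilde w_1+\tilde w_2 r)]$, and only then introduces a three-way spatial split for the remaining expectation — and that split ($\widetilde\Omega_1$, $\widetilde\Omega_2=\widetilde\Omega_1^c\cap\{\tilde w_2 r\les\tilde w_1\}$, $\widetilde\Omega_3$) is \emph{not} the one in the statement: it depends on $\tilde w_1,\tilde w_2$. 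The point of this order of operations is that $w_i\log w_i$ is pulled out exactly, and the quantity that must be bounded above, $\log(\tilde w_1+\tilde w_2 r)$, satisfies clean pointwise bounds $\log(\tilde w_1+e^{-D^2/4})$, $\log 2$, and $\log 2+\|\bmx-\bma_1\|^2/2$ on the three regions, none of which involves $|\log w_i|$ or $|\log\tilde w_i|$. In your spatial-first approach the $\Omega_1$ piece is lower bounded by $c_1\int_{\Omega_1}\pi^M$ with $c_1=\log w_1-\log(\tilde w_1+e^{-D^2/4})$, and to pass to $w_1 c_1-(\text{error})$ you must absorb $c_1\bigl(\int_{\Omega_1}\pi^M-w_1\bigr)$ into the error. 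But $\int_{\Omega_1}\pi^M-w_1=-w_1 P_{\gamma_{\bma_1}}(\Omega_1^c)+w_2 P_{\gamma_{\bma_2}}(\Omega_1)$, and the second term does not vanish as $w_1\to 0$ while $|c_1|\to\infty$; so $c_1\int_{\Omega_1}\pi^M\to-\infty$ even though the theorem's right-hand side stays bounded. Your claim that these ``$|\log w_i|$-type terms'' can be collected into $(\log 4+d)\,2^{d/2}e^{-D^2/64}$ via ``$|\log(\cdot)|\les d+\text{const}$'' is not true — $|\log w_i|$ has no $d$-dependent bound — so the crucial absorption step fails. To repair it you would need to switch to the paper's component-first decomposition (or find another way to keep every weight-dependent logarithm multiplied by its matching prior weight before any tail probability is introduced).
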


\begin{remark}
If we choose $p(\bmx)=\pi^M(\bmx)$ in Theorem \ref{thm: mog failure formal}, the Fisher divergence $F(\pi^M,\tilde{\pi}^M)$ is upper bounded by
\begin{equation}
\label{eq: fisher divergence upper bound}
\begin{aligned}
F(\pi^M,\tilde{\pi}^M)\les 
& \ 2\exp\left(-\frac{\left\|\bma_1-\bma_2\right\|^2}{2}\right)\left[\frac{w_2^2}{w_1^2}+\frac{\tilde{w}_2^2}{\tilde{w}_1^2}+\frac{w_1^2}{w_2^2}+\frac{\tilde{w}_1^2}{\tilde{w}_2^2}\right]\left\|\bma_1-\bma_2\right\|^2 \\
& \ + 8\left[\left\|\bma_1\right\|^2+\left\|\bma_2\right\|^2\right]\exp\left(\frac{d}{2}\log 2 - \frac{\left\|\bma_1-\bma_2\right\|^2}{64}\right),
\end{aligned}
\end{equation}
where we use the following inequality
\[
\begin{aligned}
\int_{\Omega_3}\pi^M(\bmx)\rmd\bmx 
= & w_1\int_{\Omega_3}\frac{1}{(\sqrt{2\pi})^d}\gamma_{\bma_1}(\bmx)\rmd\bmx + w_2\int_{\Omega_3}\frac{1}{(\sqrt{2\pi})^d}\gamma_{\bma_2}(\bmx)\rmd\bmx\\
\les & \ w_1\exp\left(\frac{d}{2}\log 2 - \frac{\left\|\bma_1-\bma_2\right\|^2}{64}\right)
+ w_2\exp\left(\frac{d}{2}\log 2 - \frac{\left\|\bma_1-\bma_2\right\|^2}{64}\right) \\
= & \exp\left(\frac{d}{2}\log 2 - \frac{\left\|\bma_1-\bma_2\right\|^2}{64}\right).
\end{aligned}
\]
Thus Example \ref{example: mog failure} holds naturally.
\end{remark}

\begin{proof}[Proof of Theorem \ref{thm: mog failure formal}]
We first prove \eqref{lower KL bound: thm mog failure formal}.
We can decompose $\operatorname{KL}\left(\pi^M\|\tilde{\pi}^M\right)$ as 
\begin{equation}
\label{eq: proof of mog failure: decompose KL}
\begin{aligned}
& \operatorname{KL}\left(\pi^M\|\tilde{\pi}^M\right) = \mathbb{E}_{\pi^M}\left[\log\left(\frac{\pi^M(\bmx)}{\tilde{\pi}^M(\bmx)}\right)\right]\\
=&\ w_1\mathbb{E}_{\gamma_{\bma_1}}\left[\log\left(\frac{w_1\gamma_{\bma_1}(\bmx)+w_2\gamma_{\bma_2}(\bmx)}{\tilde{w}_1\gamma_{\bma_1}(\bmx)+\tilde{w}_2\gamma_{\bma_2}(\bmx)}\right)\right] + w_2\mathbb{E}_{\gamma_{\bma_2}}\left[\log\left(\frac{w_1\gamma_{\bma_1}(\bmx)+w_2\gamma_{\bma_2}(\bmx)}{\tilde{w}_1\gamma_{\bma_1}(\bmx)+\tilde{w}_2\gamma_{\bma_2}(\bmx)}\right)\right].
\end{aligned}
\end{equation}
Note that
\begin{equation}
\label{eq: proof of mog failure: lower bound step 1}
\begin{aligned}
\mathbb{E}_{\gamma_{\bma_1}}\left[\log\left(\frac{w_1\gamma_{\bma_1}(\bmx)+w_2\gamma_{\bma_2}(\bmx)}{\tilde{w}_1\gamma_{\bma_1}(\bmx)+\tilde{w}_2\gamma_{\bma_2}(\bmx)}\right)\right] = & \mathbb{E}_{\gamma_{\bma_1}}\left[\log\left(\frac{w_1+w_2\gamma_{\bma_2}(\bmx)/\gamma_{\bma_1}(\bmx)}{\tilde{w}_1+\tilde{w}_2\gamma_{\bma_2}(\bmx)/\gamma_{\bma_1}(\bmx)}\right)\right]\\
\ges & \log w_1 - \mathbb{E}_{\gamma_{\bma_1}}\left[\log\left(\tilde{w}_1+\tilde{w}_2\frac{\gamma_{\bma_2}(\bmx)}{\gamma_{\bma_1}(\bmx)}\right)\right].
\end{aligned}
\end{equation}
Let $\widetilde{\Omega}_1=\{\bmx\in\mathbb{R}^d: \|\bmx-\bma_1\|\les \frac{\left\|\bma_1-\bma_2\right\|}{4}\}$, $\widetilde{\Omega}_2=\widetilde{\Omega}_1^c\bigcap\{\bmx\in\mathbb{R}^d:\tilde{w}_2\gamma_{\bma_2}(\bmx)/\gamma_{\bma_1}(\bmx)\les \tilde{w}_1\}$, and $\widetilde{\Omega}_3=(\widetilde{\Omega}_1\bigcup\widetilde{\Omega}_2)^c=\widetilde{\Omega}_1^c\bigcap\widetilde{\Omega}_2^c$. 
Then for any $\bmx\in\widetilde{\Omega}_1$, we have $\left\|\bmx-\bma_2\right\|\ges \left\|\bma_1-\bma_2\right\|-\left\|\bmx-\bma_1\right\|\ges \frac{3}{4}\left\|\bma_1-\bma_2\right\|$, thus $\gamma_{\bma_2}(\bmx)/\gamma_{\bma_1}(\bmx)=\exp\left(\frac{\left\|\bmx-\bma_1\right\|^2-\left\|\bmx-\bma_2\right\|^2}{2}\right)\les \exp\left(-\frac{\left\|\bma_1-\bma_2\right\|^2}{4}\right)$.
Then we have
\begin{equation}
\label{eq: proof of mog failure: Omega_1 KL}
\begin{aligned}
    & \int_{\widetilde{\Omega}_1}\frac{1}{\left(\sqrt{2\pi}\right)^d}\gamma_{\bma_1}\left(\bmx\right)\log\left(\tilde{w}_1+\tilde{w}_2\frac{\gamma_{\bma_2}(\bmx)}{\gamma_{\bma_1}(\bmx)}\right)\rmd\bmx\\
    \les & \ \log\left(\tilde{w}_1+\tilde{w}_2\exp\left(-\frac{\left\|\bma_1-\bma_2\right\|^2}{4}\right)\right)
    \les  \log\left(\tilde{w}_1+\exp\left(-\frac{\left\|\bma_1-\bma_2\right\|^2}{4}\right)\right).
\end{aligned}
\end{equation}
Note that 
\begin{equation}
\label{eq: proof of mog failure: useful note 1}
\begin{aligned}
    \int_{\widetilde{\Omega}_1^c}\frac{1}{\left(\sqrt{2\pi}\right)^d}\gamma_{\bma_1}\left(\bmx\right)\rmd\bmx\les & \int_{\widetilde{\Omega}_1^c}\exp\left(-\frac{\left\|\bma_1-\bma_2\right\|^2}{64}\right)\cdot\frac{1}{\left(\sqrt{2\pi}\right)^d}\exp\left(-\frac{\left\|\bmx-\bma_1\right\|^2}{4}\right)\rmd\bmx\\
    \les & \exp\left(\frac{d}{2}\log 2-\frac{\left\|\bma_1-\bma_2\right\|^2}{64}\right),
\end{aligned}
\end{equation}
and 
\begin{equation}
\label{eq: proof of mog failure: useful note 2}
\begin{aligned}
    & \int_{\widetilde{\Omega}_1^c}\frac{1}{\left(\sqrt{2\pi}\right)^d}\gamma_{\bma_1}\left(\bmx\right)\frac{\left\|\bmx-\bma_1\right\|^2}{2}\rmd\bmx\\
    \les & \int_{\widetilde{\Omega}_1^c}\exp\left(-\frac{\left\|\bma_1-\bma_2\right\|^2}{64}\right)\cdot\frac{1}{\left(\sqrt{2\pi}\right)^d}\exp\left(-\frac{\left\|\bmx-\bma_1\right\|^2}{4}\right)\frac{\left\|\bmx-\bma_1\right\|^2}{2}\rmd\bmx\\
    \les & \exp\left(\frac{d}{2}\log 2-\frac{\left\|\bma_1-\bma_2\right\|^2}{64}\right)\cdot \mathbb{E}_{\bmx\sim\mathcal{N}(\bma_1,2I_d)}\left[\frac{\left\|\bmx-\bma_1\right\|^2}{2}\right]\\
    =& \exp\left(\log d+\frac{d}{2}\log 2-\frac{\left\|\bma_1-\bma_2\right\|^2}{64}\right).
\end{aligned}
\end{equation}
For every $\bmx\in\widetilde{\Omega}_2$, we have $\log\left(\tilde{w}_1+\tilde{w}_2\gamma_{\bma_2}(\bmx)/\gamma_{\bma_1}(\bmx)\right)\les \log 2$. Using (\ref{eq: proof of mog failure: useful note 1}) and (\ref{eq: proof of mog failure: useful note 2}), we obtain
\begin{equation}
\label{eq: proof of mog failure: Omega_2 KL }
\begin{aligned}
& \int_{\widetilde{\Omega}_2}\frac{1}{\left(\sqrt{2\pi}\right)^d}\gamma_{\bma_1}\left(\bmx\right)\log\left(\tilde{w}_1+\tilde{w}_2\frac{\gamma_{\bma_2}(\bmx)}{\gamma_{\bma_1}(\bmx)}\right)\rmd\bmx\les \log 2\cdot \exp\left(\frac{d}{2}\log 2-\frac{\left\|\bma_1-\bma_2\right\|^2}{64}\right).
\end{aligned}
\end{equation}
Similarly, for any $\bmx\in\widetilde{\Omega}_3$, we have $\log\left(\tilde{w}_1+\tilde{w}_2\gamma_{\bma_2}(\bmx)/\gamma_{\bma_1}(\bmx)\right)\les\log 2+\frac{\left\|\bmx-\bma_1\right\|^2}{2}$. Thus,
\begin{equation}
\label{eq: proof of mog failure: Omega_3 KL }
\begin{aligned}
\int_{\widetilde{\Omega}_3}\frac{1}{\left(\sqrt{2\pi}\right)^d}\gamma_{\bma_1}\left(\bmx\right)\log\left(\tilde{w}_1+\tilde{w}_2\frac{\gamma_{\bma_2}(\bmx)}{\gamma_{\bma_1}(\bmx)}\right)\rmd\bmx \les \left(\log 2+d\right)\exp\left(\frac{d}{2}\log 2-\frac{\left\|\bma_1-\bma_2\right\|^2}{64}\right).
\end{aligned}
\end{equation}
Putting (\ref{eq: proof of mog failure: Omega_1 KL}), (\ref{eq: proof of mog failure: Omega_2 KL }), and (\ref{eq: proof of mog failure: Omega_3 KL }) together, $\mathbb{E}_{\gamma_{\bma_1}}\left[\log\left(\tilde{w}_1+\tilde{w}_2\gamma_{\bma_2}(\bmx)/\gamma_{\bma_1}(\bmx)\right)\right]$ is upper bounded by 
\begin{equation}
\label{eq: proof of mog failure: KL upper bounded 1}
\begin{aligned}
&\mathbb{E}_{\gamma_{\bma_1}}\left[\log\left(\tilde{w}_1+\tilde{w}_2\frac{\gamma_{\bma_2}(\bmx)}{\gamma_{\bma_1}(\bmx)}\right)\right]\\
= &\left(\int_{\widetilde{\Omega}_1}+\int_{\widetilde{\Omega}_2}+\int_{\widetilde{\Omega}_3}\right)\frac{1}{\left(\sqrt{2\pi}\right)^d}\gamma_{\bma_1}\left(\bmx\right)\log\left(\tilde{w}_1+\tilde{w}_2\frac{\gamma_{\bma_2}(\bmx)}{\gamma_{\bma_1}(\bmx)}\right)\rmd\bmx\\
\les & \log\left(\tilde{w}_1+\exp\left(-\frac{\left\|\bma_1-\bma_2\right\|^2}{4}\right)\right) + \left(\log 4+d\right)\exp\left(\frac{d}{2}\log 2-\frac{\left\|\bma_1-\bma_2\right\|^2}{64}\right).
\end{aligned}
\end{equation}
Plugging (\ref{eq: proof of mog failure: KL upper bounded 1}) into (\ref{eq: proof of mog failure: lower bound step 1}), we have
\begin{equation}
\label{eq: proof of mog failure: KL upper bound 2}
\begin{aligned} & \ w_1\mathbb{E}_{\gamma_{\bma_1}}\left[\log\left(\frac{w_1\gamma_{\bma_1}(\bmx)+w_2\gamma_{\bma_2}(\bmx)}{\tilde{w}_1\gamma_{\bma_1}(\bmx)+\tilde{w}_2\gamma_{\bma_2}(\bmx)}\right)\right]\\
\ges & \ w_1\left[\log w_1 - \log\left(\tilde{w}_1+\exp\left(-\frac{\left\|\bma_1-\bma_2\right\|^2}{4}\right)\right)\right] \\
& \quad - w_1\left(\log 4+d\right)\exp\left(\frac{d}{2}\log 2-\frac{\left\|\bma_1-\bma_2\right\|^2}{64}\right).
\end{aligned}
\end{equation}
Similarly, we have \begin{equation}
\label{eq: proof of mog failure: KL upper bound 3}
\begin{aligned}& \ w_2\mathbb{E}_{\gamma_{\bma_2}}\left[\log\left(\frac{w_1\gamma_{\bma_1}(\bmx)+w_2\gamma_{\bma_2}(\bmx)}{\tilde{w}_1\gamma_{\bma_1}(\bmx)+\tilde{w}_2\gamma_{\bma_2}(\bmx)}\right)\right]\\
\ges & \ w_2\left(\log w_2 - \log\left(\tilde{w}_2+\exp\left(-\frac{\left\|\bma_1-\bma_2\right\|^2}{4}\right)\right)\right) \\
&\quad - w_2\left(\log 4+d\right)\exp\left(\frac{d}{2}\log 2-\frac{\left\|\bma_1-\bma_2\right\|^2}{64}\right).
\end{aligned}
\end{equation}
Plugging (\ref{eq: proof of mog failure: KL upper bound 2}) and (\ref{eq: proof of mog failure: KL upper bound 3}) into (\ref{eq: proof of mog failure: decompose KL}), we obtain the lower bound (\ref{lower KL bound: thm mog failure formal}) in Theorem \ref{thm: mog failure formal}.
Then we prove \eqref{upper SE bound: thm mog failure formal}.
Using (\ref{eq: score of mog}), we obtain 
\begin{equation}
\label{eq: proof of mog failure: score error rewriten note 1}
\begin{aligned}
& \nx\log\pi^M(\bmx)-\nx\log\tilde{\pi}^M(\bmx)\\
= & \frac{w_1\bma_1\gamma_{\bma_1}(\bmx)+w_2\bma_2\gamma_{\bma_2}(\bmx)}{w_1\gamma_{\bma_1}(\bmx)+w_2\gamma_{\bma_2}(\bmx)} - \frac{\tilde{w}_1\bma_1\gamma_{\bma_1}(\bmx)+\tilde{w}_2\bma_2\gamma_{\bma_2}(\bmx)}{\tilde{w}_1\gamma_{\bma_1}(\bmx)+\tilde{w}_2\gamma_{\bma_2}(\bmx)}\\
= & \frac{w_1\bma_1+w_2\bma_2\gamma_{\bma_2}(\bmx)/\gamma_{\bma_1}(\bmx)}{w_1+w_2\gamma_{\bma_2}(\bmx)/\gamma_{\bma_1}(\bmx)} - \frac{\tilde{w}_1\bma_1+\tilde{w}_2\bma_2\gamma_{\bma_2}(\bmx)/\gamma_{\bma_1}(\bmx)}{\tilde{w}_1+\tilde{w}_2\gamma_{\bma_2}(\bmx)/\gamma_{\bma_1}(\bmx)}\\
= & \frac{w_1\bma_1\gamma_{\bma_1}(\bmx)/\gamma_{\bma_2}(\bmx)+w_2\bma_2}{w_1\gamma_{\bma_1}(\bmx)/\gamma_{\bma_2}(\bmx)+w_2} - \frac{\tilde{w}_1\bma_1\gamma_{\bma_1}(\bmx)/\gamma_{\bma_2}(\bmx)+\tilde{w}_2\bma_2}{\tilde{w}_1\gamma_{\bma_1}(\bmx)/\gamma_{\bma_2}(\bmx)+\tilde{w}_2}.
\end{aligned}
\end{equation}
Recall that $\Omega_1=\{\bmx\in\mathbb{R}^d:\left\|\bmx-\bma_1\right\|\les \frac{\left\|\bma_1-\bma_2\right\|}{4}\}$, $\Omega_2=\{\bmx\in\mathbb{R}^d:\left\|\bmx-\bma_2\right\|\les \frac{\left\|\bma_1-\bma_2\right\|}{4}\}$ and $\Omega_3=\Omega_1^c\bigcap\Omega_2^c$. 
For any $\bmx\in\Omega_1$, we can rewrite (\ref{eq: proof of mog failure: score error rewriten note 1}) as \begin{equation}
\label{eq: proof of mog failure: score error rewritten part 1}
\begin{aligned}
& \nx\log\pi^M(\bmx)-\nx\log\tilde{\pi}^M(\bmx)\\
=  & \bma_1 + \frac{w_2(\bma_2-\bma_1)\gamma_{\bma_2}(\bmx)/\gamma_{\bma_1}(\bmx)}{w_1+w_2\gamma_{\bma_2}(\bmx)/\gamma_{\bma_1}(\bmx)}-\left\{\bma_1+\frac{\tilde{w}_2(\bma_2-\bma_1)\gamma_{\bma_2}(\bmx)/\gamma_{\bma_1}(\bmx)}{\tilde{w}_1+\tilde{w}_2\gamma_{\bma_2}(\bmx)/\gamma_{\bma_1}(\bmx)}\right\}\\
= & \frac{w_2(\bma_2-\bma_1)\gamma_{\bma_2}(\bmx)/\gamma_{\bma_1}(\bmx)}{w_1+w_2\gamma_{\bma_2}(\bmx)/\gamma_{\bma_1}(\bmx)}-\frac{\tilde{w}_2(\bma_2-\bma_1)\gamma_{\bma_2}(\bmx)/\gamma_{\bma_1}(\bmx)}{\tilde{w}_1+\tilde{w}_2\gamma_{\bma_2}(\bmx)/\gamma_{\bma_1}(\bmx)}.
\end{aligned}
\end{equation}
Note that $\vert\gamma_{\bma_2}(\bmx)/\gamma_{\bma_1}(\bmx)\vert^2=\exp(\|\bmx-\bma_1\|^2-\|\bmx-\bma_2\|^2)\les \exp(-\|\bma_1-\bma_2\|^2/2)$ for every $\bmx\in\Omega_1$. Then use (\ref{eq: proof of mog failure: score error rewritten part 1}), we obtain 
\begin{equation}
\label{eq: proof of mog failure: part 1 score error}
\begin{aligned}
& \int_{\Omega_1}\left\|\nx\log\pi^M(\bmx)-\nx\log\tilde{\pi}^M(\bmx)\right\|^2p(\bmx)\rmd\bmx\\
= & \int_{\Omega_1}\left\|\frac{w_2(\bma_2-\bma_1)\gamma_{\bma_2}(\bmx)/\gamma_{\bma_1}(\bmx)}{w_1+w_2\gamma_{\bma_2}(\bmx)/\gamma_{\bma_1}(\bmx)}-\frac{\tilde{w}_2(\bma_2-\bma_1)\gamma_{\bma_2}(\bmx)/\gamma_{\bma_1}(\bmx)}{\tilde{w}_1+\tilde{w}_2\gamma_{\bma_2}(\bmx)/\gamma_{\bma_1}(\bmx)}\right\|^2p(\bmx)\rmd\bmx\\
\les & \ 2\int_{\Omega_1}\left\|\frac{w_2(\bma_2-\bma_1)}{w_1}\right\|^2 \Big\vert\frac{\gamma_{\bma_2}(\bmx)}{\gamma_{\bma_1}(\bmx)}\Big\vert^2p(\bmx)\rmd\bmx
+2\int_{\Omega_1}\left\|\frac{\tilde{w}_2(\bma_2-\bma_1)}{\tilde{w}_1}\right\|^2 \Big\vert
\frac{\gamma_{\bma_2}(\bmx)}{\gamma_{\bma_1}(\bmx)}\Big\vert^2p(\bmx)\rmd\bmx\\
\les & \ 2\exp\left(-\frac{\left\|\bma_1-\bma_2\right\|^2}{2}\right)\left[\frac{w_2^2}{w_1^2}+\frac{\tilde{w}_2^2}{\tilde{w}_1^2}\right]\left\|\bma_2-\bma_1\right\|^2.
\end{aligned}
\end{equation}
Similarly, we obtain 
\begin{equation}
\label{eq: proof of mog failure: part 2 score error}
\begin{aligned}
& \int_{\Omega_2}\left\|\nx\log\pi^M(\bmx)-\nx\log\tilde{\pi}^M(\bmx)\right\|^2p(\bmx)\rmd\bmx \\
\les & \ 2\exp\left(-\frac{\left\|\bma_1-\bma_2\right\|^2}{2}\right)\left[\frac{w_1^2}{w_2^2}+\frac{\tilde{w}_1^2}{\tilde{w}_2^2}\right]\left\|\bma_1-\bma_2\right\|^2.
\end{aligned}
\end{equation}
Using (\ref{eq: proof of mog failure: score error rewriten note 1}), we obtain that 
\begin{equation}
\label{eq: proof of mog failure: part 3 score error}
\begin{aligned}
& \int_{\Omega_3}\left\|\nx\log\pi^M(\bmx)-\nx\log\tilde{\pi}^M(\bmx)\right\|^2p(\bmx)\rmd\bmx\\
= & \int_{\Omega_3}\left\|\frac{w_1\bma_1+w_2\bma_2\gamma_{\bma_2}(\bmx)/\gamma_{\bma_1}(\bmx)}{w_1+w_2\gamma_{\bma_2}(\bmx)/\gamma_{\bma_1}(\bmx)} - \frac{\tilde{w}_1\bma_1+\tilde{w}_2\bma_2\gamma_{\bma_2}(\bmx)/\gamma_{\bma_1}(\bmx)}{\tilde{w}_1+\tilde{w}_2\gamma_{\bma_2}(\bmx)/\gamma_{\bma_1}(\bmx)}\right\|^2p(\bmx)\rmd\bmx\\
\les & \ 4\int_{\Omega_3}\left\|\frac{w_1\bma_1}{w_1+w_2\gamma_{\bma_2}(\bmx)/\gamma_{\bma_1}(\bmx)}\right\|^2p(\bmx)\rmd\bmx+4\int_{\Omega_3}\left\|\frac{\tilde{w}_1\bma_1}{\tilde{w}_1+\tilde{w}_2\gamma_{\bma_2}(\bmx)/\gamma_{\bma_1}(\bmx)}\right\|^2p(\bmx)\rmd\bmx\\
& \quad + \ 4\int_{\Omega_3}\left\|\frac{w_2\bma_2\gamma_{\bma_2}(\bmx)/\gamma_{\bma_1}(\bmx)}{w_1+w_2\gamma_{\bma_2}(\bmx)/\gamma_{\bma_1}(\bmx)}\right\|^2p(\bmx)\rmd\bmx+4\int_{\Omega_3}\left\|\frac{\tilde{w}_2\bma_2\gamma_{\bma_2}(\bmx)/\gamma_{\bma_1}(\bmx)}{\tilde{w}_1+\tilde{w}_2\gamma_{\bma_2}(\bmx)/\gamma_{\bma_1}(\bmx)}\right\|^2p(\bmx)\rmd\bmx\\
\les & \ 8\left[\left\|\bma_1\right\|^2+\left\|\bma_2\right\|^2\right]\int_{\Omega_3}p(\bmx)\rmd\bmx.
\end{aligned}
\end{equation}
Note that we have the following decomposition\begin{equation}
\label{eq: proof of mog failure: decompose score error}
\begin{aligned}
\mathrm{SE}_p(\pi^M\|\tilde{\pi}^M)
=& \int_{\mathbb{R}^d}\left\|\nx\log \pi^M(\bmx)-\nx\log\tilde{\pi}^M(\bmx)\right\|^2p(\bmx)\rmd\bmx\\
=& \left(\int_{\Omega_1}+\int_{\Omega_2}+\int_{\Omega_3}\right)\left\|\nx\log \pi^M(\bmx)-\nx\log\tilde{\pi}^M(\bmx)\right\|^2p(\bmx)\rmd\bmx.
\end{aligned}
\end{equation}
Plug (\ref{eq: proof of mog failure: part 1 score error}), (\ref{eq: proof of mog failure: part 2 score error}), and (\ref{eq: proof of mog failure: part 3 score error}) into (\ref{eq: proof of mog failure: decompose score error}), we obtain the upper bound (\ref{upper SE bound: thm mog failure formal}) in Theorem \ref{thm: mog failure formal}.
\end{proof}

\subsection{Proof of Theorem \ref{thm: PINN analysis}}
\label{appendix: Theorem PINN analysis proof}
First, we present the divergence theorem and Green's first identity, which is very useful in our proof.
Then we state the Gr\"{o}nwall's inequality used in our proof.
Finally, we state and prove Theorem \ref{thm: extended PINN analysis} which includes Theorem \ref{thm: PINN analysis} and sharper bounds when \eqref{eq: thm PINN analysis poincare ineq} holds.

\begin{lemma}[divergence theorem]\label{lemma: divergence theorem}
    Let $\mathbf{F}(\cdot):\Omega\to \mathbb{R}^d$, then
    $\int_{\Omega}\nabla\cdot \mathbf{F}(\bmx)\rmd\bmx=\int_{\partial\Omega}\mathbf{F}\cdot \bm{n}\mathrm{~d}\bm{S}$.
\end{lemma}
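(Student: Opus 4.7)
The divergence theorem is a classical result from vector calculus, and in a paper of this nature the natural route is to invoke it from a standard reference (e.g., Evans' \emph{Partial Differential Equations} or Rudin's \emph{Principles of Mathematical Analysis}) rather than to reprove it. Nevertheless, a self-contained sketch would proceed by reducing the general statement to elementary cases via a partition of unity, so let me outline how I would organize such a proof.

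First, I would impose the standing regularity assumptions implicit in the paper's setup, namely that $\Omega$ is a bounded closed domain with piecewise smooth (say, $\mathcal{C}^1$) boundary and outward unit normal $\bm{n}$, and that $\mathbf{F}\in \mathcal{C}^1(\overline{\Omega};\mathbb{R}^d)$ so that both sides of the identity are defined. Next, I would establish the identity on the model case: a coordinate rectangle $R=\prod_{i=1}^d [a_i,b_i]$. In this case the left-hand side decomposes by linearity into $\sum_{i=1}^d\int_R \partial_{x_i}F_i\,\rmd\bmx$, and each summand can be evaluated by Fubini together with the fundamental theorem of calculus, producing precisely the boundary contributions from the two faces $\{x_i=a_i\}$ and $\{x_i=b_i\}$ on which the outward normal is $\mp e_i$.

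The second step would handle general smooth domains by flattening the boundary locally. Cover $\overline{\Omega}$ with finitely many open sets $\{U_\alpha\}$ such that each $U_\alpha\cap\Omega$ is either contained in the interior of $\Omega$ (trivial case, boundary integral vanishes) or can be mapped via a $\mathcal{C}^1$ diffeomorphism $\Phi_\alpha$ onto a half-cube, sending $U_\alpha\cap\partial\Omega$ to a flat face. Let $\{\chi_\alpha\}$ be a $\mathcal{C}^1$ partition of unity subordinate to this cover, write $\mathbf{F}=\sum_\alpha \chi_\alpha \mathbf{F}$, and apply the rectangle case to each $\chi_\alpha \mathbf{F}$ in the straightened coordinates. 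A change-of-variables computation using the Jacobian of $\Phi_\alpha$ then converts the flat boundary integral back to the integral against $\bm{n}\,\rmd\bm{S}$ on $\partial\Omega$, and summing over $\alpha$ yields the global identity since the interior partial derivatives of the cutoffs cancel in pairs.

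The main technical obstacle is the change-of-variables bookkeeping in the straightening step: one must verify that the Jacobian factors combine correctly so that the pullback of $\nabla\cdot(\chi_\alpha\mathbf{F})$ matches what Fubini would give on the half-cube, and that the induced surface measure on the flattened face pulls back to $\rmd\bm{S}$ on $\partial\Omega$. For the purposes of this paper, however, since the lemma is used only as a tool for the subsequent PINN error analysis (via Green's first identity), I would simply cite a standard reference and omit these routine details.
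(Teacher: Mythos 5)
The paper itself states this lemma without proof, treating the divergence theorem as a standard textbook fact to be used as a tool in the PINN error analysis (specifically in the proof of Theorem~\ref{thm: extended PINN analysis}), which matches your judgment that the natural move is to cite a reference rather than reprove it. Your sketch of the rectangle-plus-partition-of-unity argument is the standard one and is correct as far as it goes, but it is more than the paper asks for here; your instinct to omit the change-of-variables bookkeeping and defer to a reference is exactly what the authors did.
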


\begin{lemma}[Green's first identity]\label{lemma: green's first identity}
Let $v(\cdot), u(\cdot): \Omega\to \mathbb{R}$, then it holds that
\[
\int_{\Omega}\nabla_{\bmx} v\cdot\nabla_{\bmx} u\rmd\bmx+\int_{\Omega}v\Delta u\rmd\bmx=\int_{\partial\Omega}v\frac{\partial u}{\partial \bm{n}}\mathrm{~d}\bm{S}.
\]
\end{lemma}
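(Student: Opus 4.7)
The plan is to derive Green's first identity as a direct corollary of the divergence theorem (Lemma~\ref{lemma: divergence theorem}) applied to a carefully chosen vector field, using only the product rule for the divergence operator. Assuming $u, v$ are sufficiently regular (e.g.\ $u \in \mathcal{C}^2(\bar{\Omega})$ and $v \in \mathcal{C}^1(\bar{\Omega})$) so that all quantities are well-defined, the proof reduces to one line of calculation plus one invocation of the previous lemma.

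First, I would set $\mathbf{F}(\bmx) := v(\bmx)\,\nabla_{\bmx} u(\bmx)$, which maps $\Omega \to \bR^d$. Next, I would apply the product rule component-wise to compute
\[
\nabla\cdot \mathbf{F} \;=\; \sum_{i=1}^d \partial_{x_i}\bigl(v\,\partial_{x_i} u\bigr) \;=\; \sum_{i=1}^d \bigl(\partial_{x_i} v\bigr)\bigl(\partial_{x_i} u\bigr) + v\sum_{i=1}^d \partial_{x_i}^2 u \;=\; \nabla_{\bmx} v\cdot \nabla_{\bmx} u + v\,\Delta u.
\]
Integrating over $\Omega$ and applying Lemma~\ref{lemma: divergence theorem} to $\mathbf{F}$ yields
\[
\int_{\Omega}\nabla_{\bmx} v\cdot\nabla_{\bmx} u\,\rmd\bmx + \int_{\Omega} v\,\Delta u\,\rmd\bmx \;=\; \int_{\Omega}\nabla\cdot \mathbf{F}\,\rmd\bmx \;=\; \int_{\partial\Omega} \mathbf{F}\cdot \bm{n}\,\rmd\bm{S}.
\]
Finally, I would identify the boundary integrand: since $\mathbf{F}\cdot \bm{n} = v\,(\nabla_{\bmx} u \cdot \bm{n}) = v\,\partial u/\partial \bm{n}$ by the definition of the normal derivative, the right-hand side is exactly $\int_{\partial\Omega} v\,(\partial u/\partial \bm{n})\,\rmd\bm{S}$, which completes the proof.

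There is essentially no substantive obstacle here: the only point requiring mild care is the regularity needed to justify applying the divergence theorem to $v\nabla u$ (so that $\mathbf{F} \in \mathcal{C}^1(\bar{\Omega})$) and the smoothness of $\partial\Omega$ needed for the divergence theorem itself, both of which are implicit in the ambient regularity assumptions (Assumption~\ref{PINN assumption: continuous assumption}) used throughout the paper. Since the divergence theorem is taken as given in Lemma~\ref{lemma: divergence theorem}, the identity follows immediately.
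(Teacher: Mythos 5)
Your proof is correct and is exactly the intended argument: set $\mathbf{F}=v\nabla_{\bmx}u$, expand $\nabla\cdot\mathbf{F}=\nabla_{\bmx}v\cdot\nabla_{\bmx}u+v\Delta u$ by the product rule, and apply Lemma~\ref{lemma: divergence theorem}, identifying $\mathbf{F}\cdot\bm{n}=v\,\partial u/\partial\bm{n}$ on the boundary. The paper states this lemma without proof as a standard fact (placing the divergence theorem immediately before it precisely for this derivation), so your write-up simply supplies the canonical one-line argument, with the appropriate regularity caveats.
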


\begin{lemma}[Gr\"{o}nwall's inequality]\label{Gronwall}
Let $f(\cdot),\alpha(\cdot), \beta(\cdot): [0, T]\to\mathbb{R}$, and suppose that $\forall\ 0\les t\les T$,\[
f^{\prime}(t)\les \alpha(t) + \beta(t)f(t).
\]
Then we have $\forall\ 0\les t\les T$,\[
f(t)\les e^{\int_0^t\beta(s)\rmd s}f(0) + \int_0^t e^{\int_s^t\beta(r)\rmd r}\alpha(s)\rmd s.
\]
\end{lemma}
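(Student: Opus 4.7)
The plan is to prove Grönwall's inequality by the standard integrating factor technique. First I would define the integrating factor $\mu(t):=\exp\bigl(-\int_0^t \beta(s)\rmd s\bigr)$, which is a positive, differentiable function on $[0,T]$ satisfying $\mu'(t)=-\beta(t)\mu(t)$. The goal of introducing $\mu$ is to convert the differential inequality $f'(t)\les \alpha(t)+\beta(t)f(t)$ into one whose left-hand side is an exact derivative.

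Next, I would consider the auxiliary function $g(t):=\mu(t)f(t)$ and compute
\begin{equation*}
g'(t)=\mu'(t)f(t)+\mu(t)f'(t)=\mu(t)\bigl(f'(t)-\beta(t)f(t)\bigr).
\end{equation*}
Applying the hypothesis $f'(t)-\beta(t)f(t)\les \alpha(t)$ together with the positivity of $\mu(t)$ yields $g'(t)\les \mu(t)\alpha(t)$. Integrating this inequality from $0$ to $t$ gives $g(t)-g(0)\les \int_0^t \mu(s)\alpha(s)\rmd s$, i.e.,
\begin{equation*}
\mu(t)f(t)\les f(0)+\int_0^t \mu(s)\alpha(s)\rmd s.
\end{equation*}

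Finally, dividing through by $\mu(t)>0$ and substituting back the definition of $\mu$, I would obtain
\begin{equation*}
f(t)\les e^{\int_0^t\beta(s)\rmd s}f(0)+\int_0^t e^{\int_0^t\beta(r)\rmd r-\int_0^s\beta(r)\rmd r}\alpha(s)\rmd s=e^{\int_0^t\beta(s)\rmd s}f(0)+\int_0^t e^{\int_s^t\beta(r)\rmd r}\alpha(s)\rmd s,
\end{equation*}
which is precisely the claimed bound.

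The proof is essentially a one-line calculation once the integrating factor is introduced, so there is no substantial obstacle. The only subtle point worth flagging is a regularity remark: the argument as written requires $f$ to be differentiable (or at least absolutely continuous) so that the fundamental theorem of calculus applies to $g$; the hypothesis of the lemma implicitly assumes enough regularity for $f'$ to exist pointwise and be integrable, which is the standard setting in which the lemma is invoked in Appendix \ref{appendix: Theorem PINN analysis proof}.
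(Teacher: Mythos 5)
Your proof is correct and is essentially identical to the paper's: both introduce the same integrating factor $e^{-\int_0^t\beta(s)\rmd s}$, consider the auxiliary function $g(t)=e^{-\int_0^t\beta(s)\rmd s}f(t)$, bound $g'(t)$ using the hypothesis, and integrate before dividing back out. Your closing remark on the regularity of $f$ is a reasonable clarification but does not change the argument.
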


\begin{proof}[Proof of Lemma \ref{Gronwall}]
Consider $g(t)=e^{-\int_0^t\beta(s)\rmd s}f(t),\forall\ 0\les t\les T$. Then we have $\forall\ 0\les t\les T$,
\begin{equation}\label{eq: function}
\begin{aligned}
g^{\prime}(t)=& \ e^{-\int_0^t\beta(s)\rmd s}f^{\prime}(t) - \beta(t)e^{-\int_0^t\beta(s)\rmd s}f(t)\\
= & \ e^{-\int_0^t\beta(s)\rmd s}\left(f^{\prime}(t)-\beta(t)f(t)\right)\\
\les & \ e^{-\int_0^t\beta(s)\rmd s}\alpha(t).
\end{aligned}
\end{equation}
Integrating (\ref{eq: function}), we obtain 
\begin{equation}\label{eq: integrate function}
e^{-\int_0^t\beta(s)\rmd s}f(t)\les f(0) + \int_0^t e^{-\int_0^s\beta(r)\rmd r}\alpha(s)\rmd s.
\end{equation}
Hence, we complete our proof.
\end{proof}

\begin{theorem}
\label{thm: extended PINN analysis}
Suppose that Assumption \ref{PINN assumption: boundary identity}, \ref{PINN assumption: diffusion bound}, and \ref{PINN assumption: continuous assumption} hold. 
We further assume that $u_\theta(\bmx,0)=u^*_0(\bmx)$ for any $\bmx\in\Omega$. 
Then for any positive constant $\vareps>0$, the following holds for any $0\les t\les T$,
\begin{equation}\label{eq: thm PINN analysis e 1}
    \|e_t(\cdot)\|^2_{L^2(\Omega;\nu_t)}\les \vareps L_{\emph{\textrm{PINN}}}(t;C_1(\vareps)).
\end{equation}
Moreover, for any $0\les t\les T$,
\begin{equation}\label{eq: thm PINN analysis nabla e 1}
m_1\|\nx e_t(\cdot)\|^2_{L^2(\Omega;\nu_t)}\les
\vareps\|r_t(\cdot)\|^2_{L^2(\Omega;\nu_t)}
+C_3(\vareps) L_{\emph{\textrm{PINN}}}(t;C_1(\vareps)) 
+C_2\sqrt{\vareps  L_{\emph{\textrm{PINN}}}(t;C_1(\vareps))}.
\end{equation}
In addition, if there exists constant $\mathcal{C}_{\nu}(\Omega)>0$ such that the following holds for any $0\les t\les T$,
\begin{equation}\label{eq: thm PINN analysis poincare ineq}
    \|\nx e_t(\cdot)\|^2_{L^2(\Omega;\nu_t)}\ges \mathcal{C}^2_{\nu}(\Omega) \|e_t(\cdot)\|^2_{L^2(\Omega;\nu_t)}.
\end{equation}
Then for any positive constant $\vareps>0$, the following holds for any $0\les t\les T$,
\begin{equation}\label{eq: thm PINN analysis ineq 3}
\|e_t(\cdot)\|^2_{L^2(\Omega;\nu_t)}\les  \vareps L_{\emph{\textrm{PINN}}}(t;C_4(\vareps)).
\end{equation}
Moreover, for any $0\les t\les T$,
\begin{equation}\label{eq: thm PINN analysis ineq 4}
m_1\|\nx e_t(\cdot)\|^2_{L^2(\Omega;\nu_t)}\les 
\vareps\|r_t(\cdot)\|^2_{L^2(\Omega;\nu_t)}
+C_3(\vareps) L_{\emph{\textrm{PINN}}}(t;C_4(\vareps))
+C_2\sqrt{\vareps L_{\emph{\textrm{PINN}}}(t;C_4(\vareps))}.
\end{equation}
where $C_2:=2\sqrt{2}(\hB_0^2+B_0^{*2})^{1/2}$, $C_3(\vareps):=\vareps(C_1(\vareps)+\Bv_0)$, $C_4(\vareps):=C_1(\vareps)-m_1\mathcal{C}^2_{\nu}(\Omega)$ and
\begin{equation*}
C_1(\vareps):=
\frac{1}{\vareps}+\frac{M_1}{4}(\Bv_1+2\Bu_1+2\hB_1)+c_1(\Bv_1+\Bv_2)-\frac{c_2}{2}(\Bu_2+\hB_2),
\end{equation*}
where 
\begin{equation*}
c_1:=\begin{cases}
M_1, \quad \text{if }\Bv_1+\Bv_2\ges 0.\\
m_1, \quad \text{if } \Bv_1+\Bv_2 < 0.
\end{cases},\quad
c_2:=\begin{cases}
m_1, \quad \text{if }\Bu_2+\hB_2\ges 0.\\
M_1, \quad \text{if }\Bu_2+\hB_2 < 0.
\end{cases}.
\end{equation*}
\end{theorem}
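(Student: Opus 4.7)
\medskip

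\noindent\textbf{Proposed proof of Theorem~\ref{thm: extended PINN analysis}.}

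The overall plan is to run an energy method on $\|e_t(\cdot)\|^2_{L^2(\Omega;\nu_t)}$, convert the result into an ODE inequality of the form $\frac{\rmd}{\rmd t}\|e_t\|^2_{L^2(\Omega;\nu_t)}\les C_1(\vareps)\|e_t\|^2_{L^2(\Omega;\nu_t)}+\vareps\|r_t\|^2_{L^2(\Omega;\nu_t)}$, and apply Gr\"onwall (Lemma~\ref{Gronwall}); the bound on $\|\nx e_t\|^2_{L^2(\Omega;\nu_t)}$ is then read off from the same identity by isolating the $-g^2(t)\|\nx e_t\|^2_{L^2(\Omega;\nu_t)}$ term. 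First I would derive the evolution equation for $e_t$: since $u_t^*$ solves \eqref{eq:log-fp} exactly and $u_\theta$ solves it up to residual $r_t$, subtracting gives
\[
\pt e_t=\tfrac{g^2(t)}{2}\Lap e_t+\bigl(\tfrac{g^2(t)}{2}(\nx u_\theta+\nx u_t^*)-\bmf\bigr)\cdot\nx e_t+r_t,
\]
because the only nonlinearity $\tfrac{g^2}{2}\|\nx u\|^2$ linearises as $\tfrac{g^2}{2}(\nx u_\theta+\nx u_t^*)\cdot\nx e_t$.

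Next I would differentiate $\|e_t\|^2_{L^2(\Omega;\nu_t)}=\int_\Omega e_t^2\,\nu_t\,\rmd\bmx$ in time, producing the two summands $2\int_\Omega e_t\pt e_t\,\nu_t\,\rmd\bmx$ and $\int_\Omega e_t^2\,\pt\nu_t\,\rmd\bmx$. Into the first I substitute the PDE above, and then apply Green's first identity (Lemma~\ref{lemma: green's first identity}) to $\int_\Omega e_t\Lap e_t\,\nu_t\,\rmd\bmx$; Assumption~\ref{PINN assumption: boundary identity} kills the boundary integral and produces the dissipative term $-g^2(t)\|\nx e_t\|^2_{L^2(\Omega;\nu_t)}$ together with a cross term $-g^2(t)\int_\Omega e_t\,\nu_t\,\nx e_t\cdot\nx\log\nu_t\,\rmd\bmx$. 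The second summand rewrites as $\int_\Omega e_t^2\,\nu_t\,\pt\log\nu_t\,\rmd\bmx$ and is absorbed by $B_0^\nu$. Every remaining cross term of the form $e_t\,\nx e_t\cdot \bm{v}$ (with $\bm{v}\in\{\nx\log\nu_t,\nx u_\theta,\nx u_t^*,\bmf\}$) is tamed by Young's inequality with a carefully chosen weight so that (i) the $\|\nx e_t\|^2_{L^2(\Omega;\nu_t)}$ coefficient is at most $g^2(t)$ (keeping the dissipative term nonnegative for the $L^2$ bound), and (ii) the $\|e_t\|^2_{L^2(\Omega;\nu_t)}$ coefficient is bounded by the quantities $B_1^\nu,B_1^*,\hat B_1,B_2^*,\hat B_2,\ldots$ from Assumption~\ref{PINN assumption: continuous assumption}. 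Finally $2\int_\Omega e_t r_t\,\nu_t\,\rmd\bmx\les\tfrac{1}{\vareps}\|e_t\|^2_{L^2(\Omega;\nu_t)}+\vareps\|r_t\|^2_{L^2(\Omega;\nu_t)}$ contributes the $1/\vareps$ inside $C_1(\vareps)$. Applying Lemma~\ref{Gronwall} with $e_0\equiv 0$ yields \eqref{eq: thm PINN analysis e 1}.

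For the gradient bound \eqref{eq: thm PINN analysis nabla e 1}, I keep the same energy identity but do not discard $-g^2(t)\|\nx e_t\|^2_{L^2(\Omega;\nu_t)}$; instead I solve for it, giving
\[
g^2(t)\|\nx e_t\|^2_{L^2(\Omega;\nu_t)}\les-\tfrac{\rmd}{\rmd t}\|e_t\|^2_{L^2(\Omega;\nu_t)}+\bigl(C_1(\vareps)+B_0^\nu\bigr)\|e_t\|^2_{L^2(\Omega;\nu_t)}+\vareps\|r_t\|^2_{L^2(\Omega;\nu_t)}.
\]
The first term on the right is bounded by $2|\int_\Omega e_t\pt e_t\,\nu_t\rmd\bmx|+|\int_\Omega e_t^2\pt\nu_t\rmd\bmx|$; the former is controlled pointwise via $|\pt e_t|\les\hat B_0+B_0^*\les\sqrt{2}(\hat B_0^2+B_0^{*2})^{1/2}$ and Cauchy--Schwarz against the probability measure $\nu_t$, yielding the constant $C_2=2\sqrt{2}(\hat B_0^2+B_0^{*2})^{1/2}$ and a factor $\|e_t\|_{L^2(\Omega;\nu_t)}$. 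Plugging in the $L^2$ bound \eqref{eq: thm PINN analysis e 1} converts this factor into $\sqrt{\vareps L_{\textrm{PINN}}(t;C_1(\vareps))}$ and produces the third term of \eqref{eq: thm PINN analysis nabla e 1}; using $m_1\les g^2(t)$ on the left gives the stated bound.

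The Poincar\'e refinement \eqref{eq: thm PINN analysis ineq 3}--\eqref{eq: thm PINN analysis ineq 4} comes for free: the hypothesis $\|\nx e_t\|^2_{L^2(\Omega;\nu_t)}\ges \mathcal{C}_\nu^2(\Omega)\|e_t\|^2_{L^2(\Omega;\nu_t)}$ lets me transfer $m_1\mathcal{C}_\nu^2(\Omega)\|e_t\|^2_{L^2(\Omega;\nu_t)}$ from the dissipation $-g^2(t)\|\nx e_t\|^2$ to the coefficient of $\|e_t\|^2$ on the right, turning $C_1(\vareps)$ into $C_4(\vareps)=C_1(\vareps)-m_1\mathcal{C}_\nu^2(\Omega)$; the Gr\"onwall step and gradient extraction then proceed verbatim. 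The main obstacle I expect is the bookkeeping in the Young step: the nonlinearity $\|\nx u_\theta\|^2-\|\nx u_t^*\|^2$ produces cross terms mixing $\nx u_\theta+\nx u_t^*$ with $\nx e_t$ that must be split so the $\|\nx e_t\|^2$ coefficient stays below $g^2(t)$ uniformly in $t$; this forces the specific weighted combination $\tfrac{M_1}{4}(B_1^\nu+2B_1^*+2\hat B_1)+c_1(B_1^\nu+B_2^\nu)-\tfrac{c_2}{2}(B_2^*+\hat B_2)$ appearing in $C_1(\vareps)$, with the case split on the signs of $B_1^\nu+B_2^\nu$ and $B_2^*+\hat B_2$ corresponding to whether the corresponding term should be bounded by $M_1$ or $m_1$ to keep the inequality in the correct direction.
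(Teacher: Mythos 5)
Your high-level plan (energy estimate on $\|e_t\|^2_{L^2(\Omega;\nu_t)}$, Green's identity, Young's inequality, Gr\"onwall, then isolate the dissipative term for the gradient bound) matches the paper's. However, there is a genuine gap in how you handle the $t$-dependence of the weight $\nu_t$ and, as a consequence, the drift $\bmf$.

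You split $\frac{\rmd}{\rmd t}\int_\Omega e_t^2\nu_t$ into $2\int_\Omega e_t\,\pt e_t\,\nu_t$ plus $\int_\Omega e_t^2\,\pt\nu_t$, rewrite the latter as $\int_\Omega e_t^2\,\nu_t\,\pt\log\nu_t$, and bound it by $B^\nu_0$. But $B^\nu_0$ does not appear in the stated $C_1(\vareps)$ — only $B_1^\nu$ and $B_2^\nu$ do — so your derivation cannot produce the claimed constant. More seriously, you list $\bmf$ among the vector fields whose cross terms $e_t\,\nx e_t\cdot\bmf$ need to be tamed by Young's inequality. There is no boundedness assumption on $\bmf$ or $\diver\bmf$ anywhere in Assumptions 1--3, so that Young step has nothing to bound the resulting $\|e_t\|^2$-coefficient with: your derivation would simply stall there. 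The missing idea is that $\nu_t$ satisfies the Fokker--Planck equation $\pt\nu_t=\tfrac12 g^2\Lap\nu_t-\diver(\bmf\nu_t)$. Substituting this for $\pt\nu_t$ (rather than bounding $\pt\log\nu_t$) makes the $\bmf$-contributions from $\pt\nu_t$ and from $\pt e_t^2$ combine into $\int_\Omega\diver\left[\nu_t e_t^2\bmf\right]\rmd\bmx$, which vanishes by the divergence theorem since $e_t=0$ on $\partial\Omega$. Thus the drift disappears from the estimate entirely — no bound on $\bmf$ is needed — and the remaining $\nu_t$-terms become $\Lap\log\nu_t$ and $\|\nx\log\nu_t\|^2$, which are exactly what $B_2^\nu$ and $B_1^\nu$ control. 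The constant $B_0^\nu$ enters only at the very end, when one bounds $\vert\pt\|e_t\|^2_{L^2(\Omega;\nu_t)}\vert$ pointwise for the gradient inequality (your version of that step is essentially right, though it implicitly double-counts $B_0^\nu$ by also carrying it inside your $C_1(\vareps)$). Fixing the $L^2$-bound derivation to use the Fokker--Planck equation for $\nu_t$, and hence the exact cancellation of the drift, is the crucial missing piece.
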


\begin{proof}[Proof of Theorem \ref{thm: extended PINN analysis}]
We first prove \eqref{eq: thm PINN analysis e 1} and \eqref{eq: thm PINN analysis ineq 3}. 
Note that $u_t^*(\bmx)$ satisfies
\begin{equation}\label{eq: pde for u^*}
\pt u^*_t(\bmx) 
+ \nx u^*_t(\bmx)\cdot\bmf(\bmx, t) 
+ \diver \bmf(\bmx, t) 
- \frac{1}{2}g^2(t)\Lap u_t^*(\bmx)
- \frac{1}{2}g^2(t)\left\|\nx u_t^*(\bmx)\right\|^2
= 0.
\end{equation}
And $ u_\theta(\bmx, t)$ satisfies
\begin{equation}\label{eq: pde for hu}
\pt  u_\theta(\bmx, t) 
+ \nx  u_\theta(\bmx, t)\cdot\bmf(\bmx, t) 
+ \diver \bmf(\bmx, t) 
- \frac{1}{2}g^2(t)\Lap  u_\theta(\bmx, t)
- \frac{1}{2}g^2(t)\left\|\nx  u_\theta(\bmx, t)\right\|^2
= r_t(\bmx).
\end{equation}
Subtracting \eqref{eq: pde for u^*} for $u^*$ from \eqref{eq: pde for hu} for $u_\theta$, we have
\begin{equation}\label{eq: PINN analysis proof e_t's PDE}
\partial_t e_t(\bmx)
+\nabla_{\bmx} e_t(\bmx)\cdot \bmf(\bmx, t)
-\frac{1}{2}g^2(t)\left(\left\|\nabla_{\bmx} u_\theta(\bmx,t)\right\|^2 - \left\|\nabla_{\bmx} u^*_t(\bmx)\right\|^2\right)
-\frac{1}{2}g^2(t)\Delta e_t(\bmx)=r_t(\bmx).
\end{equation}
Note that $\frac{1}{2}\partial_t e_t^2(\bmx)=e_t(\bmx)\partial_t e_t(\bmx)$ and $\frac{1}{2}\nabla_{\bmx} e_t^2(\bmx)=e_t(\bmx)\nabla_{\bmx} e_t(\bmx)$, then we obtain
\begin{equation}\label{eq: PINN analysis proof partial_t e_t}
\begin{aligned}
\frac{1}{2}\partial_t e_t^2(\bmx) 
= & \ \frac{1}{2}g^2(t)e_t(\bmx)\left(\left\|\nabla_{\bmx}   u_\theta(\bmx, t)\right\|^2 - \left\|\nabla_{\bmx} u^*_t(\bmx)\right\|^2\right) 
+ \frac{1}{2}g^2(t)e_t(\bmx)\Delta e_t(\bmx) \\
& \quad + e_t(\bmx)r_t(\bmx) 
- e_t(\bmx)\nabla_{\bmx} e_t(\bmx)\cdot \bmf(\bmx, t)\\
= & \ \frac{1}{2}g^2(t)e_t(\bmx)\nabla_{\bmx} e_t(\bmx)\cdot\left(\nabla_{\bmx}   u_\theta(\bmx, t) + \nabla_{\bmx} u^*_t(\bmx)\right) 
+ \frac{1}{2}g^2(t)e_t(\bmx)\Delta e_t(\bmx) \\
&\quad + e_t(\bmx)r_t(\bmx) 
- e_t(\bmx)\nabla_{\bmx} e_t(\bmx)\cdot \bmf(\bmx, t)\\
= & \ \frac{1}{4}g^2(t)\nabla_{\bmx} e_t^2(\bmx)\cdot\left(\nabla_{\bmx}   u_\theta(\bmx, t) + \nabla_{\bmx} u^*_t(\bmx)\right) 
+ \frac{1}{2}g^2(t)e_t(\bmx)\Delta e_t(\bmx)\\
&\quad + e_t(\bmx)r_t(\bmx)
- \frac{1}{2}\nabla_{\bmx} e_t^2(\bmx)\cdot f(\bmx, t).
\end{aligned}
\end{equation}
Note that $\pt(\nu_t(\bmx)e_t^2(\bmx))=e_t^2(\bmx)\pt\nu_t(\bmx)+\nu_t(\bmx)\pt e_t^2(\bmx)$, then we have
\begin{equation}\label{eq: PINN analysis proof pde of ev}
\begin{aligned}
\partial_t(\nu_t(\bmx)e^2_t(\bmx)) 
= &\ \frac{1}{2}g^2(t)\nu_t(\bmx)\nabla_{\bmx} e_t^2(\bmx)\cdot\left(\nabla_{\bmx}   u_\theta(\bmx, t) + \nabla_{\bmx} u^*_t(\bmx)\right) \\
& \quad + g^2(t)\nu_t(\bmx)e_t(\bmx)\Delta e_t(\bmx) \\
& \quad + 2\nu_t(\bmx)e_t(\bmx)r_t(\bmx)
- \nu_t(\bmx)\nabla_{\bmx} e_t^2(\bmx)\cdot \bmf(\bmx, t) \\
& \quad + \frac{1}{2}g^2(t)e_t^2(\bmx)\Delta \nu_t(\bmx) 
- e_t^2(\bmx)\nabla\cdot\left[\bmf(\bmx, t)\nu_t(\bmx)\right].
\end{aligned}
\end{equation}
We integrate (\ref{eq: PINN analysis proof pde of ev}) to get an equation for $\|e_t(\cdot)\|^2_{L^2(\Omega;\nu_t)}$ given by 
\begin{equation}\label{eq: PINN analysis proof ev integrate}
\begin{aligned}
\pt\left\|e_t(\cdot)\right\|^2_{L^2(\Omega;\nu_t)}
= &\ \frac{1}{2}g^2(t)\int_{\Omega}\nu_t(\bmx)\nabla_{\bmx} e_t^2(\bmx)\cdot\left(\nabla_{\bmx}   u_\theta(\bmx, t) + \nabla_{\bmx} u^*_t(\bmx)\right) \rmd\bmx \\
& \quad + \ g^2(t)\int_{\Omega}\nu_t(\bmx)e_t(\bmx)\Delta e_t(\bmx) \rmd\bmx\\
& \quad + 2\int_{\Omega}\nu_t(\bmx)e_t(\bmx)r_t(\bmx)\rmd\bmx
- \int_{\Omega}\nu_t(\bmx)\nabla_{\bmx} e_t^2(\bmx)\cdot\bmf(\bmx, t) \rmd\bmx\\
& \quad + \frac{1}{2}g^2(t)\int_{\Omega}e_t^2(\bmx)\Delta\nu_t(\bmx)\rmd\bmx - \int_{\Omega}e_t^2(\bmx)\nabla\cdot\left[\bmf(\bmx, t)\nu_t(\bmx)\right]\rmd\bmx.
\end{aligned}
\end{equation}
Note that 
\[
\diver\left[\nu_t(\bmx)e_t^2(\bmx)\bmf(\bmx, t)\right]
=\nu_t(\bmx)\nx e_t^2(\bmx)\cdot\bmf(\bmx, t)
+e_t^2(\bmx)\diver\left[\nu_t(\bmx)\bmf(\bmx, t)\right].
\]
Then using Lemma \ref{lemma: divergence theorem} and $e_t(\bmx)=0$ for any $(\bmx, t)\in\partial\Omega\times [0, T]$, we have
\begin{equation}\label{eq: identity about f}
    \int_{\Omega}\nu_t(\bmx)\nx e_t^2(\bmx)\cdot\bmf(\bmx, t)\rmd\bmx
    +\int_{\Omega}e_t^2(\bmx)\diver\left[\nu_t(\bmx)\bmf(\bmx, t)\right]\rmd\bmx
    =0.
\end{equation}
Similarly, we have
\begin{equation}\label{eq: identity about u}
\begin{aligned} 
& \int_{\Omega}\nu_t(\bmx)\nx e_t^2(\bmx)\cdot\left(\nx u_\theta(\bmx, t)+\nx u^*_t(\bmx)\right)\rmd\bmx\\
=
& -\int_{\Omega}\nu_t(\bmx)e_t^2(\bmx)\left(\Lap u_\theta(\bmx, t)+\Lap u^*_t(\bmx)\right)\rmd\bmx\\
& -\int_{\Omega}e_t^2(\bmx)\nx\nu_t(\bmx)\cdot \left(\nx u_\theta(\bmx, t)+\nx u^*_t(\bmx)\right)\rmd\bmx,
\end{aligned}
\end{equation}
and 
\begin{equation}\label{eq: idenetity about e nabla e}
\int_{\Omega}\nu_t(\bmx)e_t(\bmx)\Lap e_t(\bmx)\rmd\bmx 
=  
-\frac{1}{2}\int_{\Omega}\nx\nu_t(\bmx)\cdot \nx e_t^2(\bmx)\rmd\bmx 
-\int_{\Omega}\nu_t(\bmx)\left\|\nx e_t(\bmx)\right\|^2\rmd\bmx.
\end{equation}
Plugging (\ref{eq: identity about f}), (\ref{eq: identity about u}), and (\ref{eq: idenetity about e nabla e}) into (\ref{eq: PINN analysis proof ev integrate}), and using Lemma \ref{lemma: green's first identity}, we have
\begin{equation}\label{eq: integrate by part 1}
\begin{aligned}
\pt \|e_t(\cdot)\|^2_{L^2(\Omega;\nu_t)}
= & -\frac{1}{2}g^2(t)\int_{\Omega}(\Lap u_\theta(\bmx, t)+\Lap u^*_t(\bmx))e_t^2(\bmx)\nu_t(\bmx)\rmd\bmx \\
& \ - \frac{1}{2}g^2(t)\int_{\Omega}\nx \nu_t(\bmx)\cdot\left(\nx  u_\theta(\bmx, t)+\nx u_t^*(\bmx)\right)e_t^2(\bmx)\rmd\bmx \\
& \ - g^2(t)\int_{\Omega}\nu_t(\bmx)\|\nx e_t(\bmx)\|^2\rmd\bmx + 2\int_{\Omega}\nu_t(\bmx)e_t(\bmx)r_t(\bmx)\rmd\bmx\\
& \ + g^2(t)\int_{\Omega}e_t^2(\bmx)\Lap \nu_t(\bmx)\rmd\bmx.
\end{aligned}
\end{equation}
Using $\nx\nu_t(\bmx)=\nu_t(\bmx)\nx\log \nu_t(\bmx)$ and $\Lap \nu_t(\bmx)=(\Lap\log \nu_t(\bmx)+\|\nx \log \nu_t(\bmx)\|^2)\nu_t(\bmx)$,
\begin{equation}\label{eq: integrate by part 2}
\begin{aligned}
\pt \|e_t(\cdot)\|^2_{L^2(\Omega;\nu_t)}
= & -\frac{1}{2}g^2(t)\int_{\Omega}(\Lap u_\theta(\bmx, t)+\Lap u^*_t(\bmx))e_t^2(\bmx)\nu_t(\bmx)\rmd\bmx \\
& \ - \frac{1}{2}g^2(t)\int_{\Omega}\nx \log \nu_t(\bmx)\cdot\left(\nx  u_\theta(\bmx, t)+\nx u_t^*(\bmx)\right)e_t^2(\bmx)\nu_t(\bmx)\rmd\bmx \\
& \ - g^2(t)\int_{\Omega}\nu_t(\bmx)\|\nx e_t(\bmx)\|^2\rmd\bmx + 2\int_{\Omega}\nu_t(\bmx)e_t(\bmx)r_t(\bmx)\rmd\bmx\\
& \ + g^2(t)\int_{\Omega}e_t^2(\bmx)\nu_t(\bmx)(\Lap\log \nu_t(\bmx)+\|\nx \log \nu_t(\bmx)\|^2)\rmd\bmx.
\end{aligned}
\end{equation}
By Assumption \ref{PINN assumption: diffusion bound} and \ref{PINN assumption: continuous assumption}, then we have
\begin{equation}\label{eq: PINN analysis proof ineq 1}
\begin{aligned}
\pt \|e_t(\cdot)\|_{L^2(\Omega;\nu_t)}^2 \les & \
\vareps \|r_t(\cdot)\|^2_{L^2(\Omega;\nu_t)} + C_1(\vareps)\|e_t(\cdot)\|_{L^2(\Omega;\nu_t)}^2-m_1\|\nx e_t(\cdot)\|^2_{L^2(\Omega;\nu_t)}\\
\les & \ \vareps \|r_t(\cdot)\|^2_{L^2(\Omega;\nu_t)} + C_1(\vareps)\|e_t(\cdot)\|_{L^2(\Omega;\nu_t)}^2,
\end{aligned}
\end{equation}
which follows from applying Young's inequality and holds for any $\vareps>0$. Note that $e_0(\bmx)=0$ for any $\bmx\in\Omega$, then using Lemma \ref{Gronwall}, we have $\forall \ 0\les t\les T$,
\begin{equation}\label{eq: ineq 2 integrate w.r.t t}
    \|e_t(\cdot)\|^2_{L^2(\Omega;\nu_t)}\les \vareps \int_0^t e^{C_1(\vareps)(t-s)} \|r_s(\cdot)\|^2_{L^2(\Omega;\nu_s)}\rmd s
    :=\vareps L_{\textrm{PINN}}(t;C_1(\vareps)).
\end{equation}
Hence, we have proved \eqref{eq: thm PINN analysis e 1}. 
In addition, if (\ref{eq: thm PINN analysis poincare ineq}) holds, plugging (\ref{eq: thm PINN analysis poincare ineq}) into (\ref{eq: PINN analysis proof ineq 1}), we have
\begin{equation}\label{eq: PINN analysis proof ineq 3}
\pt \|e_t(\cdot)\|_{L^2(\Omega;\nu_t)}^2 \les 
\vareps \|r_t(\cdot)\|^2_{L^2(\Omega;\nu_t)} + C_4(\vareps)\|e_t(\cdot)\|_{L^2(\Omega;\nu_t)}^2.
\end{equation}
Similarly, using Lemma \ref{Gronwall}, we obtain \eqref{eq: thm PINN analysis ineq 3}.
Then we prove \eqref{eq: thm PINN analysis nabla e 1} and \eqref{eq: thm PINN analysis ineq 4}.
From \eqref{eq: PINN analysis proof ineq 1}, we have
\begin{equation}\label{eq: PINN analysis proof second part ineq 1}
m_1\|\nx e_t(\cdot)\|^2_{L^2(\Omega;\nu_t)}\les \vareps\|r_t(\cdot)\|^2_{L^2(\Omega;\nu_t)}+C_1(\vareps)\|e_t(\cdot)\|^2_{L^2(\Omega;\nu_t)}-\pt \|e_t(\cdot)\|^2_{L^2(\Omega;\nu_t)}.
\end{equation}
By Assumption \ref{PINN assumption: continuous assumption}, we bound $\pt \|e_t(\cdot)\|^2_{L^2(\Omega;\nu_t)}$ as follows
\begin{equation}\label{eq: PINN analysis proof second part pt}
\begin{aligned}
& \Big\vert\pt \|e_t(\cdot)\|^2_{L^2(\Omega;\nu_t)}\Big\vert
=  \Big\vert\pt\left(\int_{\Omega}e_t^2(\bmx)\nu_t(\bmx)\rmd \bmx\right)\Big\vert\\
= & \Big\vert\int_{\Omega}e_t^2(\bmx)\pt \nu_t(\bmx)+ 2\nu_t(\bmx)e_t(\bmx)\pt e_t(\bmx)\rmd \bmx\Big\vert\\
\les & \int_{\Omega}e_t^2(\bmx)\nu_t(\bmx)\vert\pt\log \nu_t(\bmx)\vert\rmd\bmx + 2\Big\vert \int_{\Omega}\nu_t(\bmx)e_t(\bmx)\pt e_t(\bmx)\rmd\bmx\Big\vert\\
\les & \ \Bv_0\|e_t(\cdot)\|^2_{L^2(\Omega;\nu_t)} + 2\left(\int_{\Omega}\nu_t(\bmx)e_t^2(\bmx)\rmd\bmx\right)^{1/2}\left(\int_{\Omega}\nu_t(\bmx)\big\vert\pt e_t(\bmx)\big\vert^2\rmd\bmx\right)^{1/2}\\
\les & \ \Bv_0\|e_t(\cdot)\|^2_{L^2(\Omega;\nu_t)} + 2\sqrt{2}\left(\hB_0^2+B_0^{*2}\right)^{1/2}\|e_t(\cdot)\|_{L^2(\Omega;\nu_t)},
\end{aligned}
\end{equation}
which follows from applying $\vert\pt e_t(\bmx)\vert^2=\vert \pt  u_\theta(\bmx, t)-\pt u^*_t(\bmx)\vert^2\les 2\hB_0^2+2B_0^{*2}$. 
Then plugging (\ref{eq: PINN analysis proof second part pt}) into (\ref{eq: PINN analysis proof second part ineq 1}), we have
\begin{equation}
\label{eq: PINN analysis proof second part ineq 2}
m_1\|\nx e_t(\cdot)\|^2_{L^2(\Omega;\nu_t)}\les \vareps\|r_t(\cdot)\|^2_{L^2(\Omega;\nu_t)}+(C_1(\vareps)+\Bv_0)\|e_t(\cdot)\|^2_{L^2(\Omega;\nu_t)} + C_2\|e_t(\cdot)\|_{L^2(\Omega;\nu_t)}.
\end{equation}
Plugging \eqref{eq: thm PINN analysis e 1} and \eqref{eq: thm PINN analysis ineq 3} into (\ref{eq: PINN analysis proof second part ineq 2}) gives \eqref{eq: thm PINN analysis nabla e 1} and \eqref{eq: thm PINN analysis ineq 4} respectively.
\end{proof}

\subsection{Proof of Theorem \ref{thm: Diffusion PINN Sampler analysis}}
\label{appendix: Sampler analysis proof}
Given the $L^2$ error of the score approximation, \citet{chen2023improved} provides an upper bound of  KL divergence between the data distribution $\pi$ and the distribution $\widehat{\pi}_T$ of approximate samples drawn from the sampling dynamics (\ref{eq: special reverse process with discretization}). 
We first summarize the results from \citet{chen2023improved} in Proposition \ref{prop: analysis of generative modeling}.
Then we prove Theorem \ref{thm: Diffusion PINN Sampler analysis} based on Proposition \ref{prop: analysis of generative modeling}.

\begin{proposition}[Theorem 2.5 in \citet{chen2023improved}]\label{prop: analysis of generative modeling}
Suppose that $T\ges 1$, $K\ges 2$, and the $L^2$ error of the score approximation is bounded by
\begin{equation}\label{eq: score approximation error}
\sum_{k=1}^Nh_k\mathbb{E}_{\bmx_{t_k}\sim\pi_{t_k}}\left\|\nx\log\pi_{t_k}(\bmx_{t_k})-\bms_{t_k}(\bmx_{t_k})\right\|^2\les T\vareps_0^2.
\end{equation}
Then there is a universal constant $\alpha \ges 2$ such that the following holds. 
Under Assumption \ref{Sampler assumption: target distr}, by using the exponentially decreasing (then constant) step size
$h_k=h\min\{\max\{t_k,1/(4K)\}, 1\}$, $0<h\les 1/(\alpha d)$, the sampling dynamic (\ref{eq: special reverse process with discretization}) results in a distribution $\widehat{\pi}_T$ such that 
\begin{equation}\label{eq: KL bounded in analysis generative modeling}
\operatorname{KL}(\pi\| \widehat{\pi}_T)\lesssim 
(d+M_2)\cdot e^{-T} + 
T\vareps_0^2 +
d^2h(\log K + T),
\end{equation}
where the number of sampling steps satisfies that $N\lesssim\frac{1}{h}(\log K + T)$. Choosing $T=\log \left(\frac{M_2+d}{\vareps_0^2}\right)$ and $h=\Theta\left(\frac{\vareps_0^2}{d^2(\log K+T)}\right)$, we have $N=\mathcal{O}\left(\frac{d^2(\log K+T)^2}{\vareps_0^2}\right)$ and make the KL divergence $\widetilde{\mathcal{O}}\left(\vareps_0^2\right)$.
\end{proposition}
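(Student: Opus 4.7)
The plan is to bound $\operatorname{KL}(\pi\|\widehat{\pi}_T)$ by lifting it to a path-measure KL via data processing, applying Girsanov to compare the exact and approximate reverse SDEs, and then splitting the resulting Radon--Nikodym exponent into three physically meaningful pieces: an \emph{initialization gap} between $\pi_T$ and $\mathcal{N}(\bm0,\bm I_d)$, the \emph{score-approximation error} already controlled by hypothesis \eqref{eq: score approximation error}, and a \emph{time-discretization error} produced by freezing the score at the left endpoint $t_k'$ in the exponential-integrator scheme \eqref{eq: special reverse process with discretization}.

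First, I would let $P$ denote the path law of the exact time-reversed OU SDE initialized from $\pi_T$ and $Q$ the path law of $\widehat{\bm y}_{[0,T]}$ from \eqref{eq: special reverse process with discretization}. The data-processing inequality yields $\operatorname{KL}(\pi\|\widehat{\pi}_T)\les\operatorname{KL}(P\|Q)$, and the chain rule gives
\begin{equation*}
\operatorname{KL}(P\|Q)=\operatorname{KL}(\pi_T\|\mathcal{N}(\bm0,\bm I_d))+\mathbb{E}_P\bigl[\operatorname{KL}(P_{\cdot|\bm y_0}\|Q_{\cdot|\bm y_0})\bigr].
\end{equation*}
The initialization term is $\lesssim(d+M_2)e^{-T}$ by the standard log-Sobolev / hypercontractive bound for the OU semigroup, using $M_2<\infty$ from Assumption \ref{Sampler assumption: target distr}. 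Girsanov's theorem, applied interval by interval, turns the conditional term into
\begin{equation*}
\tfrac12\sum_{k=0}^{N-1}\int_{t_k'}^{t_{k+1}'}\mathbb{E}\left\|\nx\log\pi_{T-s}(\bm y_s)-\bm s_{T-t_k'}(\bm y_{t_k'})\right\|^2\rmd s,
\end{equation*}
which a $\|a-b\|^2\les 2\|a-c\|^2+2\|c-b\|^2$ split separates into the score-approximation part (bounded by $T\varepsilon_0^2$ using \eqref{eq: score approximation error} after reindexing to forward time, since the law of $\bm y_s$ under $P$ coincides with $\pi_{T-s}$) and a purely \emph{geometric} discretization part.

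The main obstacle is controlling the discretization error
\begin{equation*}
\sum_{k=1}^{N}\int_{t_{k-1}}^{t_k}\mathbb{E}\left\|\nx\log\pi_s(\bm x_s)-\nx\log\pi_{t_k}(\bm x_{t_k})\right\|^2\rmd s,
\end{equation*}
where I have switched back to the forward-time process $\bm x_t$ of \eqref{eq: special forward process}. For this I would apply It\^o's formula to $t\mapsto\nx\log\pi_t(\bm x_t)$, using the score Fokker--Planck equation \eqref{eq:score-fp} to rewrite $\partial_t\nx\log\pi_t$ in terms of spatial derivatives; the resulting expression involves the operator-norm of the Hessian of $\log\pi_t$ and the second moment $\mathbb{E}\|\bm x_t\|^2\les d+M_2$. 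The crucial regularization effect is that along the OU semigroup the score inherits $K$-Lipschitz regularity at $t=0$ and strictly improves thereafter, with an effective Lipschitz constant of order $\min\{K,(1-e^{-t})^{-1}\}$; this follows from Caffarelli-type transport/contraction arguments applied to the Gaussian convolution $\pi_t=\pi*\mathcal{N}(\bm0,(1-e^{-t})\bm I_d)$ (up to the OU rescaling). With this in hand, an interval of length $h_k$ contributes $\lesssim h_k^2\,d\,[\text{effective Lip}(\nx\log\pi_{t_k})]^2$, and the schedule $h_k=h\min\{\max\{t_k,1/(4K)\},1\}$ is exactly the one that matches $h_k$ against the inverse Lipschitz constant: the floor $1/(4K)$ tames the Lipschitz blow-up near $t=0$, the region $t_k\in[1/(4K),1]$ uses exponentially growing $h_k$ balanced against the decaying Lipschitz bound, and the cap at $1$ produces a constant step for $t_k\ges1$. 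Summing telescopes to $\lesssim d^2h(\log K+T)$, while counting gives $N\lesssim h^{-1}(\log K+T)$.

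Assembling the three pieces yields \eqref{eq: KL bounded in analysis generative modeling}. The final tuning equates the three error terms: picking $T=\log((M_2+d)/\varepsilon_0^2)$ absorbs the initialization gap into $\widetilde{\mathcal O}(\varepsilon_0^2)$, and $h=\Theta(\varepsilon_0^2/[d^2(\log K+T)])$ does the same for the discretization, delivering $N=\mathcal{O}(d^2(\log K+T)^2/\varepsilon_0^2)$. I expect the regularity-in-time analysis to be the delicate step: obtaining the $d^2$ (rather than $d^3$) scaling requires careful Hessian/It\^o accounting, and matching the two-phase step-size schedule to the effective Lipschitz constant is exactly what prevents the bound from diverging near $t=0$.
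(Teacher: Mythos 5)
This statement is not proved in the paper at all: it is imported verbatim as Theorem~2.5 of \citet{chen2023improved}, and the paper's ``proof'' is simply the citation. So there is no internal argument to compare against; what you have written is, in effect, a reconstruction of the proof in that reference, and its overall architecture is the right one — lift to path measures by data processing, split off $\operatorname{KL}(\pi_T\|\mathcal{N}(\bm{0},\bm{I}_d))\lesssim (d+M_2)e^{-T}$ via the chain rule, apply Girsanov interval by interval, and separate the drift discrepancy into the assumed score error \eqref{eq: score approximation error} (the expectation is under $P$, so the law of $\bm y_{t_k'}$ is indeed the correct forward marginal) plus a time-discretization term controlled by the regularity of $t\mapsto\nabla_{\bmx}\log\pi_t$ along the OU flow, with the two-phase step size matched to the blow-up of the score Lipschitz constant near $t=0$.

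That said, as a proof your text is an outline rather than an argument, and the gaps sit exactly where the cited theorem does its real work. (i) The initialization bound needs more than ``log-Sobolev plus $M_2<\infty$'': $\operatorname{KL}(\pi\|\gamma_d)$ may be infinite, so one first smooths (run the forward process to time $1$, where the convolved density has finite relative entropy of order $d+M_2$) and only then invokes exponential KL decay under the Gaussian LSI. (ii) The Girsanov step requires a Novikov/localization argument, since the drift discrepancy need not satisfy an exponential integrability condition a priori. (iii) The discretization lemma — the bound $\mathbb{E}\|\nabla\log\pi_s(\bmx_s)-\nabla\log\pi_{t_k}(\bmx_{t_k})\|^2\lesssim h_k\, d\cdot(\text{effective Lipschitz})^2\cdot(\dots)$ summing to $d^2h(\log K+T)$ with $N\lesssim h^{-1}(\log K+T)$ — is the technical heart; in the reference it is carried out via It\^o/Fokker--Planck computations on the score along the trajectory together with moment bounds and Hessian estimates for $\pi_t=\pi*\mathcal{N}$, not via a ``Caffarelli-type contraction,'' and the $d^2$ (rather than $d^3$) scaling and the precise role of the floor $1/(4K)$ and the cap at $1$ in $h_k$ are exactly what must be verified there. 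None of this is established in your sketch, so as a standalone proof it is incomplete; as a summary of why the imported theorem is true, it is accurate.
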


\begin{proof}[Proof of Theorem \ref{thm: Diffusion PINN Sampler analysis}]
As $\bms_t(\bmx)=\nx u_\theta(\bmx, t)\cdot\bbmone\{\bmx\in\Omega\}$, we have
\begin{equation}\label{eq: score approximation bounded by PINNs}
\begin{aligned}
& \sum_{k=1}^Nh_k\mathbb{E}_{\bmx_{t_k}\sim\pi_{t_k}}\left\|\nx\log\pi_{t_k}(\bmx_{t_k}) - \bms_{t_k}(\bmx_{t_k})\right\|^2\\
= & \sum_{k=1}^Nh_k\int_{\Omega^c}\pi_{t_k}(\bmx)\|\nx\log\pi_{t_k}(\bmx)\|^2\rmd\bmx + 
\sum_{k=1}^N h_k\int_{\Omega}\pi_{t_k}(\bmx)\|\nx e_{t_k}(\bmx)\|^2\rmd\bmx\\
\les & \sum_{k=1}^N h_k\delta + \sum_{k=1}^Nh_kR_{t_k}\|\nx e_{t_k}(\cdot)\|^2_{L^2(\Omega;\nu_{t_k})}\quad \text{(using Assumption \ref{Sampler assumption: bounded region} and \ref{Sampler assumption: base target ratio})}\\
\les & \ T\delta + \delta_1 + C_5(\vareps)\delta_2 + C_2\sqrt{\sum_{k=1}^Nh_kR_{t_k}\delta_2},
\end{aligned}
\end{equation}
where the last inequality follows from Theorem \ref{thm: PINN analysis} ($m_1=M_1=1$) and
\begin{equation}\label{eq: Cauchy ineq used for Sampler analysis proof}
\begin{aligned}
\sum_{k=1}^Nh_kR_{t_k}\sqrt{\vareps L_{\textrm{PINN}}(t_k; C_1(\vareps))}\les & \left(\vareps\sum_{k=1}^Nh_kR_{t_k}L_{\textrm{PINN}}(t_k; C_1(\vareps))\right)^{1/2}\left(\sum_{k=1}^Nh_kR_{t_k}\right)^{1/2}\\
\les & \ \sqrt{\sum_{k=1}^Nh_kR_{t_k}\delta_2}.
\end{aligned}
\end{equation}
Then combining \eqref{eq: score approximation bounded by PINNs} and Proposition \ref{prop: analysis of generative modeling} together gives the results in Theorem \ref{thm: Diffusion PINN Sampler analysis}.
\end{proof}

\subsection{Proof of Theorem \ref{thm: extended PINN analysis for uniform sampling}}
\label{appendix: proof of thm: extended PINN for uniformed sampling}
Note that $u_t^*(\bmx)$ satisfies
\begin{equation}\label{eq: Unif pde for u^*}
\pt u^*_t(\bmx) 
+ \nx u^*_t(\bmx)\cdot\bmf(\bmx, t) 
+ \diver \bmf(\bmx, t) 
- \frac{1}{2}g^2(t)\Lap u_t^*(\bmx)
- \frac{1}{2}g^2(t)\left\|\nx u_t^*(\bmx)\right\|^2
= 0.
\end{equation}
And $ u_\theta(\bmx, t)$ satisfies
\begin{equation}\label{eq: Unif pde for hu}
\pt  u_\theta(\bmx, t) 
+ \nx  u_\theta(\bmx, t)\cdot\bmf(\bmx, t) 
+ \diver \bmf(\bmx, t) 
- \frac{1}{2}g^2(t)\Lap  u_\theta(\bmx, t)
- \frac{1}{2}g^2(t)\left\|\nx  u_\theta(\bmx, t)\right\|^2
= r_t(\bmx).
\end{equation}
Subtracting \eqref{eq: Unif pde for u^*} for $u^*$ from \eqref{eq: Unif pde for hu} for $u_\theta$, we have
\begin{equation}\label{eq: Unif PINN analysis proof e_t's PDE}
\partial_t e_t(\bmx)
+\nabla_{\bmx} e_t(\bmx)\cdot \bmf(\bmx, t)
-\frac{1}{2}g^2(t)\left(\left\|\nabla_{\bmx}   u_\theta(\bmx, t)\right\|^2 - \left\|\nabla_{\bmx} u^*_t(\bmx)\right\|^2\right)
-\frac{1}{2}g^2(t)\Delta e_t(\bmx)=r_t(\bmx).
\end{equation}
Note that $\frac{1}{2}\partial_t e_t^2(\bmx)=e_t(\bmx)\partial_t e_t(\bmx)$ and $\frac{1}{2}\nabla_{\bmx} e_t^2(\bmx)=e_t(\bmx)\nabla_{\bmx} e_t(\bmx)$, then we obtain
\begin{equation}\label{eq: Unif PINN analysis proof partial_t e_t}
\begin{aligned}
\frac{1}{2}\partial_t e_t^2(\bmx) 
= & \ \frac{1}{2}g^2(t)e_t(\bmx)\left(\left\|\nabla_{\bmx}   u_\theta(\bmx, t)\right\|^2 - \left\|\nabla_{\bmx} u^*_t(\bmx)\right\|^2\right) 
+ \frac{1}{2}g^2(t)e_t(\bmx)\Delta e_t(\bmx) \\
& \quad + e_t(\bmx)r_t(\bmx) 
- e_t(\bmx)\nabla_{\bmx} e_t(\bmx)\cdot \bmf(\bmx, t)\\
= & \ \frac{1}{2}g^2(t)e_t(\bmx)\nabla_{\bmx} e_t(\bmx)\cdot\left(\nabla_{\bmx}   u_\theta(\bmx, t) + \nabla_{\bmx} u^*_t(\bmx)\right) 
+ \frac{1}{2}g^2(t)e_t(\bmx)\Delta e_t(\bmx) \\
&\quad + e_t(\bmx)r_t(\bmx) 
- e_t(\bmx)\nabla_{\bmx} e_t(\bmx)\cdot \bmf(\bmx, t)\\
= & \ \frac{1}{4}g^2(t)\nabla_{\bmx} e_t^2(\bmx)\cdot\left(\nabla_{\bmx}   u_\theta(\bmx, t) + \nabla_{\bmx} u^*_t(\bmx)\right) 
+ \frac{1}{2}g^2(t)e_t(\bmx)\Delta e_t(\bmx)\\
&\quad + e_t(\bmx)r_t(\bmx)
- \frac{1}{2}\nabla_{\bmx} e_t^2(\bmx)\cdot f(\bmx, t).
\end{aligned}
\end{equation}
We integrate (\ref{eq: Unif PINN analysis proof partial_t e_t}) to get an equation for $\|e_t(\cdot)\|^2_{L^2(\Omega)}$ given by 
\begin{equation}\label{eq: Unif PINN analysis proof ev integrate}
\begin{aligned}
\pt\left\|e_t(\cdot)\right\|^2_{L^2(\Omega)}
= &\ \frac{1}{2}g^2(t)\int_{\Omega}\nabla_{\bmx} e_t^2(\bmx)\cdot\left(\nabla_{\bmx}   u_\theta(\bmx, t) + \nabla_{\bmx} u^*_t(\bmx)\right) \rd\bmx + \ g^2(t)\int_{\Omega}e_t(\bmx)\Delta e_t(\bmx) \rd\bmx\\
& \quad + 2\int_{\Omega}e_t(\bmx)r_t(\bmx)\rd\bmx
- \int_{\Omega}\nabla_{\bmx} e_t^2(\bmx)\cdot\bmf(\bmx, t) \rd\bmx\\
= &\ - \frac{1}{2}g^2(t)\int_{\Omega}e_t^2(\bmx)\cdot\left(\Delta  u_\theta(\bmx, t) + \Delta u^*_t(\bmx)\right) \rd\bmx - g^2(t)\int_{\Omega}\left\|\nabla_{\bmx}e_t(\bmx)\right\|^2 \rd\bmx\\
& \quad + 2\int_{\Omega}e_t(\bmx)r_t(\bmx)\rd\bmx
+ \int_{\Omega}e_t^2(\bmx)\cdot\left[\nabla\cdot\bmf(\bmx, t)\right] \rd\bmx\\
\les &\ - \frac{m_1}{2}\left(B_2^*+\widehat{B}_2\right)\left\|e_t(\cdot)\right\|^2_{L^2(\Omega)} - m_1\left\|\nx e_t(\cdot)\right\|_{L^2(\Omega)}^2 + \eps\left\|r_t(\cdot)\right\|^2_{L^2(\Omega)}\\
&\ \quad + \frac{1}{\eps}\left\|e_t(\cdot)\right\|^2_{L^2(\Omega)} + \mathrm{m}_2\left\|e_t(\cdot)\right\|^2_{L^2(\Omega)}\\
= &\ C_1^{\textrm{U}}(\eps)\left\|e_t(\cdot)\right\|_{L^2(\Omega)}^2 + \eps\left\|r_t(\cdot)\right\|_{L^2(\Omega)}^2 - m_1\left\|\nx e_t(\cdot)\right\|^2_{L^2(\Omega)}\\
\les &\ C_1^{\textrm{U}}(\eps)\left\|e_t(\cdot)\right\|_{L^2(\Omega)}^2 + \eps\left\|r_t(\cdot)\right\|_{L^2(\Omega)}^2.
\end{aligned}
\end{equation}
Note that $e_0(\bmx)=0$ for any $\bmx\in\Omega$, then using the Gr\"{o}nwall inequality, we have for any $t\in[0, T]$,
\begin{equation}\label{eq: Unif ineq 2 integrate w.r.t t}
    \|e_t(\cdot)\|^2_{L^2(\Omega)}\les \eps \int_0^t e^{C_1^{\textrm{U}}(\eps)(t-s)} \left\|r_s(\cdot)\right\|^2_{L^2(\Omega)}\rd s
    :=\eps L_{\textrm{PINN}}^{\textrm{Unif}}(t;C_1^{\textrm{U}}(\eps)).
\end{equation}
Note that from \eqref{eq: Unif PINN analysis proof ev integrate},
\begin{equation}\label{eq: Unif PINN analysis proof second part ineq 1}
m_1\|\nx e_t(\cdot)\|^2_{L^2(\Omega)}\les \eps\|r_t(\cdot)\|^2_{L^2(\Omega)}+C_1^{\textrm{U}}(\eps)\|e_t(\cdot)\|^2_{L^2(\Omega)}-\pt \|e_t(\cdot)\|^2_{L^2(\Omega)}.
\end{equation}
We can bound $\pt \|e_t(\cdot)\|^2_{L^2(\Omega)}$ as follows
\begin{equation}\label{eq: Unif PINN analysis proof second part pt}
\begin{aligned}
& \left\vert\pt \|e_t(\cdot)\|^2_{L^2(\Omega)}\right\vert
=  \left\vert\pt\left(\int_{\Omega}e_t^2(\bmx)\rd \bmx\right)\right\vert
=  2\left\vert\int_{\Omega}e_t(\bmx)\pt e_t(\bmx)\rd \bmx\right\vert\\
\les &\ 2\left(\int_{\Omega}e_t^2(\bmx)\rd\bmx\right)^{1/2}\left(\int_{\Omega}\left\vert\pt e_t(\bmx)\right\vert^2\rd\bmx\right)^{1/2}
\les 2\sqrt{2}\left(\hB_0^2+B_0^{*2}\right)^{1/2}\|e_t(\cdot)\|_{L^2(\Omega)},
\end{aligned}
\end{equation}
which follows from applying $\vert\pt e_t(\bmx)\vert^2=\vert \pt  u_\theta(\bmx, t)-\pt u^*_t(\bmx)\vert^2\les 2\hB_0^2+2B_0^{*2}$. 
Then plugging (\ref{eq: Unif PINN analysis proof second part pt}) into (\ref{eq: Unif PINN analysis proof second part ineq 1}), we have
\begin{equation}
\label{eq: Unif PINN analysis proof second part ineq 2}
m_1\|\nx e_t(\cdot)\|^2_{L^2(\Omega)}\les \eps\|r_t(\cdot)\|^2_{L^2(\Omega)}+C_1^{\textrm{U}}(\eps)\|e_t(\cdot)\|^2_{L^2(\Omega)} + C_2  \|e_t(\cdot)\|_{L^2(\Omega)}.
\end{equation}
Plugging \eqref{eq: Unif PINN analysis proof second part ineq 1} into \eqref{eq: Unif PINN analysis proof second part ineq 2}, we complete the proof.

\subsection{Proof of Theorem \ref{thm: Unif KL bound}}
\label{appendix: proof of thm: Unif KL bound}
As $\bms_t(\bmx)=\nx u_\theta(\bmx, t)\cdot\1\{\bmx\in\Omega\}$, we have
\begin{equation}\label{eq: Unif score approximation bounded by PINNs}
\begin{aligned}
& \sum_{k=1}^Nh_k\mathbb{E}_{\bmx_{t_k}\sim\pi_{t_k}}\left\|\nx\log\pi_{t_k}(\bmx_{t_k}) - \bms_{t_k}(\bmx_{t_k})\right\|^2\\
= & \sum_{k=1}^Nh_k\int_{\Omega^c}\pi_{t_k}(\bmx)\|\nx\log\pi_{t_k}(\bmx)\|^2\rd\bmx + 
\sum_{k=1}^N h_k\int_{\Omega}\pi_{t_k}(\bmx)\|\nx e_{t_k}(\bmx)\|^2\rd\bmx\\
\les & \sum_{k=1}^N h_k\delta + \sum_{k=1}^Nh_k\max_{\bmx\in\Omega}\left\{\pi_{t_k}(\bmx)\right\}\cdot\|\nx e_{t_k}(\cdot)\|^2_{L^2(\Omega)}\\
\les & \ T\delta + \delta_1\cdot\textrm{Vol}(\Omega) + C_1^{\textrm{U}}(\eps)\delta_2\cdot\textrm{Vol}(\Omega) + C_2 \sqrt{\sum_{k=1}^Nh_k\max_{\bmx\in\Omega}\left\{\pi_{t_k}(\bmx)\right\}\cdot\delta_2\cdot \textrm{Vol}(\Omega)},
\end{aligned}
\end{equation}
where the last inequality follows from the result in Theorem \ref{thm: extended PINN analysis for uniform sampling} and
\begin{equation}\label{eq: Unif Cauchy ineq used for Sampler analysis proof}
\begin{aligned}
&\ \sum_{k=1}^Nh_k\max_{\bmx\in\Omega}\left\{\pi_{t_k}(\bmx)\right\}\cdot \sqrt{\eps L_{\textrm{PINN}}^{\textrm{Unif}}(t_k; C_1^{\textrm{U}}(\eps))}\\
\les &\ \left(\eps\sum_{k=1}^Nh_k\max_{\bmx\in\Omega}\left\{\pi_{t_k}(\bmx)\right\}\cdot L_{\textrm{PINN}}^{\textrm{Unif}}(t_k; C_1^{\textrm{U}}(\eps))\right)^{1/2}\left(\sum_{k=1}^Nh_k\max_{\bmx\in\Omega}\left\{\pi_{t_k}(\bmx)\right\}\right)^{1/2}\\
\les &\ \sqrt{\sum_{k=1}^Nh_k\max_{\bmx\in\Omega}\left\{\pi_{t_k}(\bmx)\right\}\cdot\delta_2\cdot \textrm{Vol}(\Omega)}.
\end{aligned}
\end{equation}
Combining \eqref{eq: Unif score approximation bounded by PINNs} and Proposition \ref{prop: analysis of generative modeling}, we complete our proof.

\section{Limitations}
\label{appendix: limitations}
As we use LMC for collocation generation in DPS, there is a risk of missing modes if short LMC runs do not adequately cover the high-density domain.
In such cases, running LMC for an annealed path of target distributions or adopting the adversarial training method in \cite{wang20222} for collocation points maybe helpful.
Also, solving high dimensional PDEs via PINN can be challenging, and we may use techniques such as stochastic dimension gradient descent or the Hutchinson trick to scale DPS to high dimensional problems \citep{hu2024tackling,hu2024hutchinson}.

\section{Additional Experimental Details and Results}
\label{appendix: experimental details and results}

\subsection{Baselines}
\label{appendix: details on baselines}
We benchmark DPS performance against a wide range of strong baseline methods. 
For MCMC methods, we consider the Langevin Monte Carlo (LMC). 
For LMC, we run 100,000 iterations with step sizes 0.02, 0.002, 0.0002. 
Then we choose the samples with the best performance.
As for sampling methods using reverse diffusion, we include RDMC \citep{huang2023reverse}, and SLIPS \citep{grenioux2024stochastic}.
We use the implementation of SLIPS and RDMC from \cite{grenioux2024stochastic} and choose Geom($1,1$) as the SL scheme for SLIPS.
For each algorithm, we search its hyper-parameters within a predetermined grid, similar to \citet{grenioux2024stochastic}. 
We also compare with VI-based PIS \citep{zhang2021path} and DIS \citep{berner2022optimal}.
We use the implementation of PIS and DIS from \cite{berner2022optimal}.

\subsection{Targets}
\label{appendix: details on targets}
\textbf{9-Gaussians} is a 2-dimensional Mixture of Gaussians where there are 9 modes designed to be well-separated from each other. 
The modes share the same variance of 0.3 and the means are located in the grid of $\{-5, 0, 5\}\times\{-5, 0, 5\}$. 
We set challenging mixing proportions between different modes as shown in Table \ref{table:relative weights in 9-Gaussians}.

\textbf{Rings} is the inverse polar reparameterization of a $2$-dimensional distribution $p_z$ which has itself a decomposition
into two univariate marginals $p_r$ and $p_{\theta}$: $p_r$ is a mixture of $4$ Gaussian distributions $\mathcal{N}(i, 0.2^2)$ with $i=2, 4, 6, 8$ describing the radial position and $p_{\theta}$ is a uniform distribution over $[0, 2\pi)$, which describes the angular position of the samples. 
We also set challenging mixing proportions between different modes of $p_r$ as shown in Table \ref{table:relative weights in rings}.

\begin{table}[t]
    \caption{Mixing proportions between 9 modes in 9-Gaussians.}
    \label{table:relative weights in 9-Gaussians}
    \centering
    \scalebox{0.9}{
    \begin{tabular}{c  c  c  c  c  c  c  c  c  c}\toprule
          Modes & 
          $(-5,-5)^{\prime}$ &
          $(-5,0)^{\prime}$ &
          $(-5,5)^{\prime}$ &
          $(0,-5)^{\prime}$ &
          $(0,0)^{\prime}$ &
          $(0,5)^{\prime}$ &
          $(5,-5)^{\prime}$ &
          $(5,0)^{\prime}$ &
          $(5,5)^{\prime}$ 
         \\
         Weight & 
          $0.2$ &
          $0.04$ &
          $0.2$ &
          $0.04$ &
          $0.04$ &
          $0.04$ &
          $0.2$ &
          $0.04$ &
          $0.2$ 
          \\
        \bottomrule
    \end{tabular}
    }
\end{table}

\begin{table}[t]
    \caption{Mixing proportions between 4 modes in rings.}
    \label{table:relative weights in rings}
    \centering
    \scalebox{1.0}{
    \begin{tabular}{c  c  c  c  c }\toprule
          Modes & 
          $r=2$ &
          $r=4$ &
          $r=6$ &
          $r=8$ 
         \\
         Weight & 
          $0.05$ &
          $0.45$ &
          $0.05$ &
          $0.45$ 
          \\
        \bottomrule
    \end{tabular}
    }
\end{table}

\textbf{Funnel} is a classical sampling benchmark problem from \citet{neal2003slice, hoffman2014no}.
This 10-dimensional density is defined by 
\[\mu(\bmx):=\mathcal{N}(x_0;0,9)\mathcal{N}(\bmx_{1:9};\bm{0},\exp(x_0)\bm{I}_9).\]

\textbf{Double-well} is a high-dimensional distribution which share the unnormalized density:
\[
\mu(\bmx):=\exp\left(\sum_{i=0}^{w-1}-x_i^4+6x_i^2+0.5x_i-\sum_{i=w}^{d-1}0.5x_i^2\right).
\] 
We choose $w=3$ and $d=30$, leading to a $30$-dimensional distribution contained $8$ modes with challenging mixing proportions between different modes.

\subsection{Diffusion-PINN Sampler}
\label{appendix: DPS}

\paragraph{Model.} The model architecture of $\mathrm{NN}_{\theta}(\bmx, t):\bR^d\times [0, T]\to\bR$ in $ u_\theta(\bmx, t)$ is 
\[
\mathrm{NN}_{\theta}(\bmx, t) = \mathrm{MLP}^{\textrm{dec}}\left(\mathrm{MLP}^{\text{embx}}(\bmx) + \mathrm{MLP}^{\text{embt}}(\mathrm{emb}(t))\right),
\]
where $\mathrm{MLP}^{\textrm{dec}}$ represents a decoder implemented as MLPs with layer widths $[128, 128, 128, 1]$. The component $\mathrm{MLP}^{\text{embx}}$ serves as a data embedding block and is implemented as MLPs with layer widths $[2, 128]$.
$\mathrm{MLP}^{\text{embt}}$ functions as a time embedding block, implemented as MLPs with layer widths $[256, 128, 128]$. The input to $\mathrm{MLP}^{\text{embt}}$ is derived from the sinusoidal positional embedding~\citep{vaswani2017} of $t$. 
All these three MLPs utilize the GELU activation function.

\paragraph{Training.}
In our implementation, we choose $\bmf(\bmx,t)=-\frac{\bmx}{2(1-t)}$ and $g(t)=\sqrt{\frac{1}{1-t}}$, leading to the following forward process
\begin{equation}\label{forward process in implementation}
\rmd\bmx_t = -\frac{\bmx_t}{2(1-t)}\rmd t + \sqrt{\frac{1}{1-t}}\rmd\bm{B}_t, \quad \bmx_0\sim\pi, \quad T_{\textrm{min}}\les t\les T_{\textrm{max}}.
\end{equation}
This admits the explicit conditional distribution $\pi_{t\mid 0}(\bmx_t\vert \bmx_0)=\mathcal{N}(\bmx_t;\sqrt{1-t}\cdot\bmx_0, t\cdot\bm{I}_d)$.
We choose $T_{\textrm{min}}=0.001$ and $T_{\textrm{max}}=0.999$ in practice. 
The corresponding log-density FPE becomes 
\begin{equation}\label{eq: log-density FPE implemented}
    \pt u_t(\bmx)= \frac{1}{2(1-t)}\left[
    \Lap u_t(\bmx) + \left\|\nx u_t(\bmx)\right\|^2 + \bmx\cdot\nx u_t(\bmx) + d
    \right]:=\frac{1}{2(1-t)}\mathcal{L}_{\textrm{L-FPE}}^{\textrm{prac}}u_t(\bmx).
\end{equation}
We choose $\beta(t)=2(1-t)$ to make training more stable, leading to the following training objective
\begin{equation}\label{eq: training objective implemented}
\begin{aligned}
    L_{\textrm{train}}^{\textrm{prac}}(u_\theta):= & \bE_{t\sim\mathcal{U}[T_{\textrm{min}},T_{\textrm{max}}]}\bE_{\bmx_t\sim \nu_t}\left[
    \left\|2(1-t)\cdot \pt   u_\theta(\bmx_t, t) - \mathcal{L}_{\textrm{L-FPE}}^{\textrm{prac}}  u_\theta(\bmx_t, t)\right\|^2
    \right] \\
    &\quad + \lambda \cdot \bE_{\bmz\sim\mathcal{N}(\bm{0}, \bm{I}_d)}\left[
    \left\|\nabla_{\bmz} u_\theta(\bmz, T_{\textrm{max}}) + \bmz\right\|^2
    \right],
\end{aligned}
\end{equation}
where $\lambda$ is the regularization coefficient. 
It is enough for us to use PINN residual loss without regularization except for Funnel where the regularization is quite useful and we use $\lambda=1$.
To generate collocation points for PINN, we run a short chain of LMC with a large step size.
The hyper-parameters used in LMC for different targets are reported in Table \ref{table: hyper-parameters for LMC for collocation points}.
We generate fresh collocation points per iteration except for Funnel where we resample new collocation points per $10,000$ iterations.

\begin{table}[t]
    \caption{Hyper-parameters used in LMC for generating collocation points.}
    \label{table: hyper-parameters for LMC for collocation points}
    \centering
    \scalebox{1.0}{
    \begin{tabular}{c  c  c  c  c }\toprule
           &
           9-Gaussians &
           Rings &
           Funnel &
           Double-well
       \\ \midrule
          step size & 
          $1.0$ &
          $0.15$ &
          $0.02$ &
          $0.02$ 
         \\
         iterations & 
          $60$ &
          $100$ &
          $10,000$ &
          $100$ 
          \\
          batch size &
          $128$ &
          $200$ &
          $200$ &
          $700$ 
          \\
          refresh samples per iteration &
          \Checkmark &
          \Checkmark &
          \XSolidBrush &
          \Checkmark
          \\
        \bottomrule
    \end{tabular}
    }
\end{table}

We train all models with Adam optimizer \citep{kingma2014adam}. 
The hyper-parameters used in training are summarized in Table \ref{table: hyper-parameters for training}. 
We use a linear decay schedule for the learning rate in all experiments.

\begin{table}[t]
    \caption{Hyper-parameters for training PINN.}
    \label{table: hyper-parameters for training}
    \centering
    \scalebox{1.0}{
    \begin{tabular}{c  c  c  c  c }\toprule
           &
           9-Gaussians &
           Rings &
           Funnel &
           Double-well
       \\ \midrule
          learning rate & 
          $0.0005$ &
          $0.0005$ &
          $0.0001$ &
          $0.0005$ 
         \\
         max norm of gradient clipping &
         $1.0$ &
         $1.0$ &
         $1000.0$ &
         $1.0$
         \\
         regularization coefficient $\lambda$ &
         $0$ &
         $0$ &
         $1$ &
         $0$
         \\
          total training iterations &
          $400$k &
          $1,000$k &
          $800$k &
          $1,500$k
          \\
        \bottomrule
    \end{tabular}
    }
\end{table}

\paragraph{Sampling.}The corresponding reverse process is given by 
\begin{equation}\label{eq: reverse process in practice}
    \rmd\bmx_t = \left(\frac{\bmx_t}{2t}+\frac{\nx\log \pi_{1-t}(\bmx_t)}{t}\right)\rmd t + \sqrt{\frac{1}{t}}\rmd\bm{B}_t,\quad \bmx_0\sim \pi_{T_{\textrm{max}}},\quad T_{\textrm{min}}\les t\les T_{\textrm{max}}.
\end{equation}
To simulate \eqref{eq: reverse process in practice}, we approximate $\pi_{T_{\textrm{max}}}\approx \mathcal{N}(\bm{0},\bm{I}_d)$ and use the exponential integrator scheme with the score approximation $\bms_t(\bmx)\approx\nx\log \pi_t(\bmx)$.
In practice, we use $\bms_t(\bmx):=\nx u_\theta(\bmx, t)\cdot \bbmone\{\bmx\in\Omega\}$ where $ u_\theta(\bmx, t)$ is the approximated log-density provided by the PINN approach (trained by Algorithm \ref{algorithm: PINN}) and $\Omega$ is a chosen bounded region that covers the high density domain of $\pi_t$ for any $t\in [T_{\textrm{min}},T_{\textrm{max}}]$.
We use $\Omega:=\{\bmx\in\bR^d:\|\bmx\|\les R\}$ in all experiments, the choice of $R$ is reported in Table \ref{table: bounded region choice}.
Our sampling process is summarized in Algorithm \ref{algorithm: sampling}.
We provide more sampling performances of different methods for different targets in Figure \ref{fig:samples visualization full version} and sample trajectories from DPS in Figure \ref{fig:samples visualization trajectories}.

\begin{table}[H]
    \caption{The diameter of the truncated region for different targets.}
    \label{table: bounded region choice}
    \centering
    \scalebox{1.0}{
    \begin{tabular}{c  c  c  c  c }\toprule
           &
           9-Gaussians &
           Rings &
           Funnel &
           Double-well
       \\ \midrule
          $R$ & 
          $20$ &
          $20$ &
          $2000$ &
          $30$ 
         \\
        \bottomrule
    \end{tabular}
    }
\end{table}

\begin{algorithm}[t!]
    \caption{: Sampling from reverse process}
    \label{algorithm: sampling}
    \begin{algorithmic}[1]
        \REQUIRE Starting time $T_{\textrm{min}}$, Terminal time $T_{\textrm{max}}$, Sample size $M$, Discretization steps $N$, Bounded domain $\Omega$, Approximated log-density $ u_\theta(\bmx, t)$ provided by PINN.
        \STATE Compute the step size $h:=(T_{\textrm{max}} - T_{\textrm{min}}) / N$.
        \STATE Obtain the approximated score function $\bms_t(\bmx):=\nx u_\theta(\bmx, t)\cdot\bbmone\{\bmx\in\Omega\}$.
        \STATE Sample i.i.d. $\bmx_i^0\sim\mathcal{N}(\bm{0},\bm{I}_d),\ \forall 1\les i\les M$
        \FOR{$n=1,\cdots, N$}
        \STATE Sample i.i.d. $\bmz_i\sim \mathcal{N}(\bm{0},\bm{I}_d),\ \forall 1\les i\les M$.
        \STATE Compute $t_{n-1}:=T_{\textrm{min}}+(n-1)h$.
        \STATE Update by simulating the reverse process: $\forall 1\les i\les M$
        \[
        \bmx_i^{n}\leftarrow 
        \sqrt{1+\frac{h}{t_{n-1}}}\bmx_i^{n-1} + 2\left(\sqrt{1+\frac{h}{t_{n-1}}}-1\right)\bms_{1-t_{n-1}}(\bmx_i^{n-1}) + 
        \sqrt{\frac{h}{t_{n-1}}}\bmz_i,
        \]
        \ENDFOR
        \RETURN Approximated samples $\bmx_1^N,\cdots, \bmx_M^N$.
    \end{algorithmic}
\end{algorithm}

\begin{figure}[H]
    \centering
    \includegraphics[width=\linewidth]{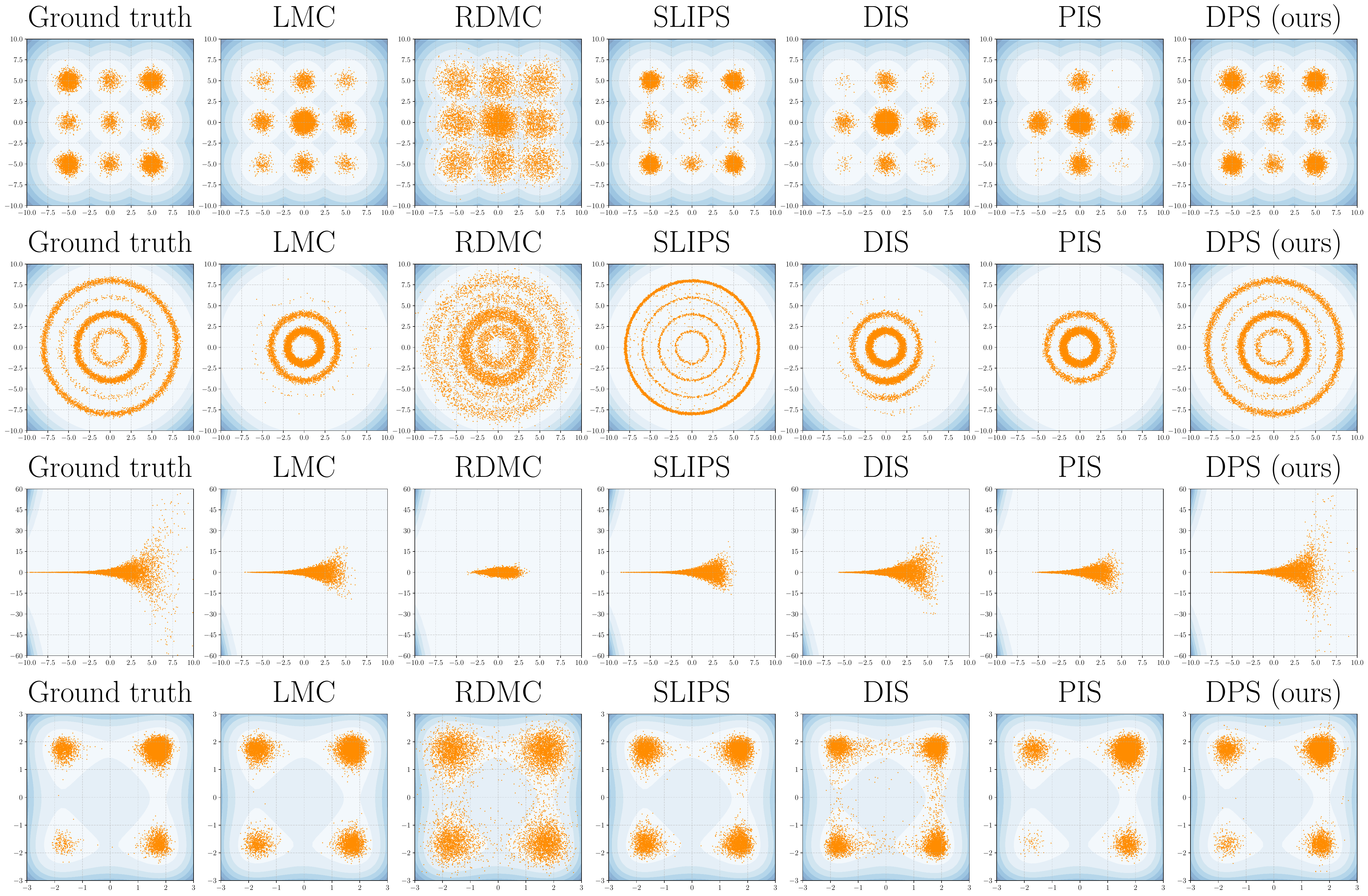}
    \caption{Sampling performance of different methods for $9$-Gaussians ($d=2$), Rings ($d=2$), Funnel ($d=10$) and Double-well ($d=30$).}
    \vspace{-0.5cm}
    \label{fig:samples visualization full version}
\end{figure}

\begin{figure}[H]
    \centering
    \includegraphics[width=\linewidth]{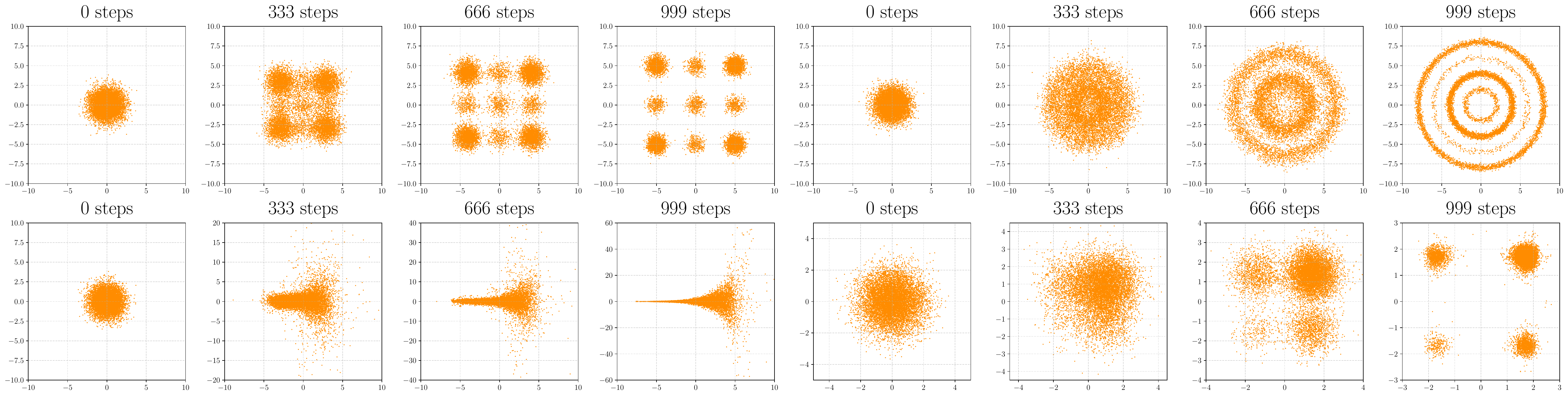}
    \vspace{-0.5cm}
    \caption{Sample trajectories from DPS for $9$-Gaussians ($d=2$), Rings ($d=2$), Funnel ($d=10$) and Double-well ($d=30$).}
    \label{fig:samples visualization trajectories}
\end{figure}

%%===========================================================================================%%
%% If you are submitting to one of the Nature Portfolio journals, using the eJP submission   %%
%% system, please include the references within the manuscript file itself. You may do this  %%
%% by copying the reference list from your .bbl file, paste it into the main manuscript .tex %%
%% file, and delete the associated \verb+\bibliography+ commands.                            %%
%%===========================================================================================%%

%% if required, the content of .bbl file can be included here once bbl is generated
%%\input sn-article.bbl

\end{document}